\newcommand{\draftonly}[1]{#1}
\renewcommand{\draftonly}[1]{}
\definecolor{ggreen}{HTML}{3cba54}
\definecolor{gblue}{HTML}{4885ed}
\newcommand\tf[1]{\textbf{#1}}
\newcommand\ttt[1]{\texttt{#1}}
\newcommand{\mask}{\texttt{[MASK]}}
\newcommand{\firstsent}{\ttt{<}$S_1$\ttt{>}}
\newcommand{\secondsent}{\ttt{<}$S_2$\ttt{>}}
\newcommand{\sent}{\ttt{<}$S_1$\ttt{>}}
\theoremstyle{plain}
\newtheorem{theorem}{Theorem}[section]
\newtheorem{lemma}[theorem]{Lemma}
\theoremstyle{definition}
\newtheorem{definition}[theorem]{Definition}
\newtheorem{assumption}[theorem]{Assumption}
\theoremstyle{remark}
\newtheorem{remark}[theorem]{Remark}
\def\eqref#1{equation~\ref{#1}}
\def\1{\bm{1}}
\def\eps{{\epsilon}}
\DeclareMathAlphabet{\mathsfit}{\encodingdefault}{\sfdefault}{m}{sl}
\SetMathAlphabet{\mathsfit}{bold}{\encodingdefault}{\sfdefault}{bx}{n}
\def\gA{{\mathcal{A}}}
\def\gC{{\mathcal{C}}}
\def\gF{{\mathcal{F}}}
\def\gH{{\mathcal{H}}}
\def\gK{{\mathcal{K}}}
\def\gM{{\mathcal{M}}}
\def\gN{{\mathcal{N}}}
\def\gS{{\mathcal{S}}}
\def\gU{{\mathcal{U}}}
\def\gV{{\mathcal{V}}}
\def\gW{{\mathcal{W}}}
\def\gX{{\mathcal{X}}}
\def\sI{{\mathbb{I}}}
\def\sN{{\mathbb{N}}}
\def\sR{{\mathbb{R}}}
\newcommand{\R}{\mathbb{R}}
\DeclareMathOperator*{\argmax}{arg\,max}
\DeclareMathOperator*{\argmin}{arg\,min}
\DeclareMathOperator{\sign}{sign}
\newcommand{\mup}{\mu\text{P}}
\newcommand{\epssign}{\epsilon\text{-}\mathrm{sign}}
\icmltitlerunning{A Kernel-Based View of Language Model Fine-Tuning}
\newcommand{\klora}{\gK_{\text{LoRA}}}
\newcommand{\ksgd}{\gK^{\text{(SGD)}}}
\newcommand{\ksigngd}{\gK^{\text{(SignGD)}}}
\newcommand{\kasigngd}{\gK^{\text{(A-SignGD)}}}
\newcommand{\ksgdlora}{\ksgd_{\text{LoRA}}}
\begin{document}

\twocolumn[
\icmltitle{A Kernel-Based View of Language Model Fine-Tuning}

\begin{icmlauthorlist}
\icmlauthor{Sadhika Malladi}{xxx}
\icmlauthor{Alexander Wettig}{xxx}
\icmlauthor{Dingli Yu}{xxx}
\icmlauthor{Danqi Chen}{xxx}
\icmlauthor{Sanjeev Arora}{xxx}
\end{icmlauthorlist}

\icmlaffiliation{xxx}{Department of Computer Science, Princeton University, Princeton, NJ, USA}

\icmlcorrespondingauthor{Sadhika Malladi}{smalladi@cs.princeton.edu}

\icmlkeywords{Machine Learning, ICML}

\vskip 0.3in
]

\printAffiliationsAndNotice{}  %

\begin{abstract}
It has become standard to solve NLP tasks by  fine-tuning pre-trained language models (LMs), especially in low-data settings. There is minimal theoretical understanding of empirical success, e.g., why fine-tuning a model with $10^8$ or more parameters on a couple dozen training points does not result in overfitting.
 We investigate  whether 
 the Neural Tangent Kernel (NTK)---which originated as a model to study the gradient descent dynamics of infinitely wide networks with suitable random initialization---{\em describes} fine-tuning of pre-trained LMs.  This study was inspired by the decent performance of NTK  for computer vision tasks \citep{wei2022more}.
 We extend the NTK formalism to Adam and use Tensor Programs~\citep{yang2020tensor3} to characterize conditions under which the NTK lens may describe fine-tuning updates to pre-trained language models.
 Extensive experiments on 14 NLP tasks validate our theory and show that formulating the downstream task as a masked word prediction problem through prompting often induces kernel-based dynamics during fine-tuning.
Finally, we use this kernel view to propose an explanation for the success of parameter-efficient subspace-based fine-tuning methods.\footnote{Our code and pre-computed kernels are publicly available at \href{https://github.com/princeton-nlp/LM-Kernel-FT}{https://github.com/princeton-nlp/LM-Kernel-FT}.}
\end{abstract}

\section{Introduction}
It is now customary to solve most supervised natural language processing (NLP)  tasks such as topic classification and textual entailment  by fine-tuning a pre-trained language model (e.g., \cite{devlin2018bert,liu2019roberta,clark2020electra,raffel2020exploring,joshi2019spanbert}). 
We lack theoretical understanding of this fine-tuning paradigm. 
Why do we not see over-fitting when fine-tuning a very large language model  using a couple dozen instances of the supervised task? 
Why is fine-tuning so sensitive to details such as whether or not we include a prompt (e.g., adding ``It was [great/terrible]'' for sentiment analysis \citep{schick2020exploiting,gao2020making}? Why does restricting optimization to a low-rank subspace of model parameters \citep{hu2021lora,li2018measuring,aghajanyan2021intrinsic} still 
result in performance comparable to
full fine-tuning?
Answering such questions requires understanding how the sequence of parameter updates changes in various scenarios, e.g.,  the addition of a prompt, or the introduction of randomly initialized parameters.
The current theory of deep learning, at first sight, seems too primitive to address such questions, especially since fine-tuning has to start from a parameter initialization inherited from pre-training. 

Recently, \citet{wei2022more} suggested replacing fine-tuning with Neural Tangent Kernel (NTK), an idea invented for the study of infinite-width deep neural networks \citep{jacot2018neural,du2018gradientdescent} and previously applied to solving vision tasks with infinitely wide ConvNets \citep{arora2019exact}.
They note that the NTK can be defined for any neural model $f$ and any initialization $\theta_0$ by representing an input $\xi$ by the gradient it induces $\nabla f(\xi;\theta_0)$, which yields a kernel matrix: 
\begin{equation}
    \gK(\xi, \xi') = \langle \nabla f(\xi;\theta_0), \nabla f(\xi';\theta_0)\rangle.
\end{equation} 
This kernel is well-defined for any parameter vector $\theta_0$.
However, for an infinite-width network initialized with $\theta_0$ sampled from a suitably-scaled Gaussians, it can be shown that the kernel matrix is unchanged during gradient descent, which turns the classification task  into a form of kernel regression with respect to this kernel \citep{jacot2018neural}.  
In the fine-tuning setting, however, the initialization $\theta_0$ is inherited from the pre-trained network, and not sampled from the Gaussian distribution.
Nevertheless, \cite{wei2022more} found that kernel regression using this ``empirical NTK'' (eNTK) defined with the inherited $\theta_0$ performs  well, achieving classification accuracy within $6\%$ absolute of actual fine-tuning on several image recognition tasks.
In other words, their work hints that mathematical understanding of the fine-tuning phenomenon (e.g., its sample efficiency) could go via the theory of kernel classifiers.

The current paper furthers an empirical and theoretical understanding of the pre-training and fine-tuning (FT) paradigm for NLP tasks. 
Our contributions are:
 \begin{enumerate}[leftmargin=0.05\textwidth]
    \item \textbf{We formally extend the standard NTK theory developed for gradient descent to characterize kernel-based dynamics when training with Adam.} We propose and rigorously prove the correctness of a new kernel formula relying on the sign of the gradient to describe early-stage training (e.g., fine-tuning) with Adam (\Cref{sec:adam_kernel}).
    \item \textbf{We formally extend infinite-width analysis to account for a pre-trained initialization and characterize conditions under which fine-tuning can exhibit kernel behavior.} Using insights into the importance of prompting, we formally prove the existence of a rigorous mechanism through which prompt-based FT of complex architectures (e.g., Transformers) can exhibit kernel behavior (\Cref{sec:theory}). Analysis proceeds in the context of networks whose widths go to infinity (i.e., through the Tensor Programs framework), but unlike standard NTK theory, it allows a non-random initialization (i.e., one that results from pre-training). %
    \item \textbf{We perform an extensive empirical analysis on 14 diverse NLP tasks to reveal when and to what extent fine-tuning exhibits kernel behavior.} We find that using a meaningful prompt is crucial for the eNTK to achieve good performance, suggesting that prompting induces a well-characterized optimization benefit for fine-tuning. Further experiments reveal that the trajectory of prompt-based FT can often be \emph{described} by kernel-based dynamics when the eNTK succeeds  (\Cref{sec:experiments}). 
    \item \textbf{We straightforwardly apply the kernel view of FT dynamics to formally analyze the success of fine-tuning methods that update in a low-rank subspace of model parameters (e.g., LoRA, ~\cite{hu2021lora}).} These results in \Cref{sec:parameter_efficient} highlight how a kernel-based understanding of FT can aid in the practical design and theoretical analysis of efficient variants.
\end{enumerate}

\section{Related Work} 
\paragraph{Kernel view of training.} 
The infinite-width limit is a well-studied theoretical model for deep network optimization. 
\citet{jacot2018neural} introduced NTK to capture training a deep and infinitely wide neural network from a random initialization.
Subsequent experiments showed that the kernels underperformed for standard tasks \citep{arora2019exact} but performed well on small datasets (i.e., hundreds of examples) \citep{arora2019harnessing}.
Many works \citep{allenzhu2018learning,allenzhu2018convergence,arora2019finegrained,du2018gradient,du2018gradientdescent,li2018learning,zou2018stochastic,cao2019generalization} have since applied this lens to understand the optimization and generalization behavior of deep networks.
However, such analyses 
do not directly apply to the pre-training and fine-tuning framework because (1) the network trained during FT is inherited and non-random; and (2) LMs are often trained with Adam, and the NTK formula only describes training an infinitely wide network with SGD.
In this work, we handle a non-random (i.e., pre-trained) initialization by assuming that the pre-training task is sufficiently related to the downstream task (\Cref{def:solvable_task}), and we derive new kernels to model early-stage training with Adam (\Cref{sec:adam_kernel}). %

\vspace{-0.1in}
\paragraph{Theory of self-supervised learning and transfer learning.}
Several existing theoretical works on transfer learning study the performance of linear probing on a representation to provide guarantees on various tasks related to the original training data \citep{du2020fewshot,tripuraneni2020theory,wu2020understanding}.
\citet{chua2021how} show that regularized fine-tuning in a meta-learning setting exhibits kernel behavior if the pre-training and downstream tasks are closely related.
Along similar lines, \citet{mu2020gradients,maddox2021fast,achille2021lqf} suggest through experiments and theory that gradient-based features, corresponding to a linearization of fine-tuning, can perform well on visual downstream tasks.
We characterize when kernel dynamics describe fine-tuning a pre-trained masked language model on downstream language understanding tasks. 

\citet{saunshi2020mathematical} study autoregressive language models to rigorously characterize why prompting can improve zero-shot task performance, but their analysis precludes an investigation of FT.
We focus on the masked language model pretraining objective, but it is worth noting that there are many works  \citep{arora2019theoretical,tosh2020contrastive,tosh2020contrastiveestimation,lee2020predicting,tsai2020selfsupervised} studying transfer when pre-training with a contrastive objective.
However, experiments on language modeling \citep{abnar2021exploring} and contrastive learning \citep{saunshi2022understanding} recently demonstrated that properties of transfer between self-supervised pre-training and supervised FT cannot be fully captured by model-agnostic analyses that directly relate the pre-training and downstream task errors.
Kernel theory provides a principled optimization- and architecture-aware framework to analyze FT.

\vspace{-0.1in}
\paragraph{Optimization of Transformers.} 
\looseness-1 Several works \citep{zhang2019why,liu2020understanding,li2022robust} have documented issues with optimizing Transformer-based architectures with SGD instead  of Adam. 
To study the unique properties of optimizing transformers with Adam, we derive a new kernel formula (\Cref{thm:asigngd_kernel}) to capture early-stage training with Adam.
\Cref{tab:main_prompted} compares the performance of this kernel to FT with Adam and SGD.

\vspace{-0.1in}
\paragraph{Variants of fine-tuning methods.} A standard way of fine-tuning pre-trained LMs as introduced in \cite{radford2018improving,devlin2018bert} is to add a linear classifier on top of a pre-trained encoder and update all the parameters together. Subsequent work~\citep{schick2020exploiting,gao2020making} formulated downstream tasks as a language modeling problem (i.e., prompt-based FT) and demonstrated empirical success in low-data scenarios (see \citet{liu2021pretrain} for a comprehensive survey). Another line of research studies parameter-efficient fine-tuning methods in which only a subset of model parameters are updated~\citep{lester2021power,zaken2021bitfit,li2021prefix} or the parameters updates are restricted to a low-dimensional subspace~\citep{hu2021lora,aghajanyan2021intrinsic}.
We show that good eNTK performance arises only when studying prompt-based FT in \Cref{sec:experiments} (\Cref{fig:prompt_vs_noprompt}) and we later show in \Cref{sec:parameter_efficient} that subspace-based FT methods such as LoRA \citep{hu2021lora} have a simple interpretation through the kernel.

\section{Preliminaries}\label{sec:prelims}

\subsection{Pre-Training and Fine-Tuning Paradigm}\label{sec:pt_ft_prelims}
We focus our attention on masked language models (MLMs), such as BERT~\citep{devlin2018bert} and RoBERTa \citep{liu2019roberta}, which are trained to minimize the cross-entropy loss on independently predicting masked tokens (i.e., a $|\gV|$-way classification task, where $\gV$ is the vocabulary).
Given a text input $s$ of length $T$ from the pre-training distribution $\gS_{\text{PT}}$, replace a small percentage (e.g., 15\%) of tokens with \verb+[MASK]+ tokens.
This masked input is then fed into the representation function $h:\gS_{\text{PT}}\to T\times \sR^n$ (e.g., a Transformer encoder) to produce a low-dimensional contextual embedding for each position in the input.
The contextual embeddings are independently multiplied by a classifier head (i.e., word embeddings) $\Phi\in\sR^{n\times |\gV|}$ to produce logits that will be used to compute the probability of a token filling each masked position.

Using a pre-trained model to solve downstream tasks effectively has been a highly active area of research. 
We focus on fine-tuning (FT) methods, which adapt the pre-trained model to a new input distribution $\gS_{\text{FT}}$ using additional training on the $C$-way downstream classification task.

\begin{enumerate}[leftmargin=0.05\textwidth]
	\item \textbf{Standard FT} \citep{devlin2018bert,liu2019roberta}: To solve a $C$-way downstream classification task, initialize and learn\footnote{In our experiments, Standard FT corresponds to initializing $\Gamma$ at the linear probing solution (i.e., training $\Gamma$ on the downstream task while freezing all other layers) and then performing FT. We do this because when FT exhibits kernel behavior (\Cref{def:kernel_regime}), it finds solutions close to initialization, and we hypothesize that the $\Gamma$ learned during FT is closer to the linear probing solution than a random initialization.} a new classifier head $\Gamma:\sR^n\to\sR^C$ on top of the contextual  \verb+[CLS]+ embedding, denoted $h_{\mathtt{[CLS]}}$. In this case, the model output $f:\gS_{\text{FT}}\to\sR^C$ for the eNTK construction is $f(s) = \Gamma(h_{\mathtt{[CLS]}}(s))$. 
	\item \textbf{Prompt-based FT} \citep{schick2020exploiting,gao2020making}: Add a natural language prompt (e.g. ``This is \verb+[MASK]+.'') in addition to the downstream task input, and use the pre-trained MLM to fill in the masked token. Compute the logits over task-relevant words (e.g., ``great'' and ``terrible'') using the corresponding columns of $\Phi$, denoted $\tilde{\Phi}\in\sR^{n\times C}$. 
	These logits will serve as surrogates to solve the downstream task.
	In this case, the model output $f:\gS_{\text{FT}}\to\sR^C$ for the eNTK construction is $f(s) = \tilde\Phi^\top h_{\mathtt{[MASK]}}(s)$.
\end{enumerate}

\subsection{Kernel Behavior}
We consider a neural network $f(\xi;\theta)$ that takes input $\xi$ and computes a scalar output\footnote{Note that for $C$-way classification, $f$ outputs a vector in $\R^C$. 
We say $f$ exhibits kernel behavior if the Linearization and Fixed Features properties hold for every component of $f$. The subsequent definition of a kernel analog also generalizes to a vector output, where $\nu_t$ is a vector in $\R^C$ and $\gK^{(\gA)}(\xi,\xi_t)$ is a matrix in $\R^{C\times C}$.} using $\theta$ as the parameters.
Gradient-based updates to the model parameters involve computing a loss function $\ell$ and $\frac{\partial \ell}{\partial\theta}$, which can be decomposed by the chain rule as $\frac{\partial\ell}{\partial f} \frac{\partial f}{\partial\theta}$. 
The first term is defined as the output derivative (\Cref{def:output_derivative}), and the second term is used to define kernel behavior (\Cref{def:kernel_regime}).

\begin{definition}[Output Derivative]\label{def:output_derivative}
    	The output derivative $\chi(\xi, y,\theta)$ for a network $f$ with parameters $\theta$, loss function $\ell$, and input $\xi$ with label $y$ is defined as $\chi(\xi, y, \theta) = \frac{\partial \ell(f(\xi;\theta), y)}{\partial f}$. We also define the output derivative applied at time $t$ as $\chi_t = \chi(\xi_t, y_t,  \theta_{t-1})$, where $\xi_t$ is the input at time $t$ with label $y_t$. For ease of notation, we often  absorb $y$ into $\xi$ and write $\chi(\xi, \theta)$ and $\chi(\xi, f)$ interchangeably. 
\end{definition}

Below, we adapt the definition of kernel-based learning (i.e., {\em lazy regime} in \citet{woodworth2019kernel}) to an arbitrary initialization. 
\begin{definition}[Kernel Behavior]
	Let $\theta_t$ be the parameters after $t$ steps of training by a gradient-based optimization algorithm, and let $\xi$ be an arbitrary fixed input. We say this training process of the network demonstrates \textit{kernel behavior} if the following properties are satisfied.
    \setlist{nolistsep}
	\begin{enumerate}[leftmargin=0.05\textwidth]
   		\item \textit{Linearization}: The change of the network can be well-approximated by its first order Taylor expansion, i.e., 
\[f(\xi; \theta_{t})-f(\xi; \theta_{t-1})\approx \langle \nabla f(\xi; \theta_{t-1}), \theta_{t} - \theta_{t-1}\rangle;\] 
		\item \textit{Fixed Features}: The gradient at step $t$ is approximately the same as before training, i.e.,
$\nabla f(\xi; \theta_t) \approx \nabla f(\xi; \theta_0)$.
	\end{enumerate}
	 $\nabla f$ denotes the gradient of $f$ w.r.t. $\theta$. ``Closeness to kernel behavior'' is quantified using the difference in the quantities on the two sides of the $\approx$ symbol. We formalize the approximations in \Cref{def:theory_kernel_behavior}.
	\label{def:kernel_regime}
\end{definition}
Past work has shown that if gradient-based training exhibits kernel behavior, then the function change can be expressed in terms of a fixed kernel (i.e., the kernel analog).
\begin{definition}[Kernel Analog]\label{def:kernel_analog}
    Suppose optimization of the parameters $\theta$ of a model $f$ using the gradient-based update algorithm $\gA$ exhibits kernel behavior (\Cref{def:kernel_regime}).
    Then, we say that a kernel $\gK^{(\gA)}$ is the \textit{kernel analog} of the optimization algorithm $\gA$ if for every $t>0$, there exists  $\nu_t$ such that for any input $\xi$,
    \begin{equation}\label{eq:kernel_gradient_descent}
        f(\xi;\theta_{t}) - f(\xi;\theta_{t-1}) \approx -\nu_t\gK^{(\gA)}(\xi,\xi_t)
    \end{equation} 
    where $\xi_t$ is the training input\footnote{For simplicity, we assume the batch size is 1.} of step $t$, $\theta_t$ is the parameter after step $t$.
\end{definition}
We illustrate the connection between the kernel analog and kernel behavior when using SGD.
If SGD exhibits kernel behavior, then, for a fixed input $\xi$, we can write
\begin{align*}
f(\xi; \theta_t)-f(\xi; \theta_{t-1})&\approx \langle \nabla f(\xi; \theta_{t-1}), \theta_t - \theta_{t-1}\rangle \\
&=\langle \nabla f(\xi; \theta_{t-1}), -\eta\chi_t\nabla f(\xi_t;\theta_{t-1})\rangle \\
& \approx-\eta\chi_t\ksgd(\xi,\xi_t)
\end{align*}
where the approximations follow from the Linearization and Fixed Features property, respectively, $\eta$ is the learning rate, $\chi_t$ is the output derivative (\Cref{def:output_derivative}), and $\ksgd$ is the kernel analog of SGD with $\nu_t=\eta\chi_t$.
Notably, $\ksgd$ is the well-known neural tangent kernel (NTK) formula derived in \cite{jacot2018neural}, which represents an input $\xi$ as the resulting gradient $\nabla f(\xi;\theta_0)$.
\begin{definition}[Neural Tangent Kernel $\ksgd$] 
	$\ksgd(\xi, \xi') = \langle \nabla f(\xi;\theta_0), \nabla f(\xi';\theta_0)\rangle$
\end{definition}

Given a kernel $\gK$, one can solve a classification task by learning $\alpha_i$ to minimize the empirical risk of $\sum_i\alpha_i \gK(\cdot,\xi_i)$, where $\{\xi_i\}$ is the training data (\Cref{sec:app_experiments}).
If training exhibits kernel behavior and $\gK$ is the kernel analog for the optimizer, then solving the kernel regression problem is equivalent to training the network \citep{jacot2018neural}.

In \Cref{sec:adam_kernel}, we derive the kernel analog for SignGD (i.e., an early-stage approximation of Adam), and in \Cref{sec:experiments}, we compare its eNTK performance against Adam FT.
The eNTK computation relies on two design choices for the setting: (1) what the model output $f(\xi;\theta)$ is, and (2) which optimizer $\gA$ is used.
We choose $f$ based on the FT setting (\Cref{sec:pt_ft_prelims}) and $\gA$ as SGD or Adam.

\section{Kernel Derivation for Adam}\label{sec:adam_kernel}

Computing the eNTK requires using the kernel analog (\Cref{def:kernel_analog}) of the chosen optimization algorithm $\gA$.
However, it is difficult to construct a long-term kernel analog for Adam, because the adaptivity causes each update to depend on the entire gradient history.
Previous work has shown that in the early stages of training, full-batch~\citep{ma2020qualitative} and mini-batch~\citep{malladi2022sdes} Adam with a small learning rate compute
the moving averages for the moment estimates in a small neighborhood, so the Adam update reduces to coordinate-wise normalization on the gradient. 
This optimization algorithm is called SignGD.
\begin{definition}[SignGD]
	\label{def:signgd} SignGD is a gradient-based optimization algorithm that updates parameters as
	$\theta_t = \theta_{t-1} - \eta \sign(\nabla \ell_t(\xi_t;\theta_{t-1}))$, 
	where $\sign$ is applied element-wise.
\end{definition}
In~\Cref{tab:signgd_ft}, we provide empirical evidence that fine-tuning with SignGD yields comparable performance to Adam.\footnote{Sign-based optimizers have also shown success in vision tasks~\citep{chen2022evolved}.}
We define the sign-based kernel below and prove it to be the correct kernel analog for SignGD.
\begin{definition}[Asymmetric SignGD Kernel]\label{def:asigngd_kernel}
	$\kasigngd(\xi, \xi') = \langle \nabla f(\xi;\theta_0), \sign(\nabla f(\xi';\theta_0)\rangle$.
\end{definition}
\begin{theorem}[Informal version of \Cref{thm:theory_signgd_kernel}] \label{thm:asigngd_kernel}
If a network is trained with SignGD and exhibits kernel behavior (\Cref{def:kernel_regime}), then the training dynamics follow
\[f(\xi; \theta_{t})-f(\xi; \theta_{t-1})\approx-\eta\sign(\chi_t)\kasigngd(\xi,\xi_t),\]
where $\chi_t$ is the output derivative (\Cref{def:output_derivative}).
\end{theorem}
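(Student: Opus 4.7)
The plan is to mimic the SGD derivation sketched in the preliminaries, but with the SignGD update rule in place of the standard gradient step. Let me fix a scalar-output $f$ and an arbitrary input $\xi$. First, by the Linearization property of \Cref{def:kernel_regime} applied at step $t$,
\[
f(\xi;\theta_t)-f(\xi;\theta_{t-1}) \;\approx\; \langle \nabla f(\xi;\theta_{t-1}),\, \theta_t-\theta_{t-1}\rangle.
\]
The SignGD update (\Cref{def:signgd}) gives $\theta_t-\theta_{t-1}=-\eta\,\sgn(\nabla\ell_t(\xi_t;\theta_{t-1}))$, and by the chain rule $\nabla\ell_t(\xi_t;\theta_{t-1})=\chi_t\,\nabla f(\xi_t;\theta_{t-1})$ where $\chi_t$ is a scalar. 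The key algebraic observation is that the sign function is multiplicative on scalars, so
\[
\sgn\bigl(\chi_t\,\nabla f(\xi_t;\theta_{t-1})\bigr)=\sgn(\chi_t)\,\sgn\bigl(\nabla f(\xi_t;\theta_{t-1})\bigr),
\]
applied coordinate-wise (this is the crucial feature that lets a kernel even exist for SignGD: the per-example scalar $\chi_t$ factors out cleanly, whereas a general adaptive normalizer would not).

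Substituting and pulling the scalar $\sgn(\chi_t)$ outside the inner product yields
\[
f(\xi;\theta_t)-f(\xi;\theta_{t-1}) \;\approx\; -\eta\,\sgn(\chi_t)\,\bigl\langle \nabla f(\xi;\theta_{t-1}),\, \sgn(\nabla f(\xi_t;\theta_{t-1}))\bigr\rangle.
\]
Next I invoke Fixed Features twice to replace $\theta_{t-1}$ by $\theta_0$ in both arguments of this inner product, obtaining $\langle \nabla f(\xi;\theta_0),\sgn(\nabla f(\xi_t;\theta_0))\rangle=\kasigngd(\xi,\xi_t)$, which matches the claimed formula.

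The main obstacle is the second application of Fixed Features: unlike the SGD case where we simply replace $\nabla f(\xi_t;\theta_{t-1})$ by $\nabla f(\xi_t;\theta_0)$ using continuity, here we must replace $\sgn(\nabla f(\xi_t;\theta_{t-1}))$ by $\sgn(\nabla f(\xi_t;\theta_0))$, and $\sgn$ is discontinuous at zero. A naive bound on $\|\nabla f(\xi_t;\theta_{t-1})-\nabla f(\xi_t;\theta_0)\|$ does not immediately control the number of coordinates whose sign flips. To handle this I would strengthen the Fixed Features condition in the formal statement (\Cref{def:theory_kernel_behavior}) with a coordinate-wise small-change hypothesis, or equivalently, an anti-concentration assumption ensuring that only a vanishing fraction of gradient coordinates lie in a shrinking window around zero during training. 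Under such a hypothesis, the fraction of sign-flipped coordinates is $o(1)$, so the two inner products differ by a vanishing amount relative to $\|\nabla f(\xi;\theta_0)\|_1$, and the approximation in the theorem holds to the same order as the approximations in \Cref{def:kernel_regime}.

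Finally, to upgrade this sketch into the formal \Cref{thm:theory_signgd_kernel}, I would quantify each $\approx$ with an explicit error bound, state the anti-concentration assumption precisely, and verify it is consistent with the Tensor Programs framework invoked in \Cref{sec:theory} (where infinite-width coordinate distributions are Gaussian-like and hence anti-concentrated at zero). The extension to vector-valued $f\in\mathbb{R}^C$ follows componentwise, as noted in the footnote of \Cref{def:kernel_regime}, with $\sgn(\chi_t)$ becoming a vector in $\{-1,0,+1\}^C$ and $\kasigngd$ becoming the corresponding $C\times C$ matrix-valued kernel.
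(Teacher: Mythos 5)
Your argument is the same as the paper's own proof sketch of this statement: apply Linearization, use the chain rule plus the multiplicativity of $\sgn$ on scalars to factor out $\sign(\chi_t)$ from $\sign(\nabla \ell_t)$, and then invoke Fixed Features to replace the gradients at $\theta_{t-1}$ (resp.\ $\theta_t$) by those at $\theta_0$, yielding $\kasigngd(\xi,\xi_t)$. You also correctly flag the one genuine subtlety — that Fixed Features bounds a norm, while $\sign$ is discontinuous, so a small gradient change need not give a small change in $\sign(\nabla f(\xi_t;\cdot))$ — but your proposed fix differs from the paper's. You would add an anti-concentration hypothesis so that only a vanishing fraction of coordinates can flip sign; the formal proof in \Cref{thm:theory_signgd_kernel} instead replaces $\sign$ by the smoothed $\epssign$ with $\eps$ chosen to match the maximum gradient scale (item g of \Cref{assump:tp_nontrivial_stable}), so that the Lipschitz bound $|\epssign(x)-\epssign(y)|\leq |x-y|/\eps$ together with the Fixed Features bound (normalized by $\max_{\xi'}\|\nabla_M f_0(\xi')\|_2$) controls the change directly; the per-matrix learning-rate/scale bookkeeping then handles the cross terms. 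The two fixes buy slightly different things: the paper's keeps the hypotheses purely about scales (no distributional assumption on gradient coordinates) at the cost of analyzing an $\eps$-smoothed optimizer rather than exact SignGD, while yours keeps exact $\sign$ but requires an extra anti-concentration assumption that would itself need to be justified within the Tensor Programs limit. Either route is a legitimate formalization of the informal statement.
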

\begin{proof}[Proof sketch]
The Linearization property in \Cref{def:kernel_regime} implies that
\begin{align*}
f(\xi; \theta_{t})-&f(\xi; \theta_{t-1}) \approx \langle \nabla f(\xi; \theta_t), \theta_{t} - \theta_{t-1}\rangle \\
&=-\eta\sign(\chi_t)\langle \nabla f(\xi; \theta_t), \sign(\nabla f(\xi_t;\theta_{t-1}))\rangle.
\end{align*}
Then, by the Fixed Features property in \Cref{def:kernel_regime}, 
\begin{align*}
\langle \nabla &f(\xi; \theta_t), \sign(\nabla f(\xi_t;\theta_{t-1}))\rangle\approx  \\  &\langle \nabla f(\xi; \theta_0), \sign(\nabla f(\xi_t;\theta_0))\rangle=\kasigngd(\xi,\xi_t).\qedhere 
\end{align*}
\end{proof}

We solve the asymmetric kernel regression as suggested in \citet{he2022learning}, %
but the difficulties of solving the kernel regression problem with an asymmetric kernel (\Cref{sec:app_asym_solver}) motivate us to also use the symmetric SignGD kernel.

\begin{definition}[SignGD Kernel]\label{def:signgd_kernel}
	$\ksigngd(\xi, \xi') = \langle \sign(\nabla f(\xi;\theta_0)), \sign(\nabla f(\xi';\theta_0))\rangle$
\end{definition}
Unlike the standard NTK formula for SGD, the kernel analog for Adam uses the sign function because early-stage Adam dynamics are agnostic to the scales of the gradients.
Concurrent work in~\citet{littwin2023adaptive} more formally extends the Tensor Programs framework and finds that no kernel can describe general (e.g., late-stage) Adam training when batch size is large.

\section{Theory: Prompt-Based Fine-Tuning Can Exhibit Kernel Behavior}\label{sec:theory}
We give a plausible mechanism for how prompt-based FT can exhibit kernel behavior (\Cref{def:kernel_regime}) as the network width grows large. 
We start by formalizing how changing the architecture width impacts pre-training.
\begin{definition}[Pre-Training Scheme]
 A pre-training scheme $(\gX, \gA, \gF^n)$ with width $n$ contains the dataset $\gX$, optimizer $\gA$ and its hyperparameters, and a model architecture $\gF^n$. Let $f^n\sim(\gX, \gA, \gF^n)$ denote a model resulting from training the architecture $\gF^n$ on the dataset $\gX$ with optimizer $\gA$.
\end{definition}
\begin{remark}
The reliance of the architecture on the width is given by Tensor Programs \citep{yang2020tensor2}: for example, in a Transformer, $n$ corresponds to the embedding dimension.
\end{remark}
We now connect pre-training to the downstream task. Analogous to~\citet{saunshi2020mathematical}, we reason that prompting transforms the downstream task into a fill-in-the-blank problem, and thus the downstream task can be viewed as a subcase of the pre-training task. We then assume that a wider pre-trained network will be better at filling in masked tokens and that an infinitely wide pre-trained network can solve the downstream task perfectly when using a suitable prompt.
\begin{definition}[Natural Task in the Infinite-Width Limit] \label{def:solvable_task}
    A downstream task $\Xi$ is natural with respect to a pre-training scheme $(\gX, \gA, \gF^n)$ if, for any pre-trained model $f^n\sim(\gX, \gA, \gF^n)$ and any downstream example $(\xi, y)\in\Xi$,
    \begin{equation}\label{eq:chi_zero}
	    \lim_{n\to\infty} \chi(\xi, y, f^n) = 0.
	\end{equation}
	where $\chi$ is the output derivative (\Cref{def:output_derivative}).
\end{definition}
\begin{remark}
Experiments in \Cref{sec:experiments} and \Cref{sec:app_prompt_choice} suggest that the FT optimization dynamics depend on the choice of prompt. 
In the above notation, the prompt is included in the downstream task dataset $\Xi$.
Only tasks with a well-suited prompt can be natural in the infinite-width limit. 
Tasks solved by FT using a randomly initialized head cannot satisfy the condition since  $\chi$ will not vanish even for an infinitely wide pre-trained network at start of FT.\end{remark}
Although \Cref{def:solvable_task} is asymptotic, we design a cheap empirical test using two models of different widths $n_1\neq n_2$ and same depth resulting from otherwise identical pre-training schemes: $f^{n_1}\sim (\gX, \gA, \gF^{n_1})$ and $f^{n_2}\sim (\gX, \gA, \gF^{n_2})$.
We measure if $\chi$ decreases with width for every downstream example $(\xi, y)\in\Xi$ without making any gradient updates.
This is necessary but not sufficient for the task to be natural in the infinite-width limit.
    See \Cref{sec:app_solvable_task}.

To study the behavior of FT, one also needs to make assumptions about parameters that resulted from pre-training. 
We assume that the network can be written as a Tensor Program~\citep{yang2019wide,yang2020tensor2,yang2020tensor3}, which is sufficiently general to allow our theory to describe many complex architectures (e.g., Transformers).
To allow the analysis to proceed by way of Tensor Programs, the network must be (1) \textit{stable}: its output does not grow with width (i.e., the infinite-width limit is meaningful), and (2) \textit{non-trivial}: its output can be updated during fine-tuning (i.e., learning can occur).

\begin{theorem}[Informal version of \Cref{thm:theory_prompt_finetuning}]\label{thm:prompt_ft_kernel_regime}
	Assume the downstream task $\Xi$ is natural in the infinite-width limit with respect to a pre-training scheme $(\gX, \gA, \gF^n)$, and the model $f\sim(\gX, \gA, \gF^n)$ is stable, non-trivial, and can be written as a Tensor Program.
	Then prompt-based FT of $f$ will exhibit the Linearization and Fixed Features properties of kernel behavior (\Cref{def:kernel_regime}).
\end{theorem}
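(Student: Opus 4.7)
The plan is to verify both the Linearization and Fixed Features properties by an inductive argument over the fine-tuning step $t$, using the naturalness assumption (\Cref{def:solvable_task}) as the main lever. The crucial observation is that if $\chi(\xi,y,f^n) \to 0$ as $n\to\infty$ for every downstream example, then every gradient update during prompt-based FT is scaled by an asymptotically vanishing factor. Concretely, for SGD the parameter update $\Delta\theta_t = -\eta\,\chi_t\,\nabla f(\xi_t;\theta_{t-1})$ has norm controlled by $|\chi_t|$, which is $o_n(1)$. For SignGD the update is $-\eta\,\sign(\chi_t)\,\sign(\nabla f)$, which is not itself small, but its \emph{effect} on the scalar output $f$ through the inner product with $\nabla f(\xi;\theta_0)$ is still governed by $\chi_t$ via the kernel formula of \Cref{thm:asigngd_kernel}; I would handle both cases in parallel.

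First I would set up the quantitative version of kernel behavior referenced as \Cref{def:theory_kernel_behavior}, phrasing both the Linearization error and the Fixed Features error as functions that should vanish as $n\to\infty$. Stability guarantees that $f(\xi;\theta_0)$ and $\nabla f(\xi;\theta_0)$ have well-defined, finite infinite-width limits via the Tensor Programs master theorem \citep{yang2020tensor3}, so the ``scale'' of the forward and backward signals at initialization is $\Theta(1)$; non-triviality ensures these limits are non-degenerate so that learning is meaningful. With this in hand, the naturalness hypothesis implies that the very first update satisfies $\|\theta_1-\theta_0\| = O(|\chi_1|\,\|\nabla f(\xi_1;\theta_0)\|) = o_n(1)$.

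Next I would carry out the inductive step. Assume $\|\theta_{t-1}-\theta_0\| = o_n(1)$ and $|\chi_s|=o_n(1)$ for all $s\le t-1$. For Fixed Features, expand
\[
\nabla f(\xi;\theta_{t-1}) - \nabla f(\xi;\theta_0) = \int_0^1 H\bigl(\xi;\theta_0 + u(\theta_{t-1}-\theta_0)\bigr)(\theta_{t-1}-\theta_0)\,du,
\]
and use the Tensor Programs calculus applied to the Hessian-vector-product program to show that $H\cdot v$ has bounded infinite-width norm whenever $v$ does; hence the left-hand side is $o_n(1)$. For Linearization, Taylor's theorem gives a remainder bounded by $\tfrac{1}{2}\|\Delta\theta_t\|^2$ times the operator norm of the Hessian along the segment, and the same Tensor Programs argument controls this by $o_n(1)\cdot\|\Delta\theta_t\|$. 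To close the induction, Linearization plus the kernel-analog formulas show that $f(\xi_{t+1};\theta_t)$ stays within $o_n(1)$ of $f(\xi_{t+1};\theta_0)$, so by smoothness of the loss and the naturalness of the task, $\chi_{t+1}=o_n(1)$ as well.

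The main obstacle is the rigorous Tensor Programs bookkeeping for the Hessian, not the high-level architecture of the argument. Classical NTK proofs at random initialization exploit layerwise independence to control Hessian-vector norms, but here $\theta_0$ is the output of pre-training and is highly correlated across layers; one must instead appeal to the master theorem for extended Tensor Programs with nonlinear backward passes to conclude that the relevant Gaussian-process limits of Hessian-vector products exist and are finite, despite these correlations. The stability and non-triviality assumptions are precisely what place the forward/backward/Hessian program in the admissible class so that this machinery applies, after which the inductive chain above proves both halves of kernel behavior for any fixed horizon of steps.
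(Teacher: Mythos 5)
Your high-level intuition (naturalness makes the output derivative vanish, so fine-tuning is effectively ``lazy'') matches the paper's, but the route you propose has two concrete failures. First, the inductive quantity $\|\theta_t-\theta_0\|=o_n(1)$, together with Taylor/Hessian-vector-product control in a global norm, does not survive the infinite-width scaling. The paper allows pre-training in the feature-learning regime (e.g.\ $\mup$), where different weight matrices live at different coordinate-wise scales: an input-layer update has $\Theta(1)$ coordinates, so its Euclidean norm grows like $\sqrt{n}$, and even after the $\chi=o(1)$ suppression the global parameter displacement need not be small; likewise $\nabla f(\xi;\theta_0)$ is not $\Theta(1)$ uniformly but has layer-dependent scales (e.g.\ $\Theta(1/n)$ at the readout). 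A single Hessian-operator-norm bound ``via the master theorem'' is not an available off-the-shelf result and is exactly the bookkeeping the paper avoids: its proof never touches Hessians. Instead it observes that $\chi_t=O(n^{-d})$ is equivalent to running the original parametrization with learning-rate exponent $c'=c+d$, so the stability exponent $r$ of Yang--Hu's abc-classification increases to $r'=r+d>0$, and then it invokes their existing kernel-regime results (Thm.~3.8, Prop.~H.27, Thm.~H.32) --- after normalizing the Linearization statement by $\chi_{\max}$, tracking per-matrix scales of activations and normalized gradients, and inductively propagating $\chi_t=O(\chi_{\max})$ using the boundedness of $\ell_t''$, which is the analogue of your closing step. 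If you want to salvage your route you would have to redo the Taylor argument blockwise with width-dependent scales per weight matrix, at which point you are reconstructing the Tensor Programs scale analysis anyway.

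Second, your treatment of SignGD/Adam is wrong as stated. The update $-\eta\,\sign(\chi_t)\,\sign(\nabla f)$ produces a function change $-\eta\,\sign(\chi_t)\,\kasigngd(\xi,\xi_t)$ whose magnitude is completely independent of $|\chi_t|$, since $|\sign(\chi_t)|=1$; so the ``effect on $f$ is still governed by $\chi_t$'' claim fails, and invoking \Cref{thm:asigngd_kernel} here is also circular, since that theorem presupposes the kernel behavior you are trying to establish. The paper handles this by an explicit hypothesis (Assumption g--h in \Cref{assump:tp_nontrivial_stable}): $\sign$ is smoothed to $\epssign$ with $\eps$ matched to the gradient scale, and the fine-tuning learning rate is downscaled by $\chi_{\max}$ relative to pre-training, $\eta_M=\Theta(\eta_M^{\text{PT}}\chi_{\max})$; only then are the SignGD updates of the right (small) order, and the sign-of-outer-product structure lets the SGD scale analysis carry over. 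Without importing that learning-rate assumption your induction cannot even start in the Adam/SignGD case.
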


The theorem formalizes the intuition that if the pre-trained network is already decent at solving the downstream task, the network needs to only mildly adapt to solve the downstream task. 
Notably, we extend standard NTK theory to account for an arbitrary initialization and to characterize early-stage training with Adam using results from~\Cref{sec:adam_kernel}.

Our theoretical results in this section and \Cref{sec:adam_kernel} apply to autoregressive and masked language models (MLMs), but we limit our fine-tuning experiments to MLMs as they are known to perform better after fine-tuning.

\section{Experiments}\label{sec:experiments}%

\definecolor{greenCircleGreen}{rgb}{0.09, 0.7, 0.27}
\definecolor{redSquareRed}{rgb}{0.9, 0.1, 0.1}
\newcommand{\cmark}{\raisebox{.5ex}{\tiny\textcolor{greenCircleGreen}{\newmoon}}}
\newcommand{\xmark}{\raisebox{.5ex}{\tiny\textcolor{redSquareRed}{\fullmoon}}}

\begin{table*}[t]
\centering
\resizebox{1.0\textwidth}{!}{
\centering
    \begin{tabular}{lcccccccccccccccccccc}
    \toprule
     Task  &  \multicolumn{1}{c}{\textbf{SST-2}} & \multicolumn{1}{c}{\textbf{SST-5}} & \multicolumn{1}{c}{\textbf{MR}} & \multicolumn{1}{c}{\textbf{CR}} & \multicolumn{1}{c}{\textbf{MPQA}} & \multicolumn{1}{c}{\textbf{Subj}} & \multicolumn{1}{c}{\textbf{TREC}} & \multicolumn{1}{c}{\textbf{AG News}} & \multicolumn{1}{c}{\textbf{MNLI}} & \multicolumn{1}{c}{\textbf{SNLI}} & \multicolumn{1}{c}{\textbf{QNLI}} & \multicolumn{1}{c}{\textbf{RTE}} & \multicolumn{1}{c}{\textbf{MRPC}} & \multicolumn{1}{c}{\textbf{QQP}} \\
    \midrule
    Task type & \multicolumn{4}{c}{------------ sentiment ------------} & \multicolumn{1}{c}{polarity} & \multicolumn{1}{c}{subj.} & \multicolumn{2}{c}{------ topic clf. ------} & \multicolumn{4}{c}{------------- entailment ------------} & \multicolumn{2}{c}{-- para. detect. --}\\
    Num.~classes $C$ & 2 & 5 & 2 & 2 & 2 & 2 & 6 & 4 & 3 & 3 & 2 & 2 & 2 & 2 \\
    \midrule
    \multicolumn{15}{c}{SGD vs. $\ksgd$: $16$-shot} \\
    \midrule
    eNTK solves task & \cmark\cmark\cmark\cmark\cmark & \cmark\cmark\cmark\cmark\cmark & \cmark\cmark\cmark\cmark\cmark & \cmark\cmark\cmark\cmark\cmark & \cmark\cmark\xmark\cmark\cmark & \cmark\cmark\cmark\cmark\cmark & \xmark\xmark\xmark\xmark\xmark & \cmark\cmark\cmark\cmark\cmark & \xmark\cmark\cmark\xmark\xmark & \cmark\xmark\xmark\xmark\xmark & \cmark\cmark\cmark\xmark\cmark & \cmark\cmark\cmark\cmark\cmark & \cmark\cmark\cmark\cmark\cmark & \xmark\cmark\cmark\cmark\cmark \\

    Linearization & \cmark\cmark\cmark\cmark\cmark & \xmark\cmark\cmark\cmark\cmark & \cmark\cmark\cmark\cmark\cmark & \cmark\cmark\cmark\cmark\cmark & \xmark\xmark\cmark\xmark\xmark & \cmark\cmark\cmark\cmark\cmark & \xmark\xmark\xmark\xmark\xmark & \cmark\cmark\cmark\cmark\cmark & \cmark\cmark\cmark\xmark\cmark & \cmark\cmark\cmark\xmark\xmark & \xmark\xmark\cmark\xmark\xmark & \cmark\cmark\xmark\cmark\cmark & \cmark\cmark\cmark\cmark\xmark & \cmark\cmark\cmark\xmark\xmark \\
    Fixed Features & \cmark\cmark\cmark\cmark\cmark & \cmark\cmark\cmark\cmark\cmark & \cmark\cmark\cmark\cmark\cmark & \cmark\cmark\cmark\cmark\cmark & \cmark\cmark\cmark\cmark\cmark & \cmark\cmark\cmark\cmark\cmark & \xmark\xmark\xmark\xmark\xmark & \cmark\cmark\cmark\cmark\xmark & \cmark\cmark\cmark\cmark\cmark & \cmark\cmark\cmark\cmark\cmark & \cmark\cmark\cmark\cmark\cmark & \cmark\cmark\cmark\cmark\cmark & \cmark\cmark\cmark\cmark\cmark & \cmark\cmark\cmark\cmark\cmark \\
    \midrule
    $\Rightarrow$ Kernel behavior & \cmark\cmark\cmark\cmark\cmark & \xmark\cmark\cmark\cmark\cmark & \cmark\cmark\cmark\cmark\cmark & \cmark\cmark\cmark\cmark\cmark & \xmark\xmark\xmark\xmark\xmark & \cmark\cmark\cmark\cmark\cmark & \xmark\xmark\xmark\xmark\xmark & \cmark\cmark\cmark\cmark\xmark & \xmark\cmark\cmark\xmark\xmark & \cmark\xmark\xmark\xmark\xmark & \xmark\xmark\cmark\xmark\xmark & \cmark\cmark\xmark\cmark\cmark & \cmark\cmark\cmark\cmark\xmark & \xmark\cmark\cmark\xmark\xmark \\

    \midrule
    \multicolumn{15}{c}{SGD vs. $\ksgd$: $64$-shot} \\
    \midrule
    eNTK solves task & \cmark\cmark\cmark\cmark\cmark & \cmark\cmark\cmark\cmark\cmark & \cmark\cmark\cmark\cmark\cmark & \cmark\cmark\cmark\cmark\cmark & \cmark\cmark\cmark\cmark\cmark & \cmark\cmark\cmark\cmark\cmark & \xmark\xmark\xmark\xmark\xmark & \cmark\cmark\cmark\cmark\cmark & \xmark\xmark\cmark\xmark\cmark & \xmark\xmark\xmark\xmark\xmark & \cmark\cmark\cmark\xmark\cmark & \cmark\cmark\cmark\cmark\cmark & \cmark\cmark\cmark\cmark\cmark & \cmark\cmark\cmark\cmark\cmark \\	
    Linearization & \cmark\cmark\cmark\cmark\cmark & \cmark\cmark\cmark\xmark\cmark & \cmark\cmark\cmark\cmark\cmark & \cmark\cmark\cmark\cmark\cmark & \xmark\xmark\xmark\cmark\cmark & \cmark\cmark\xmark\xmark\cmark & \xmark\xmark\xmark\xmark\xmark & \cmark\cmark\cmark\cmark\cmark & \xmark\xmark\xmark\xmark\xmark & \xmark\xmark\xmark\xmark\xmark & \xmark\cmark\xmark\cmark\xmark & \cmark\xmark\xmark\cmark\cmark & \cmark\xmark\cmark\xmark\xmark & \xmark\cmark\cmark\xmark\cmark \\						
    Fixed Features & \cmark\cmark\cmark\cmark\cmark & \cmark\cmark\cmark\cmark\cmark & \cmark\cmark\cmark\cmark\cmark & \cmark\cmark\cmark\cmark\cmark & \cmark\cmark\cmark\cmark\cmark & \cmark\cmark\cmark\cmark\cmark & \xmark\xmark\xmark\xmark\xmark & \cmark\cmark\cmark\xmark\cmark & \cmark\cmark\cmark\cmark\cmark & \cmark\cmark\cmark\cmark\cmark & \cmark\cmark\cmark\cmark\cmark & \cmark\cmark\cmark\cmark\cmark & \cmark\xmark\cmark\cmark\cmark & \cmark\cmark\cmark\cmark\cmark \\
    \midrule
    $\Rightarrow$ Kernel behavior & \cmark\cmark\cmark\cmark\cmark & \cmark\cmark\cmark\xmark\cmark & \cmark\cmark\cmark\cmark\cmark & \cmark\cmark\cmark\cmark\cmark & \xmark\xmark\xmark\cmark\cmark & \cmark\cmark\xmark\xmark\cmark & \xmark\xmark\xmark\xmark\xmark & \cmark\cmark\cmark\xmark\cmark & \xmark\xmark\xmark\xmark\xmark & \xmark\xmark\xmark\xmark\xmark & \xmark\cmark\xmark\xmark\xmark & \cmark\xmark\xmark\cmark\cmark & \cmark\xmark\cmark\xmark\xmark & \xmark\cmark\cmark\xmark\cmark \\

    \midrule
    \multicolumn{15}{c}{Adam vs. $\{\ksigngd, \kasigngd\}$: $16$-shot} \\
    \midrule
    eNTK solves task & \cmark\cmark\cmark\cmark\cmark & \cmark\cmark\cmark\xmark\cmark & \cmark\cmark\cmark\cmark\cmark & \cmark\cmark\cmark\cmark\cmark & \cmark\cmark\xmark\cmark\cmark & \cmark\cmark\cmark\cmark\cmark & \xmark\xmark\xmark\xmark\xmark & \cmark\cmark\cmark\cmark\cmark & \cmark\cmark\cmark\cmark\xmark & \xmark\xmark\cmark\xmark\xmark & \cmark\cmark\cmark\xmark\cmark & \cmark\cmark\cmark\cmark\cmark & \cmark\cmark\cmark\cmark\cmark & \cmark\xmark\xmark\cmark\cmark  \\		
    Linearization & \cmark\cmark\cmark\cmark\cmark & \cmark\cmark\xmark\cmark\cmark & \cmark\cmark\cmark\xmark\cmark & \cmark\cmark\cmark\cmark\cmark & \xmark\xmark\cmark\cmark\xmark & \cmark\cmark\cmark\cmark\cmark & \xmark\xmark\xmark\xmark\xmark & \xmark\cmark\xmark\xmark\xmark & \xmark\cmark\xmark\xmark\xmark & \xmark\cmark\xmark\xmark\xmark & \xmark\xmark\cmark\cmark\cmark & \xmark\cmark\cmark\cmark\cmark & \cmark\cmark\cmark\xmark\xmark & \cmark\cmark\cmark\xmark\cmark  \\		
    Fixed Features & \cmark\cmark\cmark\cmark\cmark & \cmark\cmark\cmark\cmark\cmark & \cmark\cmark\cmark\cmark\cmark & \cmark\cmark\cmark\cmark\cmark & \cmark\cmark\cmark\cmark\cmark & \cmark\cmark\cmark\cmark\cmark & \cmark\cmark\cmark\cmark\cmark & \cmark\xmark\cmark\cmark\cmark & \cmark\cmark\cmark\cmark\cmark & \cmark\cmark\cmark\cmark\cmark & \cmark\cmark\cmark\cmark\cmark & \cmark\cmark\cmark\cmark\cmark & \cmark\cmark\cmark\cmark\cmark & \cmark\cmark\cmark\cmark\cmark  \\
    \midrule
    $\Rightarrow$ Kernel behavior & \cmark\cmark\cmark\cmark\cmark & \cmark\cmark\xmark\xmark\cmark & \cmark\cmark\cmark\xmark\cmark & \cmark\cmark\cmark\cmark\cmark & \xmark\xmark\xmark\cmark\xmark & \cmark\cmark\cmark\cmark\cmark & \xmark\xmark\xmark\xmark\xmark & \xmark\xmark\xmark\xmark\xmark & \xmark\cmark\xmark\xmark\xmark & \xmark\xmark\xmark\xmark\xmark & \xmark\xmark\cmark\xmark\cmark & \xmark\xmark\cmark\cmark\cmark & \cmark\cmark\cmark\xmark\xmark & \cmark\xmark\xmark\xmark\cmark \\
    \midrule
    \multicolumn{15}{c}{Adam vs. $\{\ksigngd, \kasigngd\}$: $64$-shot} \\
    \midrule
    eNTK solves task & \cmark\cmark\cmark\cmark\cmark & \cmark\cmark\cmark\cmark\cmark & \cmark\cmark\cmark\cmark\cmark & \cmark\cmark\cmark\cmark\cmark & \cmark\cmark\cmark\cmark\cmark & \cmark\cmark\cmark\cmark\cmark & \xmark\xmark\xmark\cmark\xmark & \cmark\cmark\cmark\cmark\cmark & \xmark\cmark\xmark\cmark\cmark & \xmark\xmark\xmark\xmark\xmark & \xmark\xmark\cmark\xmark\xmark & \cmark\cmark\cmark\cmark\cmark & \cmark\cmark\cmark\cmark\cmark & \cmark\xmark\cmark\cmark\cmark  \\			
    Linearization & \cmark\cmark\cmark\cmark\cmark & \cmark\cmark\cmark\xmark\cmark & \cmark\cmark\cmark\cmark\cmark & \cmark\cmark\cmark\cmark\cmark & \xmark\xmark\xmark\cmark\cmark & \cmark\cmark\cmark\xmark\cmark & \xmark\xmark\xmark\xmark\xmark & \xmark\xmark\cmark\cmark\xmark & \xmark\cmark\cmark\xmark\xmark & \xmark\xmark\xmark\xmark\xmark & \xmark\cmark\cmark\cmark\xmark & \cmark\xmark\cmark\xmark\cmark & \cmark\cmark\cmark\cmark\cmark & \cmark\xmark\cmark\cmark\cmark \\
    Fixed Features & \cmark\cmark\cmark\cmark\cmark & \cmark\cmark\cmark\cmark\cmark & \cmark\cmark\cmark\cmark\cmark & \cmark\cmark\cmark\cmark\cmark & \cmark\cmark\cmark\cmark\cmark & \cmark\cmark\cmark\cmark\cmark & \cmark\cmark\cmark\cmark\cmark & \cmark\cmark\cmark\cmark\cmark & \cmark\cmark\cmark\cmark\cmark & \cmark\cmark\cmark\cmark\cmark & \cmark\cmark\cmark\cmark\cmark & \cmark\cmark\cmark\cmark\cmark & \cmark\cmark\cmark\cmark\cmark & \cmark\cmark\cmark\cmark\cmark \\
    \midrule
    $\Rightarrow$ Kernel behavior &
    \cmark\cmark\cmark\cmark\cmark & \cmark\cmark\cmark\xmark\cmark & \cmark\cmark\cmark\cmark\cmark & \cmark\cmark\cmark\cmark\cmark & \xmark\xmark\xmark\cmark\cmark & \cmark\cmark\cmark\xmark\cmark & \xmark\xmark\xmark\xmark\xmark & \xmark\xmark\cmark\cmark\xmark & \xmark\cmark\xmark\xmark\xmark & \xmark\xmark\xmark\xmark\xmark & \xmark\xmark\cmark\xmark\xmark & \cmark\xmark\cmark\xmark\cmark & \cmark\cmark\cmark\cmark\cmark & \cmark\xmark\cmark\xmark\cmark \\
    \bottomrule
    \end{tabular}
    }
    \caption{
    We find that $8$ out of $14$ tasks consistently induce kernel behavior across 5 subsampled datasets.
    Each dot represents a seed (i.e. a different $k$-shot dataset). A green dot indicates that the seed satisfies the criterion, and a red circle indicates that it does not. 
    We say the eNTK solves the task if the kernel analog achieves at least 90\% of the fine-tuning performance.
    We say that the Linearization property holds if the linearized model improves the pre-trained model by at least $50\%$ of the amount that fine-tuning improves it.
    We say that Fixed Features is satisfied if the average element-wise distance between the kernels before and after fine-tuning are less than $2.0$.
    The formal definition of kernel behavior (\Cref{def:kernel_regime}) does not prescribe measurable numerical thresholds for these properties, so we selected them manually for ease of presentation. 
    We urge readers to examine the data in \Cref{tab:main_prompted,fig:linearization_figure} directly for a more nuanced view. 
    } 
    \label{tab:summary}
\end{table*}

We compute the eNTK as described in \Cref{sec:prelims} for different optimization algorithms and FT settings.
eNTK performance being comparable to FT performance is a necessary but not sufficient condition for FT to exhibit kernel behavior (\Cref{def:kernel_regime}), so we also directly measure if the Linearization and Fixed Features properties hold (\Cref{sec:kernel_measurements}).
If the eNTK can solve the task, then eNTK regression provides an alternate method\footnote{The eNTK is not as susceptible to noisy gradients as FT is, because the learned kernel coefficients can downweight anomalous examples. 
This stability sometimes allows the kernel to outperform FT, especially in the few-shot setting (see MR and Subj in \Cref{tab:main_prompted_single_sentence}).} to use the pre-trained model to solve a downstream task, but the kernel lens only admits a theoretical analysis of FT optimization dynamics if both properties of kernel behavior are satisfied (\Cref{def:kernel_regime}; see \Cref{sec:kernel_measurements}).
For tasks that the eNTK cannot solve, we conjecture that the prompt is not well-designed for the task (in the sense of \Cref{def:solvable_task}), forcing the pre-trained model to adapt more during FT.

Our experiments are in the few-shot setting with manual prompt templates from \citet{gao2020making}. 
We consider 14 NLP tasks, divided into 8 single sentence and 6 sentence pair datasets, which cover: sentiment analysis (SST-2, SST-5, MR, CR); classifying an opinion's polarity (MQPA) or subjectivity (Subj) or question type (TREC) or news topic (AG News); natural language inference (MNLI, SNLI, QNLI, RTE); and paraphrase detection tasks (MRPC, QQP).
For each task, we randomly sample 5 $k$-shot datasets with $k$ training examples for each label.
We show experiments for $k \in \{16, 64\}$ using a pre-trained RoBERTa-base \citep{liu2019roberta} for all FT and eNTK experiments. 
We consider $\ksigngd$ and $\kasigngd$ as kernel analogs for Adam.
See \Cref{sec:app_experiments} for more details and experiments on $k=512$.

We summarize our results in \Cref{tab:summary}.
We find that the eNTK can consistently solve 12 out of 14 tasks comparably to prompt-based fine-tuning,
out of which 8 induce kernel behavior during fine-tuning.
Our results show that FT optimization dynamics depend on the downstream task and the inclusion of a meaningful prompt.

\begin{table*}[!h]
    \begin{subtable}[h]{1.0\textwidth}
        \centering

        \resizebox{0.87\textwidth}{!}{  
        
        \begin{tabular}{rlcccccccccccccc}
        \toprule
        $k$-shot & Method & {SST-2} & {SST-5} & {MR} & {CR} & {MPQA} & {Subj} & {TREC} & {AG News} \\
        \midrule
            \multirow[t]{5}{*}{16} & SGD-FT 
                & \tf{89.0}$_{(1.5)}$ & \tf{44.6}$_{(1.4)}$ & 83.2$_{(2.4)}$ & \tf{93.3}$_{(0.2)}$ & \tf{83.3}$_{(1.3)}$ & 88.5$_{(2.6)}$ & \tf{80.3}$_{(7.2)}$ & \tf{84.2}$_{(1.1)}$ \\
            & {\cellcolor{gblue!10}$\ksgd$} 
                & {\cellcolor{gblue!10}88.3$_{(0.3)}$} & {\cellcolor{gblue!10}43.6$_{(2.2)}$} & {\cellcolor{gblue!10}\tf{84.7}$_{(1.5)}$} & 	{\cellcolor{gblue!10}93.2$_{(0.9)}$} & {\cellcolor{gblue!10}76.4$_{(2.7)}$} & {\cellcolor{gblue!10}\tf{88.6}$_{(1.3)}$} & {\cellcolor{gblue!10}56.0$_{(9.2)}$} & {\cellcolor{gblue!10}82.1$_{(2.0)}$} \\
        \cmidrule{2-10} 
            & Adam-FT 
                & \tf{88.3}$_{(1.2)}$ & \tf{45.4}$_{(2.6)}$ & 81.3$_{(6.1)}$ & 93.0$_{(1.6)}$ & \tf{82.8}$_{(2.2)}$ & 87.4$_{(2.1)}$ & \tf{79.6}$_{(6.1)}$ & \tf{84.0}$_{(1.6)}$ \\
            & {\cellcolor{ggreen!7}$\ksigngd$} 
                & {\cellcolor{ggreen!7}\tf{88.3}$_{(0.5)}$} & {\cellcolor{ggreen!7}42.2$_{(3.9)}$} & {\cellcolor{ggreen!7}84.3$_{(1.5)}$} & {\cellcolor{ggreen!7}\textbf{93.7}$_{(0.5)}$} & {\cellcolor{ggreen!7}76.7$_{(3.3)}$} & {\cellcolor{ggreen!7}\tf{89.2}$_{(2.0)}$} & {\cellcolor{ggreen!7}58.1$_{(6.5)}$} & {\cellcolor{ggreen!7}82.3$_{(1.6)}$} \\
            & {\cellcolor{ggreen!7}$\kasigngd$}
                & {\cellcolor{ggreen!7}\tf{88.3}$_{(0.4)}$} & {\cellcolor{ggreen!7}43.7$_{(1.7)}$} & {\cellcolor{ggreen!7}\tf{84.9}$_{(1.1)}$} & 	{\cellcolor{ggreen!7}93.4$_{(0.5)}$} & {\cellcolor{ggreen!7}74.6$_{(3.5)}$} & {\cellcolor{ggreen!7}88.6$_{(1.8)}$} & {\cellcolor{ggreen!7}22.7${{}}_{(2.8)}$} & {\cellcolor{ggreen!7}83.6$_{(1.0)}$}  \\
        \cmidrule{1-10} 
            \multirow[t]{5}{*}{64} & SGD-FT 
                & \tf{89.7}$_{(0.4)}$ & 45.8$_{(2.1)}$ & 85.6$_{(1.1)}$ & \tf{94.3}$_{(0.5)}$ & \tf{84.8}$_{(0.8)}$ & \tf{92.9}$_{(0.5)}$ & \tf{93.2}$_{(1.0)}$ & \tf{86.8}$_{(0.7)}$ \\
            & {\cellcolor{gblue!10}$\ksgd$} 
                & {\cellcolor{gblue!10}89.2$_{(1.0)}$} & {\cellcolor{gblue!10}\tf{46.0}$_{(1.3)}$} & {\cellcolor{gblue!10}\tf{86.4}$_{(0.6)}$} & {\cellcolor{gblue!10}93.7$_{(0.4)}$} & {\cellcolor{gblue!10}81.2$_{(0.9)}$} & {\cellcolor{gblue!10}91.4$_{(0.7)}$} & {\cellcolor{gblue!10}77.8$_{(2.3)}$} & {\cellcolor{gblue!10}85.6$_{(0.7)}$} \\
        \cmidrule{2-10} 
            & Adam-FT 
                & \tf{89.3}$_{(0.7)}$ & 48.5$_{(2.0)}$ & \tf{86.0}$_{(0.4)}$ & {93.7}$_{(0.8)}$ & \tf{84.6}$_{(0.9)}$ & \tf{92.7}$_{(0.6)}$ & \tf{92.6}$_{(1.3)}$ & \tf{86.8}$_{(1.1)}$ \\
            & {\cellcolor{ggreen!7}$\ksigngd$}
                & {\cellcolor{ggreen!7}89.1$_{(0.5)}$} & {\cellcolor{ggreen!7}\tf{49.1$_{(1.6)}$}} & {\cellcolor{ggreen!7}85.6$_{(1.0)}$} &
                {\cellcolor{ggreen!7}93.9$_{(0.2)}$} &
                {\cellcolor{ggreen!7}79.0$_{(5.8)}$} & {\cellcolor{ggreen!7}92.4$_{(0.5)}$} & {\cellcolor{ggreen!7}82.0$_{(1.4)}$} & {\cellcolor{ggreen!7}85.9$_{(0.7)}$} \\
            & {\cellcolor{ggreen!7}$\kasigngd$}
                & {\cellcolor{ggreen!7}88.9$_{(0.9)}$} &  {\cellcolor{ggreen!7}43.6$_{(2.2)}$} & {\cellcolor{ggreen!7}85.6$_{(1.0)}$} & {\cellcolor{ggreen!7}\tf{94.0}$_{(0.3)}$} & {\cellcolor{ggreen!7}81.8$_{(1.1)}$} & {\cellcolor{ggreen!7}91.8$_{(1.1)}$} &  	{\cellcolor{ggreen!7}21.0$_{(4.3)}$} & {\cellcolor{ggreen!7}86.2$_{(0.3)}$} \\
        \bottomrule
        \end{tabular}

        }
        
        \caption{Single-sentence tasks \vspace{1.0em}}
        \label{tab:main_prompted_single_sentence}
    \end{subtable}
    \begin{subtable}[h]{1.0\textwidth}
        \centering

        \resizebox{.7\textwidth}{!}{  
\begin{tabular}{rlcccccccccccc}
\toprule
$k$-shot & Method & {MNLI} & {SNLI} & {QNLI} & {RTE} & {MRPC} & {QQP} \\
\midrule
    \multirow[t]{5}{*}{16} & SGD-FT
        & \tf{59.2}$_{(2.7)}$ & \tf{65.7}$_{(2.7)}$ & \tf{62.1}$_{(3.1)}$ & \tf{60.0}$_{(5.5)}$ & \tf{73.9}$_{(2.7)}$ & \tf{62.1}$_{(2.3)}$ \\
    & {\cellcolor{gblue!10}$\ksgd$}
        & {\cellcolor{gblue!10}53.0$_{(3.0)}$} & {\cellcolor{gblue!10}57.8$_{(2.3)}$} & {\cellcolor{gblue!10}60.1$_{(3.3)}$} & {\cellcolor{gblue!10}\tf{60.0}$_{(4.7)}$} & {\cellcolor{gblue!10}73.4$_{(5.6)}$} & {\cellcolor{gblue!10}58.2$_{(0.9)}$} \\
\cmidrule{2-8} 
    & Adam-FT
        & \tf{56.8}$_{(2.9)}$ & \tf{64.6}$_{(4.1)}$ & \tf{63.1}$_{(3.5)}$ & 57.6$_{(6.3)}$ & \tf{77.6}$_{(3.1)}$ & \tf{61.8}$_{(4.5)}$ \\
    & {\cellcolor{ggreen!7}$\ksigngd$}
        & {\cellcolor{ggreen!7}53.8$_{(1.2)}$} & {\cellcolor{ggreen!7}54.9$_{(2.7)}$} %
        & {\cellcolor{ggreen!7}59.5$_{(3.1)}$} & {\cellcolor{ggreen!7}55.4$_{(4.2)}$} & {\cellcolor{ggreen!7}75.6$_{(1.2)}$} & {\cellcolor{ggreen!7}60.7$_{(2.2)}$} \\
    & {\cellcolor{ggreen!7}$\kasigngd$}
        & {\cellcolor{ggreen!7}51.9$_{(4.0)}$} & {\cellcolor{ggreen!7}54.9$_{(3.1)}$} & {\cellcolor{ggreen!7}56.0$_{(1.9)}$} & {\cellcolor{ggreen!7}\tf{59.8}$_{(4.0)}$} & {\cellcolor{ggreen!7}75.2$_{(2.6)}$} & {\cellcolor{ggreen!7}59.4$_{(2.0)}$} \\
\cmidrule{1-8} 
    \multirow[t]{5}{*}{64} & SGD-FT
        & \tf{68.7}$_{(1.7)}$ & \tf{77.3}$_{(0.9)}$ & \tf{72.8}$_{(2.2)}$ & \tf{68.9}$_{(2.5)}$ & \tf{82.8}$_{(1.2)}$ & \tf{69.2}$_{(1.3)}$ \\
    & {\cellcolor{gblue!10}$\ksgd$}
        & {\cellcolor{gblue!10}60.4$_{(1.8)}$} & {\cellcolor{gblue!10}65.5$_{(1.6)}$} & {\cellcolor{gblue!10}67.3$_{(1.6)}$} & {\cellcolor{gblue!10}66.5$_{(2.5)}$} & {\cellcolor{gblue!10}79.2$_{(2.5)}$} & {\cellcolor{gblue!10}66.4$_{(1.7)}$} \\
\cmidrule{2-8} 
    & Adam-FT
        & \tf{67.9}$_{(1.0)}$ & \tf{76.9}$_{(1.4)}$ & \tf{74.2}$_{(3.2)}$ & \tf{67.3}$_{(2.7)}$ & \tf{80.9}$_{(1.2)}$ & \tf{69.8}$_{(0.6)}$ \\
    & {\cellcolor{ggreen!7}$\ksigngd$}
        & {\cellcolor{ggreen!7}60.8$_{(1.7)}$} %
        & {\cellcolor{ggreen!7}64.1$_{(2.3)}$} %
        & {\cellcolor{ggreen!7}65.4$_{(1.7)}$} & {\cellcolor{ggreen!7}63.8$_{(1.8)}$} & {\cellcolor{ggreen!7}77.4$_{(2.3)}$} & {\cellcolor{ggreen!7}63.7$_{(4.4)}$} \\
    & {\cellcolor{ggreen!7}$\kasigngd$}
        & {\cellcolor{ggreen!7}58.5$_{(1.7)}$} & {\cellcolor{ggreen!7}66.8$_{(1.1)}$} & {\cellcolor{ggreen!7}66.5$_{(1.1)}$} & {\cellcolor{ggreen!7}63.8$_{(2.2)}$} & {\cellcolor{ggreen!7}77.3$_{(2.0)}$} & {\cellcolor{ggreen!7}66.1$_{(3.4)}$} \\
\bottomrule
\end{tabular}

        }
        
        \caption{Sentence-pair tasks \vspace{-0.5em}}        \label{tab:main_prompted_sentence_pair}
    \end{subtable}

\caption{Prompt-based FT and prompt-based eNTK performance with different formulas on the LM-BFF test set \citep{gao2020making}. 
The kernel analog performs comparably to FT on many tasks but fails if the prompt is poorly designed (i.e., MPQA, TREC, SNLI, and MNLI). Performance is measure by average test accuracy over 5 $k$-shot splits for all tasks except MRPC and QQP, where it is F1.
}

\label{tab:main_prompted}
\end{table*}

\subsection{Kernel Performance on Downstream Tasks}

\paragraph{Prompting is critical for eNTK to match FT performance.}
We measure the eNTK performance in the standard and prompt-based FT settings across SST-2, MR, CR, QNLI, QQP and RTE (\Cref{fig:prompt_vs_noprompt} and \Cref{tab:noprompt}).
In the standard FT setting, $\ksgd$ and SGD-FT demonstrate a gap of up to $16\%$ absolute on tasks that exhibit only a 3\% gap in the prompt-based setting. 
\Cref{tab:noprompt} demonstrates that the inclusion of more data improves the eNTK performance in the unprompted setting, but kernels computed with a prompt consistently outperform the standard ones.
We explore the importance of the choice of prompt format in \Cref{sec:app_prompt_choice}. 
These results agree with our theoretical analysis that tasks must use a meaningful prompt in order to induce kernel behavior (\Cref{def:solvable_task}).

\begin{figure*}[t]
    \centering
    \includegraphics[width=\textwidth]{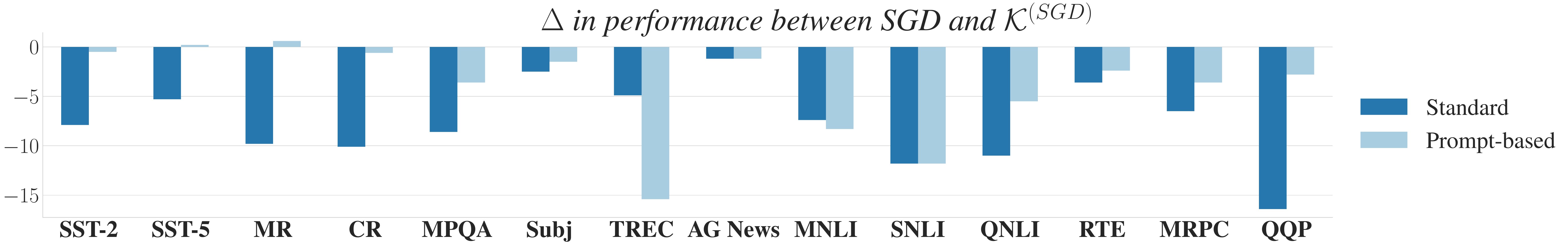}
    \caption{
    The performance difference between SGD-FT and $\ksgd$ performance for both the standard 
    and the prompt-based setting
    (\Cref{sec:prelims}) suggests that using a prompt is important for kernel behavior (\Cref{def:kernel_regime}) to arise. 
    In standard FT, we initialize the new classification head (i.e., $\Gamma$) using the linear probing solution.
    The performance is shown for the $64$-shot setting and measured by the average test accuracy over 5 random splits, except for MRPC and QQP, where it is F1.
    Results on additional settings are in \Cref{tab:noprompt}.
    }
    \label{fig:prompt_vs_noprompt}
\end{figure*}

\paragraph{SGD performs comparably to Adam in prompt-based FT.} \Cref{tab:main_prompted} shows that Adam and SGD perform within 4\% absolute of each other when using a prompt, 
suggesting that known difficulties in optimizing transformers with SGD \citep{li2022robust,zhang2019why,liu2020understanding} do not play a substantial role during prompt-based FT.
Indeed, we expect that the benefit of Adam over SGD is reduced when the task is simple enough to induce kernel behavior.

\paragraph{Prompt-based eNTK matches FT in most tasks.}
We compare SGD-FT to $\ksgd$ and Adam-FT to $\kasigngd$ in \Cref{tab:main_prompted}. 
We observe that for 10 out of 14 tasks, the kernel analog can achieve accuracy within $10\%$ of the corresponding FT performance for $k=16$ and $k=64$. 
The difference between $\ksigngd$ and $\kasigngd$ is negligible on most tasks, but the non-standard solver for the asymmetric problem (\Cref{sec:app_asym_solver}) may cause $\kasigngd$ to sometimes perform worse than $\ksigngd$ despite being the theoretically sound kernel analog (\Cref{thm:asigngd_kernel}).

\subsection{Measuring Kernel Behavior}\label{sec:kernel_measurements}
The eNTK can often solve the task comparably to fine-tuning (\Cref{tab:main_prompted}), suggesting that these tasks may induce kernel behavior (\Cref{def:kernel_regime}).
However, the observed success only indicates that the gradient features can solve the downstream task and does not directly study the optimization dynamics.
We take additional measurements to provide further empirical evidence that FT can be \textit{described} by kernel behavior.
The approximations in \Cref{def:kernel_regime} involve constants depending on the dataset and model architecture, so we set manual thresholds for our results.

\paragraph{The Linearization property holds for all tasks the eNTK can solve.} 
If FT exhibits kernel behavior (\Cref{def:kernel_regime}), then the function after FT should be close to the first order Taylor expansion around the pre-trained model:
$$f(\xi;\theta_{\text{FT}}) \approx f(\xi;\theta_{\text{PT}}) + \langle \nabla f(\xi; \theta_{\text{PT}}), \theta_{\text{FT}} - \theta_{\text{PT}} \rangle$$
where $\theta_{\text{PT}}$ is the model parameters after pre-training, $\theta_{\text{FT}}$ is the model parameters after fine-tuning on the downstream task, and $\xi$ is sampled from the test set.
\Cref{fig:linearization_figure} summarizes how this linearized model performs in comparison to the pre-trained and fine-tuned models.

Pre-trained models perform significantly better than random on many single-sentence downstream tasks (e.g., SST-2, MR, and CR) but close to random on most sentence-pair tasks (e.g., QNLI, RTE, MRPC, and QQP).\footnote{Subj, MNLI, and SNLI are outliers to this trend.} 
The linearized model recovers more than $50\%$ amount of the improvement from FT for all tasks the eNTK could solve (\Cref{tab:main_prompted}). 

\paragraph{The Fixed Features property holds for all tasks the eNTK can solve.}
We empirically test if the Fixed Features property (\Cref{def:kernel_regime}) holds for tasks that the eNTK can solve by measuring the relative distance between $\ksgd$ computed before and after FT
(\Cref{tab:linearization_kernel_distance}).
Tasks that the eNTK can solve exhibit low (i.e., less than 2) distances, indicating the Fixed Features property likely holds.

\paragraph{Entailment tasks exhibit anomalous optimization characteristics.}
Although pre-trained models perform much better than random on MNLI and SNLI, we find that the eNTK cannot solve these tasks very well (\Cref{tab:main_prompted} and \Cref{fig:linearization_figure}). 
Similarly, although the pre-trained model demonstrates near-random performance on QNLI and RTE, we find that the eNTK can solve these tasks. 
Moreover, although QNLI and RTE could sometimes be solved by the eNTK, the results suggest they do not induce the Linearization property of kernel behavior very strongly.
Altogether, these findings suggest a deeper mystery around the fine-tuning dynamics when solving entailment tasks.

\subsection{Tasks without Kernel Behavior}\label{sec:tasks_without_kernel}
TREC, MNLI, SNLI, QNLI, and MPQA consistently do not induce kernel behavior (\Cref{tab:summary}).\footnote{The eNTK can consistently solve AG News although \mbox{Adam-FT} does not exhibit kernel behavior.
This finding suggests that our theory holds for the prompt used with AG News, but the grid search over learning rates results in FT that does not exhibit kernel behavior.
In particular, the success of the eNTK suggests the task can be solved with a very small learning rate, but the FT trajectory achieving the best performance uses a larger learning rate and thus exhibits more complex dynamics.
}
Our theoretical analysis suggests that when the prompt and label words do not format the task as a subcase of pre-training, then the task will not be natural in the infinite-width limit (\Cref{def:solvable_task}) and hence will not induce kernel behavior. 

Considering the prompt templates shown in \Cref{sec:app_datasets},
we suggest that the TREC prompt (simply a colon) provides insufficient signal to the model to perform question type classification.
For MNLI and SNLI, we observe that connecting the premise and hypothesis with the label word ``Maybe'' for neutral examples results in ungrammatical sentences.
Analogously, for QNLI, we note that the premise is often a question without a clear yes or no answer, so the label words are unnatural to place between the premise and hypothesis.
The prompt used for sentiment and polarity tasks is designed to follow a complete sentence or substantial phrase, so it is less natural when used with MPQA examples, which are often only one or two words.
See \Cref{sec:app_prompt_choice} for prompt ablations.

\section{Efficacy of Subspace-Based Fine-Tuning Methods}\label{sec:parameter_efficient}
We study subspace-based fine-tuning methods, which apply updates to only a low-dimensional subspace of the high-dimensional model parameter space during fine-tuning.
Although theoretical analysis of these methods seems complex, the kernel view admits a simple interpretation.
We directly apply the Johnson-Lindenstrauss (JL) lemma in \citet{johnson1984extensions}, which guarantees inner product preservation under random projections,
to suggest why LoRA~\citep{hu2021lora} works. 
Similar analysis yields results on parameter-subspace FT methods used to study intrinsic dimension (\citet{li2018measuring,aghajanyan2021intrinsic}, see \Cref{sec:app_theory_lora}).
\begin{definition}[$\gA$-LoRA FT \citep{hu2021lora}] \label{def:lora_ft}
Let $\gA$ be a gradient-based optimization algorithm.
For every weight matrix $W\in\sR^{m\times n}$, choose $k\ll m$ and initialize $B\in\sR^{m\times k}$ with i.i.d. zero-mean Gaussian values and $A\in\sR^{k\times n}$ as 0. Set the weight to be $W + BA$. To fine-tune, fix $W$ at its pre-trained value and train only $A$ and $B$ using $\gA$.
\end{definition}

We show that if SGD FT exhibits kernel behavior, then so does SGD-LoRA FT, and SGD-LoRA FT using a sufficiently large $k$ does not modify the kernel or the dynamics.

\begin{theorem}[Informal version of \Cref{thm:theory_lora}]
    Let $\ksgd$ be the kernel analog (\Cref{def:kernel_analog}) to SGD FT and $\ksgdlora$ be the kernel analog to SGD-LoRA FT on a downstream task $\Xi$ with $N$ examples.
	Then, with high probabililty, $\ksgdlora(i,j)\approx \ksgd(i,j)$ 
 for all $i,j\in [N]$. 
	\label{thm:lora}
\end{theorem}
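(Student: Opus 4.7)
My plan is to compute $\ksgdlora$ at initialization in closed form, recognize each per-layer term as a random projection of the corresponding per-layer term of $\ksgd$, and then apply the Johnson--Lindenstrauss (JL) lemma to obtain pointwise approximation with high probability, followed by a union bound over all $N^2$ pairs and all LoRA-adapted layers.

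First I would reduce $\ksgdlora$ to a projection. Because $A = 0$ at initialization, the product $BA$ vanishes, so the LoRA-parameterized network realizes exactly the pre-trained function; consequently every intermediate activation $x^\ell(\xi)$ and backpropagated co-vector $u^\ell(\xi) = \partial f(\xi;\theta_0)/\partial(W^\ell x^\ell(\xi))$ is a deterministic function of the frozen pre-trained weights and the input $\xi$, independent of every $B^\ell$. Differentiating $W^\ell + B^\ell A^\ell$ yields $\nabla_{A^\ell} f(\xi) = (B^\ell)^{\!\top} u^\ell(\xi)\, x^\ell(\xi)^{\!\top}$ and $\nabla_{B^\ell} f(\xi) = 0$ (the latter because its coefficient is $A^\ell x^\ell(\xi) = 0$). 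Taking Frobenius inner products, the per-layer contribution to $\ksgdlora(\xi,\xi')$ is $\langle x^\ell(\xi), x^\ell(\xi')\rangle \cdot \langle (B^\ell)^{\!\top} u^\ell(\xi),\, (B^\ell)^{\!\top} u^\ell(\xi')\rangle$, while the matching contribution to $\ksgd(\xi,\xi')$ is the same expression with $(B^\ell)^{\!\top}$ replaced by the identity. The theorem therefore reduces to showing that the Gaussian map $(B^\ell)^{\!\top}$ approximately preserves the inner products among the finite set $\{u^\ell(\xi_i)\}_{i\in[N]}$ at every adapted layer $\ell$.

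Second I would invoke JL. With entries of $B^\ell\in\R^{m\times k}$ drawn i.i.d.\ $\mathcal{N}(0,1/k)$ (other scalings absorb into a global constant), standard JL gives $|\langle (B^\ell)^{\!\top} u_i,(B^\ell)^{\!\top} u_j\rangle - \langle u_i,u_j\rangle|\le \epsilon\|u_i\|\|u_j\|$ for all $(i,j)\in[N]^2$ simultaneously with probability at least $1-\delta$, whenever $k = \Omega(\epsilon^{-2}\log(N/\delta))$. A union bound over the $L$ adapted layers then controls every $B^\ell$ at once, at the cost of an extra $\log L$ factor in $k$. Multiplying through by the deterministic factor $\langle x^\ell(\xi_i), x^\ell(\xi_j)\rangle$ and summing over $\ell$ transfers the $(1\pm\epsilon)$ approximation to the full kernel, yielding $\ksgdlora(i,j) = (1\pm O(\epsilon))\,\ksgd(i,j)$ for every $i,j\in[N]$.

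The main obstacle is the conditioning step that underpins the JL application. In a general pre-trained-initialization analysis, one cannot ordinarily treat features and the parameters whose inner product one is projecting as independent, because the same weights determine both the activations and the kernel. Here the choice $A=0$ at initialization is precisely what rescues the argument: the LoRA perturbations are invisible to the forward pass at $t=0$, so for each $\ell$ the vectors $\{u^\ell(\xi_i)\}$ are fixed (conditioned on the pre-trained weights and the data) when JL is applied to $B^\ell$. A secondary, more bookkeeping-heavy issue is matching scalings: the Gaussian variance in $B^\ell$ and any LoRA rescaling (e.g., the $\alpha/k$ factor used in practice) must be consistent with the normalization implicit in the per-layer contribution to $\ksgd$, so that the overall constants agree. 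Because $k$ need only grow logarithmically in $N$ and the number of adapted layers, the theorem holds for the modest ranks used in practical LoRA implementations.
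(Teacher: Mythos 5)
Your proposal is correct and is essentially the paper's own argument: at initialization the zero-initialized LoRA factor leaves the forward and backward passes unchanged, so the per-layer LoRA gradient is a Gaussian random projection of the corresponding full-FT gradient, and the Johnson--Lindenstrauss lemma plus a union bound over the $N^2$ input pairs (and the $L$ adapted layers) gives pointwise closeness of $\ksgdlora$ to $\ksgd$. The only differences are cosmetic or minor: you follow the main-text convention ($B$ Gaussian, $A=0$), so your projection acts on the backward vectors $dh$ rather than on the activations $x$ as in the paper's appendix (a mirror image of the same computation), and your final claim should be stated as an additive bound of the form $|\ksgdlora(i,j)-\ksgd(i,j)|\le c^2\epsilon$ under norm bounds on $dh$ and $x$ (as the paper does) rather than a relative $(1\pm O(\epsilon))$ error, since the kernel entries themselves need not be bounded away from zero.
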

\begin{proof}[Proof sketch]
Consider an individual layer in the network and a task input $\xi\in\Xi$.
LoRA causes $\nabla_B f(\xi;\theta)$ to be a random projection of $\nabla_W f(\xi;\theta)$, 
where $\nabla_B$ denotes the gradient with respect to $B$, and $\nabla_A f(\xi;\theta)=0$ since $B$ is initialized to zero. 
The rest of the proof follows from applying JL to all input pairs $\xi,\xi'$ to show the inner product (and thus, the kernel entry) is preserved. 
\end{proof}
\vspace{-4pt}
\begin{remark}
	\Cref{thm:lora} states that the kernel analog of SGD-FT is unchanged by LoRA in both prompt-based and standard FT.
	However, 
 the theorem only provides an explanation for the success of $\gA$-LoRA FT when $\gA$ FT exhibits kernel behavior.
    Therefore, as per~\Cref{sec:theory,sec:experiments}, we consider this theorem to only be meaningful when considering prompt-based SGD and prompt-based LoRA-SGD.	
\end{remark}
\vspace{-4pt}
\Cref{tab:lora} verifies that prompt-based SGD FT and SGD-LoRA FT achieve similar performance on several tasks, and $\ksgdlora$ achieves performance similar to $\ksgd$.

\section{Conclusion}
We use NTKs to mathematically formalize the general intuition that fine-tuning pretrained language models to solve downstream tasks requires only a ``small change.''  
We derive a new kernel to describe Adam training (\Cref{sec:adam_kernel}) and we use it in \Cref{sec:theory} to show how prompt-based fine-tuning can exhibit kernel behavior.
Extensive experiments in \Cref{sec:experiments} on 14 NLU tasks demonstrate that including a meaningful prompt often causes FT to exhibit kernel behavior (\Cref{fig:prompt_vs_noprompt}) and that kernel dynamics \textit{describe} prompt-based FT on tasks that the eNTK can solve (\Cref{sec:kernel_measurements}).
We demonstrate one possible use of the kernel view to explain empirical phenomena by applying it to understand subspace-based fine-tuning methods (\Cref{sec:parameter_efficient}), and we note that the kernel has many mathematically useful properties that can aid design and study of alternate fine-tuning methods.

Our work suggests that a kernel-based view of language model fine-tuning is plausible, but there are several limitations.
First, our experiments are limited to few-shot classification tasks and a single masked language model with specific prompts.
Extending to additional settings (e.g., increasing $k$) and models require significant computational cost because the eNTK is expensive to compute.
The theoretical results also apply only to ``early-stage'' training with Adam, and it is not clear how well they can describe longer training schemes; concurrent work in~\citet{littwin2023adaptive} suggests that the reduction of Adam to SignGD is crucial to observe kernel dynamics.
Nevertheless, our work provides substantial empirical and theoretical evidence that  fine-tuning can be analyzed in terms of kernel behavior.

As a future direction, one can use the kernel analog to study the inductive bias of FT, as was done for gradient descent from a random initialization in the past \citep{allenzhu2018convergence,allenzhu2018learning,li2018learning}.
For example, several works~\citep{cao2019generalization,arora2019finegrained,wei2022more} have shown that the spectrum of the kernel can bound the generalization ability of the trained network, giving insight into why few-shot fine-tuning does not result in catastrophic overfitting.
Our experiments show some tasks do not induce kernel behavior during FT, suggesting that future theoretical analysis of FT needs to account for the downstream task and choice of prompt.

\section*{Acknowledgements}
We thank Tianyu Gao, Wei Hu, Jason Lee, Kaifeng Lyu, Abhishek Panigrahi, Nikunj Saunshi, Mengzhou Xia, and Greg Yang for their helpful comments and discussion.
This work is funded by NSF, ONR, Simons Foundation, DARPA and SRC. 
  
\newpage
\bibliography{bibliography}

\begin{thebibliography}{77}
\providecommand{\natexlab}[1]{#1}
\providecommand{\url}[1]{\texttt{#1}}
\expandafter\ifx\csname urlstyle\endcsname\relax
  \providecommand{\doi}[1]{doi: #1}\else
  \providecommand{\doi}{doi: \begingroup \urlstyle{rm}\Url}\fi

\bibitem[Abnar et~al.(2021)Abnar, Dehghani, Neyshabur, and
  Sedghi]{abnar2021exploring}
Abnar, S., Dehghani, M., Neyshabur, B., and Sedghi, H.
\newblock Exploring the limits of large scale pre-training, 2021.
\newblock URL \url{https://arxiv.org/abs/2110.02095}.

\bibitem[Achille et~al.(2021)Achille, Golatkar, Ravichandran, Polito, and
  Soatto]{achille2021lqf}
Achille, A., Golatkar, A., Ravichandran, A., Polito, M., and Soatto, S.
\newblock Lqf: Linear quadratic fine-tuning.
\newblock In \emph{Proceedings of the IEEE/CVF Conference on Computer Vision
  and Pattern Recognition}, pp.\  15729--15739, 2021.

\bibitem[Aghajanyan et~al.(2021)Aghajanyan, Gupta, and
  Zettlemoyer]{aghajanyan2021intrinsic}
Aghajanyan, A., Gupta, S., and Zettlemoyer, L.
\newblock Intrinsic dimensionality explains the effectiveness of language model
  fine-tuning.
\newblock In \emph{Proceedings of the 59th Annual Meeting of the Association
  for Computational Linguistics and the 11th International Joint Conference on
  Natural Language Processing (Volume 1: Long Papers)}, pp.\  7319--7328,
  Online, August 2021. Association for Computational Linguistics.
\newblock \doi{10.18653/v1/2021.acl-long.568}.
\newblock URL \url{https://aclanthology.org/2021.acl-long.568}.

\bibitem[Allen-Zhu et~al.(2019{\natexlab{a}})Allen-Zhu, Li, and
  Liang]{allenzhu2018learning}
Allen-Zhu, Z., Li, Y., and Liang, Y.
\newblock Learning and generalization in overparameterized neural networks,
  going beyond two layers.
\newblock In \emph{Advances in Neural Information Processing Systems},
  volume~32. Curran Associates, Inc., 2019{\natexlab{a}}.
\newblock URL
  \url{https://proceedings.neurips.cc/paper/2019/file/62dad6e273d32235ae02b7d321578ee8-Paper.pdf}.

\bibitem[Allen-Zhu et~al.(2019{\natexlab{b}})Allen-Zhu, Li, and
  Song]{allenzhu2018convergence}
Allen-Zhu, Z., Li, Y., and Song, Z.
\newblock A convergence theory for deep learning via over-parameterization.
\newblock In \emph{Proceedings of the 36th International Conference on Machine
  Learning}, volume~97 of \emph{Proceedings of Machine Learning Research}, pp.\
   242--252. PMLR, 09--15 Jun 2019{\natexlab{b}}.
\newblock URL \url{https://proceedings.mlr.press/v97/allen-zhu19a.html}.

\bibitem[Arora et~al.(2019{\natexlab{a}})Arora, Du, Hu, Li, and
  Wang]{arora2019finegrained}
Arora, S., Du, S., Hu, W., Li, Z., and Wang, R.
\newblock Fine-grained analysis of optimization and generalization for
  overparameterized two-layer neural networks.
\newblock In \emph{Proceedings of the 36th International Conference on Machine
  Learning}, volume~97 of \emph{Proceedings of Machine Learning Research}, pp.\
   322--332. PMLR, 09--15 Jun 2019{\natexlab{a}}.
\newblock URL \url{https://proceedings.mlr.press/v97/arora19a.html}.

\bibitem[Arora et~al.(2019{\natexlab{b}})Arora, Du, Hu, Li, Salakhutdinov, and
  Wang]{arora2019exact}
Arora, S., Du, S.~S., Hu, W., Li, Z., Salakhutdinov, R.~R., and Wang, R.
\newblock On exact computation with an infinitely wide neural net.
\newblock In \emph{Advances in Neural Information Processing Systems},
  volume~32. Curran Associates, Inc., 2019{\natexlab{b}}.
\newblock URL
  \url{https://proceedings.neurips.cc/paper/2019/file/dbc4d84bfcfe2284ba11beffb853a8c4-Paper.pdf}.

\bibitem[Arora et~al.(2020)Arora, Du, Li, Salakhutdinov, Wang, and
  Yu]{arora2019harnessing}
Arora, S., Du, S.~S., Li, Z., Salakhutdinov, R., Wang, R., and Yu, D.
\newblock Harnessing the power of infinitely wide deep nets on small-data
  tasks.
\newblock In \emph{International Conference on Learning Representations}, 2020.
\newblock URL \url{https://openreview.net/forum?id=rkl8sJBYvH}.

\bibitem[Bar~Haim et~al.(2006)Bar~Haim, Dagan, Dolan, Ferro, Giampiccolo,
  Magnini, and Szpektor]{bar2006second}
Bar~Haim, R., Dagan, I., Dolan, B., Ferro, L., Giampiccolo, D., Magnini, B.,
  and Szpektor, I.
\newblock The second {PASCAL} recognising textual entailment challenge.
\newblock 2006.
\newblock URL
  \url{https://citeseerx.ist.psu.edu/viewdoc/download?doi=10.1.1.60.8552&rep=rep1&type=pdf}.

\bibitem[Ben~Zaken et~al.(2022)Ben~Zaken, Goldberg, and
  Ravfogel]{zaken2021bitfit}
Ben~Zaken, E., Goldberg, Y., and Ravfogel, S.
\newblock {B}it{F}it: Simple parameter-efficient fine-tuning for
  transformer-based masked language-models.
\newblock In \emph{Proceedings of the 60th Annual Meeting of the Association
  for Computational Linguistics (Volume 2: Short Papers)}, pp.\  1--9, Dublin,
  Ireland, May 2022. Association for Computational Linguistics.
\newblock \doi{10.18653/v1/2022.acl-short.1}.
\newblock URL \url{https://aclanthology.org/2022.acl-short.1}.

\bibitem[Bentivogli et~al.(2009)Bentivogli, Clark, Dagan, and
  Giampiccolo]{bentivogli2009fifth_rte4}
Bentivogli, L., Clark, P., Dagan, I., and Giampiccolo, D.
\newblock The fifth {PASCAL} recognizing textual entailment challenge.
\newblock In \emph{TAC}, 2009.
\newblock URL
  \url{https://citeseerx.ist.psu.edu/viewdoc/download?doi=10.1.1.232.1231&rep=rep1&type=pdf}.

\bibitem[Bowman et~al.(2015)Bowman, Angeli, Potts, and
  Manning]{bowman2015large}
Bowman, S.~R., Angeli, G., Potts, C., and Manning, C.~D.
\newblock A large annotated corpus for learning natural language inference.
\newblock In \emph{Proceedings of the 2015 Conference on Empirical Methods in
  Natural Language Processing}, pp.\  632--642, Lisbon, Portugal, September
  2015. Association for Computational Linguistics.
\newblock \doi{10.18653/v1/D15-1075}.
\newblock URL \url{https://aclanthology.org/D15-1075}.

\bibitem[Cao \& Gu(2019)Cao and Gu]{cao2019generalization}
Cao, Y. and Gu, Q.
\newblock Generalization bounds of stochastic gradient descent for wide and
  deep neural networks.
\newblock \emph{Advances in neural information processing systems}, 32, 2019.

\bibitem[Chen et~al.(2022)Chen, Liang, Huang, Real, Liu, Wang, Hsieh, Lu, and
  Le]{chen2022evolved}
Chen, X., Liang, C., Huang, D., Real, E., Liu, Y., Wang, K., Hsieh, C.-J., Lu,
  Y., and Le, Q.~V.
\newblock Evolved optimizer for vision.
\newblock In \emph{First Conference on Automated Machine Learning
  (Late-Breaking Workshop)}, 2022.
\newblock URL \url{https://openreview.net/forum?id=jK_eS5BxOuu}.

\bibitem[Chua et~al.(2021)Chua, Lei, and Lee]{chua2021how}
Chua, K., Lei, Q., and Lee, J.~D.
\newblock How fine-tuning allows for effective meta-learning.
\newblock In Ranzato, M., Beygelzimer, A., Dauphin, Y., Liang, P., and Vaughan,
  J.~W. (eds.), \emph{Advances in Neural Information Processing Systems},
  volume~34, pp.\  8871--8884. Curran Associates, Inc., 2021.
\newblock URL
  \url{https://proceedings.neurips.cc/paper/2021/file/4a533591763dfa743a13affab1a85793-Paper.pdf}.

\bibitem[Clark et~al.(2020)Clark, Luong, Le, and Manning]{clark2020electra}
Clark, K., Luong, M.-T., Le, Q.~V., and Manning, C.~D.
\newblock Electra: Pre-training text encoders as discriminators rather than
  generators.
\newblock In \emph{International Conference on Learning Representations}, 2020.
\newblock URL \url{https://openreview.net/forum?id=r1xMH1BtvB}.

\bibitem[Dagan et~al.(2005)Dagan, Glickman, and Magnini]{dagan2005pascal_rte1}
Dagan, I., Glickman, O., and Magnini, B.
\newblock The {PASCAL} recognising textual entailment challenge.
\newblock In \emph{the First International Conference on Machine Learning
  Challenges: Evaluating Predictive Uncertainty Visual Object Classification,
  and Recognizing Textual Entailment}, 2005.
\newblock URL
  \url{https://kdd.cs.ksu.edu/Courses/Fall-2008/CIS798/Handouts/06-dagan05pascal.pdf}.

\bibitem[Devlin et~al.(2019)Devlin, Chang, Lee, and Toutanova]{devlin2018bert}
Devlin, J., Chang, M.-W., Lee, K., and Toutanova, K.
\newblock Bert: Pre-training of deep bidirectional transformers for language
  understanding.
\newblock In \emph{North American Chapter of the Association for Computational
  Linguistics (NAACL)}, pp.\  4171--4186, 2019.

\bibitem[Dolan \& Brockett(2005)Dolan and
  Brockett]{dolan2005automatically_mrpc}
Dolan, W.~B. and Brockett, C.
\newblock Automatically constructing a corpus of sentential paraphrases.
\newblock In \emph{the Third International Workshop on Paraphrasing
  ({IWP}2005)}, 2005.
\newblock URL \url{https://aclanthology.org/I05-5002.pdf}.

\bibitem[Du et~al.(2019{\natexlab{a}})Du, Lee, Li, Wang, and
  Zhai]{du2018gradientdescent}
Du, S., Lee, J., Li, H., Wang, L., and Zhai, X.
\newblock Gradient descent finds global minima of deep neural networks.
\newblock In \emph{Proceedings of the 36th International Conference on Machine
  Learning}, volume~97 of \emph{Proceedings of Machine Learning Research}, pp.\
   1675--1685. PMLR, 09--15 Jun 2019{\natexlab{a}}.
\newblock URL \url{https://proceedings.mlr.press/v97/du19c.html}.

\bibitem[Du et~al.(2019{\natexlab{b}})Du, Zhai, Poczos, and
  Singh]{du2018gradient}
Du, S.~S., Zhai, X., Poczos, B., and Singh, A.
\newblock Gradient descent provably optimizes over-parameterized neural
  networks.
\newblock In \emph{International Conference on Learning Representations},
  2019{\natexlab{b}}.
\newblock URL \url{https://openreview.net/forum?id=S1eK3i09YQ}.

\bibitem[Du et~al.(2021)Du, Hu, Kakade, Lee, and Lei]{du2020fewshot}
Du, S.~S., Hu, W., Kakade, S.~M., Lee, J.~D., and Lei, Q.
\newblock Few-shot learning via learning the representation, provably.
\newblock In \emph{International Conference on Learning Representations}, 2021.
\newblock URL \url{https://openreview.net/forum?id=pW2Q2xLwIMD}.

\bibitem[Gao et~al.(2021)Gao, Fisch, and Chen]{gao2020making}
Gao, T., Fisch, A., and Chen, D.
\newblock Making pre-trained language models better few-shot learners.
\newblock In \emph{Association for Computational Linguistics (ACL)}, pp.\
  3816--3830, 2021.

\bibitem[Giampiccolo et~al.(2007)Giampiccolo, Magnini, Dagan, and
  Dolan]{giampiccolo2007third_rte3}
Giampiccolo, D., Magnini, B., Dagan, I., and Dolan, B.
\newblock The third {PASCAL} recognizing textual entailment challenge.
\newblock In \emph{the {ACL}-{PASCAL} Workshop on Textual Entailment and
  Paraphrasing}, 2007.
\newblock URL \url{https://aclanthology.org/W07-1401.pdf}.

\bibitem[He \& Zou(2021)He and Zou]{functorch2021}
He, H. and Zou, R.
\newblock functorch: Jax-like composable function transforms for pytorch.
\newblock \url{https://github.com/pytorch/functorch}, 2021.

\bibitem[He et~al.(2022)He, He, Shi, Huang, and Suykens]{he2022learning}
He, M., He, F., Shi, L., Huang, X., and Suykens, J. A.~K.
\newblock Learning with asymmetric kernels: Least squares and feature
  interpretation, 2022.
\newblock URL \url{https://arxiv.org/abs/2202.01397}.

\bibitem[Hu et~al.(2021)Hu, Shen, Wallis, Allen-Zhu, Li, Wang, Wang, and
  Chen]{hu2021lora}
Hu, E.~J., Shen, Y., Wallis, P., Allen-Zhu, Z., Li, Y., Wang, S., Wang, L., and
  Chen, W.
\newblock Lora: Low-rank adaptation of large language models, 2021.
\newblock URL \url{https://arxiv.org/abs/2106.09685}.

\bibitem[Hu \& Liu(2004)Hu and Liu]{hu2004mining_cr}
Hu, M. and Liu, B.
\newblock Mining and summarizing customer reviews.
\newblock In \emph{ACM SIGKDD international conference on Knowledge discovery
  and data mining}, 2004.

\bibitem[Jacot et~al.(2018)Jacot, Gabriel, and Hongler]{jacot2018neural}
Jacot, A., Gabriel, F., and Hongler, C.
\newblock Neural tangent kernel: Convergence and generalization in neural
  networks.
\newblock In \emph{Advances in Neural Information Processing Systems},
  volume~31. Curran Associates, Inc., 2018.
\newblock URL
  \url{https://proceedings.neurips.cc/paper/2018/file/5a4be1fa34e62bb8a6ec6b91d2462f5a-Paper.pdf}.

\bibitem[Johnson(1984)]{johnson1984extensions}
Johnson, W.~B.
\newblock Extensions of lipschitz mappings into a hilbert space.
\newblock \emph{Contemp. Math.}, 26:\penalty0 189--206, 1984.

\bibitem[Joshi et~al.(2020)Joshi, Chen, Liu, Weld, Zettlemoyer, and
  Levy]{joshi2019spanbert}
Joshi, M., Chen, D., Liu, Y., Weld, D.~S., Zettlemoyer, L., and Levy, O.
\newblock {SpanBERT}: Improving pre-training by representing and predicting
  spans.
\newblock \emph{Transactions of the Association of Computational Linguistics
  (TACL)}, 2020.

\bibitem[Lee et~al.(2021)Lee, Lei, Saunshi, and ZHUO]{lee2020predicting}
Lee, J.~D., Lei, Q., Saunshi, N., and ZHUO, J.
\newblock Predicting what you already know helps: Provable self-supervised
  learning.
\newblock In \emph{Advances in Neural Information Processing Systems},
  volume~34, pp.\  309--323. Curran Associates, Inc., 2021.
\newblock URL
  \url{https://proceedings.neurips.cc/paper/2021/file/02e656adee09f8394b402d9958389b7d-Paper.pdf}.

\bibitem[Lester et~al.(2021)Lester, Al-Rfou, and Constant]{lester2021power}
Lester, B., Al-Rfou, R., and Constant, N.
\newblock The power of scale for parameter-efficient prompt tuning.
\newblock In \emph{Empirical Methods in Natural Language Processing (EMNLP)},
  pp.\  3045--3059, 2021.

\bibitem[Li et~al.(2018)Li, Farkhoor, Liu, and Yosinski]{li2018measuring}
Li, C., Farkhoor, H., Liu, R., and Yosinski, J.
\newblock Measuring the intrinsic dimension of objective landscapes.
\newblock In \emph{International Conference on Learning Representations}, 2018.
\newblock URL \url{https://openreview.net/forum?id=ryup8-WCW}.

\bibitem[Li \& Liang(2021)Li and Liang]{li2021prefix}
Li, X.~L. and Liang, P.
\newblock Prefix-tuning: Optimizing continuous prompts for generation.
\newblock In \emph{Proceedings of the 59th Annual Meeting of the Association
  for Computational Linguistics and the 11th International Joint Conference on
  Natural Language Processing (Volume 1: Long Papers)}, pp.\  4582--4597,
  Online, August 2021. Association for Computational Linguistics.
\newblock \doi{10.18653/v1/2021.acl-long.353}.
\newblock URL \url{https://aclanthology.org/2021.acl-long.353}.

\bibitem[Li \& Liang(2018)Li and Liang]{li2018learning}
Li, Y. and Liang, Y.
\newblock Learning overparameterized neural networks via stochastic gradient
  descent on structured data.
\newblock In \emph{Advances in Neural Information Processing Systems},
  volume~31. Curran Associates, Inc., 2018.
\newblock URL
  \url{https://proceedings.neurips.cc/paper/2018/file/54fe976ba170c19ebae453679b362263-Paper.pdf}.

\bibitem[Li et~al.(2022)Li, Bhojanapalli, Zaheer, Reddi, and
  Kumar]{li2022robust}
Li, Z., Bhojanapalli, S., Zaheer, M., Reddi, S., and Kumar, S.
\newblock Robust training of neural networks using scale invariant
  architectures.
\newblock In \emph{Proceedings of the 39th International Conference on Machine
  Learning}, volume 162 of \emph{Proceedings of Machine Learning Research},
  pp.\  12656--12684. PMLR, 17--23 Jul 2022.
\newblock URL \url{https://proceedings.mlr.press/v162/li22b.html}.

\bibitem[Littwin \& Yang(2023)Littwin and Yang]{littwin2023adaptive}
Littwin, E. and Yang, G.
\newblock Adaptive optimization in the \${\textbackslash}infty\$-width limit.
\newblock In \emph{The Eleventh International Conference on Learning
  Representations}, 2023.
\newblock URL \url{https://openreview.net/forum?id=zgVDqw9ZUES}.

\bibitem[Liu et~al.(2020{\natexlab{a}})Liu, Liu, Gao, Chen, and
  Han]{liu2020understanding}
Liu, L., Liu, X., Gao, J., Chen, W., and Han, J.
\newblock Understanding the difficulty of training transformers.
\newblock In \emph{Proceedings of the 2020 Conference on Empirical Methods in
  Natural Language Processing (EMNLP)}, pp.\  5747--5763, Online, November
  2020{\natexlab{a}}. Association for Computational Linguistics.
\newblock \doi{10.18653/v1/2020.emnlp-main.463}.
\newblock URL \url{https://aclanthology.org/2020.emnlp-main.463}.

\bibitem[Liu et~al.(2022)Liu, Yuan, Fu, Jiang, Hayashi, and
  Neubig]{liu2021pretrain}
Liu, P., Yuan, W., Fu, J., Jiang, Z., Hayashi, H., and Neubig, G.
\newblock Pre-train, prompt, and predict: A systematic survey of prompting
  methods in natural language processing.
\newblock \emph{ACM Comput. Surv.}, aug 2022.
\newblock ISSN 0360-0300.
\newblock \doi{10.1145/3560815}.
\newblock URL \url{https://doi.org/10.1145/3560815}.
\newblock Just Accepted.

\bibitem[Liu et~al.(2020{\natexlab{b}})Liu, Ott, Goyal, Du, Joshi, Chen, Levy,
  Lewis, Zettlemoyer, and Stoyanov]{liu2019roberta}
Liu, Y., Ott, M., Goyal, N., Du, J., Joshi, M., Chen, D., Levy, O., Lewis, M.,
  Zettlemoyer, L., and Stoyanov, V.
\newblock Ro{\{}bert{\}}a: A robustly optimized {\{}bert{\}} pretraining
  approach, 2020{\natexlab{b}}.
\newblock URL \url{https://openreview.net/forum?id=SyxS0T4tvS}.

\bibitem[Logan~IV et~al.(2022)Logan~IV, Balazevic, Wallace, Petroni, Singh, and
  Riedel]{logan2022cutting}
Logan~IV, R., Balazevic, I., Wallace, E., Petroni, F., Singh, S., and Riedel,
  S.
\newblock Cutting down on prompts and parameters: Simple few-shot learning with
  language models.
\newblock In \emph{Findings of the Association for Computational Linguistics:
  ACL 2022}, pp.\  2824--2835, Dublin, Ireland, May 2022. Association for
  Computational Linguistics.
\newblock \doi{10.18653/v1/2022.findings-acl.222}.
\newblock URL \url{https://aclanthology.org/2022.findings-acl.222}.

\bibitem[Ma et~al.(2022)Ma, Wu, and E]{ma2020qualitative}
Ma, C., Wu, L., and E, W.
\newblock A qualitative study of the dynamic behavior for adaptive gradient
  algorithms.
\newblock In \emph{Proceedings of the 2nd Mathematical and Scientific Machine
  Learning Conference}, volume 145 of \emph{Proceedings of Machine Learning
  Research}, pp.\  671--692. PMLR, 16--19 Aug 2022.
\newblock URL \url{https://proceedings.mlr.press/v145/ma22a.html}.

\bibitem[Maddox et~al.(2021)Maddox, Tang, Moreno, Gordon~Wilson, and
  Damianou]{maddox2021fast}
Maddox, W., Tang, S., Moreno, P., Gordon~Wilson, A., and Damianou, A.
\newblock Fast adaptation with linearized neural networks.
\newblock In Banerjee, A. and Fukumizu, K. (eds.), \emph{Proceedings of The
  24th International Conference on Artificial Intelligence and Statistics},
  volume 130 of \emph{Proceedings of Machine Learning Research}, pp.\
  2737--2745. PMLR, 13--15 Apr 2021.
\newblock URL \url{https://proceedings.mlr.press/v130/maddox21a.html}.

\bibitem[Malladi et~al.(2022)Malladi, Lyu, Panigrahi, and
  Arora]{malladi2022sdes}
Malladi, S., Lyu, K., Panigrahi, A., and Arora, S.
\newblock On the sdes and scaling rules for adaptive gradient algorithms, 2022.
\newblock URL \url{https://arxiv.org/abs/2205.10287}.

\bibitem[Mu et~al.(2020)Mu, Liang, and Li]{mu2020gradients}
Mu, F., Liang, Y., and Li, Y.
\newblock Gradients as features for deep representation learning.
\newblock In \emph{International Conference on Learning Representations}, 2020.
\newblock URL \url{https://openreview.net/forum?id=BkeoaeHKDS}.

\bibitem[Novak et~al.(2022)Novak, Sohl-Dickstein, and
  Schoenholz]{novak2022fast}
Novak, R., Sohl-Dickstein, J., and Schoenholz, S.~S.
\newblock Fast finite width neural tangent kernel.
\newblock In \emph{Proceedings of the 39th International Conference on Machine
  Learning}, volume 162 of \emph{Proceedings of Machine Learning Research},
  pp.\  17018--17044. PMLR, 17--23 Jul 2022.
\newblock URL \url{https://proceedings.mlr.press/v162/novak22a.html}.

\bibitem[Pang \& Lee(2004)Pang and Lee]{pang2004sentimental_subj}
Pang, B. and Lee, L.
\newblock A sentimental education: Sentiment analysis using subjectivity
  summarization based on minimum cuts.
\newblock In \emph{Association for Computational Linguistics (ACL)}, 2004.

\bibitem[Pang \& Lee(2005)Pang and Lee]{pang2005seeing_mr}
Pang, B. and Lee, L.
\newblock Seeing stars: Exploiting class relationships for sentiment
  categorization with respect to rating scales.
\newblock In \emph{Association for Computational Linguistics (ACL)}, 2005.

\bibitem[Radford et~al.(2018)Radford, Narasimhan, Salimans, Sutskever,
  et~al.]{radford2018improving}
Radford, A., Narasimhan, K., Salimans, T., Sutskever, I., et~al.
\newblock Improving language understanding by generative pre-training.
\newblock 2018.

\bibitem[Raffel et~al.(2020)Raffel, Shazeer, Roberts, Lee, Narang, Matena,
  Zhou, Li, Liu, et~al.]{raffel2020exploring}
Raffel, C., Shazeer, N., Roberts, A., Lee, K., Narang, S., Matena, M., Zhou,
  Y., Li, W., Liu, P.~J., et~al.
\newblock Exploring the limits of transfer learning with a unified text-to-text
  transformer.
\newblock \emph{J. Mach. Learn. Res.}, 21\penalty0 (140):\penalty0 1--67, 2020.

\bibitem[Rajpurkar et~al.(2016)Rajpurkar, Zhang, Lopyrev, and
  Liang]{rajpurkar2016squad}
Rajpurkar, P., Zhang, J., Lopyrev, K., and Liang, P.
\newblock {SQ}u{AD}: 100,000+ questions for machine comprehension of text.
\newblock In \emph{Empirical Methods in Natural Language Processing (EMNLP)},
  2016.
\newblock URL \url{https://aclanthology.org/D16-1264/}.

\bibitem[Saunshi et~al.(2019)Saunshi, Plevrakis, Arora, Khodak, and
  Khandeparkar]{arora2019theoretical}
Saunshi, N., Plevrakis, O., Arora, S., Khodak, M., and Khandeparkar, H.
\newblock A theoretical analysis of contrastive unsupervised representation
  learning.
\newblock In \emph{Proceedings of the 36th International Conference on Machine
  Learning}, volume~97 of \emph{Proceedings of Machine Learning Research}, pp.\
   5628--5637. PMLR, 09--15 Jun 2019.
\newblock URL \url{https://proceedings.mlr.press/v97/saunshi19a.html}.

\bibitem[Saunshi et~al.(2021)Saunshi, Malladi, and
  Arora]{saunshi2020mathematical}
Saunshi, N., Malladi, S., and Arora, S.
\newblock A mathematical exploration of why language models help solve
  downstream tasks.
\newblock In \emph{International Conference on Learning Representations}, 2021.
\newblock URL \url{https://openreview.net/forum?id=vVjIW3sEc1s}.

\bibitem[Saunshi et~al.(2022)Saunshi, Ash, Goel, Misra, Zhang, Arora, Kakade,
  and Krishnamurthy]{saunshi2022understanding}
Saunshi, N., Ash, J., Goel, S., Misra, D., Zhang, C., Arora, S., Kakade, S.,
  and Krishnamurthy, A.
\newblock Understanding contrastive learning requires incorporating inductive
  biases.
\newblock In \emph{Proceedings of the 39th International Conference on Machine
  Learning}, volume 162 of \emph{Proceedings of Machine Learning Research},
  pp.\  19250--19286. PMLR, 17--23 Jul 2022.
\newblock URL \url{https://proceedings.mlr.press/v162/saunshi22a.html}.

\bibitem[Schick \& Sch{\"u}tze(2021)Schick and
  Sch{\"u}tze]{schick2020exploiting}
Schick, T. and Sch{\"u}tze, H.
\newblock Exploiting cloze-questions for few-shot text classification and
  natural language inference.
\newblock In \emph{European Chapter of the Association for Computational
  Linguistics (EACL)}, pp.\  255--269, 2021.

\bibitem[Socher et~al.(2013)Socher, Perelygin, Wu, Chuang, Manning, Ng, and
  Potts]{socher2013recursive_sst-2}
Socher, R., Perelygin, A., Wu, J., Chuang, J., Manning, C.~D., Ng, A., and
  Potts, C.
\newblock Recursive deep models for semantic compositionality over a sentiment
  treebank.
\newblock In \emph{Empirical Methods in Natural Language Processing (EMNLP)},
  2013.
\newblock URL \url{https://aclanthology.org/D13-1170.pdf}.

\bibitem[Tosh et~al.(2021{\natexlab{a}})Tosh, Krishnamurthy, and
  Hsu]{tosh2020contrastive}
Tosh, C., Krishnamurthy, A., and Hsu, D.
\newblock Contrastive learning, multi-view redundancy, and linear models.
\newblock In \emph{Proceedings of the 32nd International Conference on
  Algorithmic Learning Theory}, volume 132 of \emph{Proceedings of Machine
  Learning Research}, pp.\  1179--1206. PMLR, 16--19 Mar 2021{\natexlab{a}}.
\newblock URL \url{https://proceedings.mlr.press/v132/tosh21a.html}.

\bibitem[Tosh et~al.(2021{\natexlab{b}})Tosh, Krishnamurthy, and
  Hsu]{tosh2020contrastiveestimation}
Tosh, C., Krishnamurthy, A., and Hsu, D.
\newblock Contrastive estimation reveals topic posterior information to linear
  models.
\newblock \emph{Journal of Machine Learning Research}, 22\penalty0
  (281):\penalty0 1--31, 2021{\natexlab{b}}.
\newblock URL \url{http://jmlr.org/papers/v22/21-0089.html}.

\bibitem[Tripuraneni et~al.(2020)Tripuraneni, Jordan, and
  Jin]{tripuraneni2020theory}
Tripuraneni, N., Jordan, M., and Jin, C.
\newblock On the theory of transfer learning: The importance of task diversity.
\newblock In \emph{Advances in Neural Information Processing Systems},
  volume~33, pp.\  7852--7862. Curran Associates, Inc., 2020.
\newblock URL
  \url{https://proceedings.neurips.cc/paper/2020/file/59587bffec1c7846f3e34230141556ae-Paper.pdf}.

\bibitem[Tsai et~al.(2021)Tsai, Wu, Salakhutdinov, and
  Morency]{tsai2020selfsupervised}
Tsai, Y.-H.~H., Wu, Y., Salakhutdinov, R., and Morency, L.-P.
\newblock Self-supervised learning from a multi-view perspective.
\newblock In \emph{International Conference on Learning Representations}, 2021.
\newblock URL \url{https://openreview.net/forum?id=-bdp_8Itjwp}.

\bibitem[Voorhees \& Tice(2000)Voorhees and Tice]{voorhees2000building_trec}
Voorhees, E.~M. and Tice, D.~M.
\newblock Building a question answering test collection.
\newblock In \emph{the 23rd annual international ACM SIGIR conference on
  Research and development in information retrieval}, 2000.

\bibitem[Wang et~al.(2019)Wang, Singh, Michael, Hill, Levy, and
  Bowman]{wang2019glue}
Wang, A., Singh, A., Michael, J., Hill, F., Levy, O., and Bowman, S.~R.
\newblock {GLUE}: A multi-task benchmark and analysis platform for natural
  language understanding.
\newblock In \emph{International Conference on Learning Representations
  (ICLR)}, 2019.
\newblock URL \url{https://openreview.net/forum?id=rJ4km2R5t7}.

\bibitem[Wei et~al.(2022)Wei, Hu, and Steinhardt]{wei2022more}
Wei, A., Hu, W., and Steinhardt, J.
\newblock More than a toy: Random matrix models predict how real-world neural
  representations generalize.
\newblock In \emph{Proceedings of the 39th International Conference on Machine
  Learning}, volume 162, pp.\  23549--23588. PMLR, 17--23 Jul 2022.
\newblock URL \url{https://proceedings.mlr.press/v162/wei22a.html}.

\bibitem[Wiebe et~al.(2005)Wiebe, Wilson, and Cardie]{wiebe2005annotating_mpqa}
Wiebe, J., Wilson, T., and Cardie, C.
\newblock Annotating expressions of opinions and emotions in language.
\newblock \emph{Language resources and evaluation}, 39\penalty0 (2-3), 2005.

\bibitem[Williams et~al.(2018)Williams, Nangia, and
  Bowman]{williams2018broad_mnli}
Williams, A., Nangia, N., and Bowman, S.
\newblock A broad-coverage challenge corpus for sentence understanding through
  inference.
\newblock In \emph{North American Chapter of the Association for Computational
  Linguistics: Human Language Technologies (NAACL-HLT)}, 2018.
\newblock URL \url{https://aclanthology.org/N18-1101.pdf}.

\bibitem[Woodworth et~al.(2020)Woodworth, Gunasekar, Lee, Moroshko, Savarese,
  Golan, Soudry, and Srebro]{woodworth2019kernel}
Woodworth, B., Gunasekar, S., Lee, J.~D., Moroshko, E., Savarese, P., Golan,
  I., Soudry, D., and Srebro, N.
\newblock Kernel and rich regimes in overparametrized models.
\newblock In \emph{Proceedings of Thirty Third Conference on Learning Theory},
  volume 125 of \emph{Proceedings of Machine Learning Research}, pp.\
  3635--3673. PMLR, 09--12 Jul 2020.
\newblock URL \url{https://proceedings.mlr.press/v125/woodworth20a.html}.

\bibitem[Wu et~al.(2020)Wu, Zhang, and Ré]{wu2020understanding}
Wu, S., Zhang, H.~R., and Ré, C.
\newblock Understanding and improving information transfer in multi-task
  learning.
\newblock In \emph{International Conference on Learning Representations}, 2020.
\newblock URL \url{https://openreview.net/forum?id=SylzhkBtDB}.

\bibitem[Yang(2019)]{yang2019wide}
Yang, G.
\newblock Wide feedforward or recurrent neural networks of any architecture are
  gaussian processes.
\newblock \emph{Advances in Neural Information Processing Systems}, 32, 2019.

\bibitem[Yang(2020{\natexlab{a}})]{yang2020tensor2}
Yang, G.
\newblock Tensor programs ii: Neural tangent kernel for any architecture.
\newblock \emph{arXiv preprint arXiv:2006.14548}, 2020{\natexlab{a}}.

\bibitem[Yang(2020{\natexlab{b}})]{yang2020tensor3}
Yang, G.
\newblock Tensor programs iii: Neural matrix laws.
\newblock \emph{arXiv preprint arXiv:2009.10685}, 2020{\natexlab{b}}.

\bibitem[Yang \& Hu(2021)Yang and Hu]{yang2021tensor4}
Yang, G. and Hu, E.~J.
\newblock Tensor programs iv: Feature learning in infinite-width neural
  networks.
\newblock In \emph{International Conference on Machine Learning}, pp.\
  11727--11737. PMLR, 2021.

\bibitem[Yang \& Littwin(2021)Yang and Littwin]{yang2021tensor2b}
Yang, G. and Littwin, E.
\newblock Tensor programs iib: Architectural universality of neural tangent
  kernel training dynamics.
\newblock In \emph{International Conference on Machine Learning}, pp.\
  11762--11772. PMLR, 2021.

\bibitem[Yang et~al.(2022)Yang, Hu, Babuschkin, Sidor, Liu, Farhi, Ryder,
  Pachocki, Chen, and Gao]{yang2022tensor5}
Yang, G., Hu, E.~J., Babuschkin, I., Sidor, S., Liu, X., Farhi, D., Ryder, N.,
  Pachocki, J., Chen, W., and Gao, J.
\newblock Tensor programs v: Tuning large neural networks via zero-shot
  hyperparameter transfer.
\newblock \emph{arXiv preprint arXiv:2203.03466}, 2022.

\bibitem[Zhang et~al.(2020)Zhang, Karimireddy, Veit, Kim, Reddi, Kumar, and
  Sra]{zhang2019why}
Zhang, J., Karimireddy, S.~P., Veit, A., Kim, S., Reddi, S., Kumar, S., and
  Sra, S.
\newblock Why are adaptive methods good for attention models?
\newblock In \emph{Advances in Neural Information Processing Systems},
  volume~33, pp.\  15383--15393. Curran Associates, Inc., 2020.
\newblock URL
  \url{https://proceedings.neurips.cc/paper/2020/file/b05b57f6add810d3b7490866d74c0053-Paper.pdf}.

\bibitem[Zhang et~al.(2015)Zhang, Zhao, and LeCun]{zhang2015character_ag_news}
Zhang, X., Zhao, J., and LeCun, Y.
\newblock Character-level convolutional networks for text classification.
\newblock In Cortes, C., Lawrence, N., Lee, D., Sugiyama, M., and Garnett, R.
  (eds.), \emph{Advances in Neural Information Processing Systems}, volume~28.
  Curran Associates, Inc., 2015.
\newblock URL
  \url{https://proceedings.neurips.cc/paper/2015/file/250cf8b51c773f3f8dc8b4be867a9a02-Paper.pdf}.

\bibitem[Zou et~al.(2018)Zou, Cao, Zhou, and Gu]{zou2018stochastic}
Zou, D., Cao, Y., Zhou, D., and Gu, Q.
\newblock Stochastic gradient descent optimizes over-parameterized deep relu
  networks, 2018.
\newblock URL \url{https://arxiv.org/abs/1811.08888}.

\end{thebibliography}
\bibliographystyle{icml2023}

\newpage
\appendix
\onecolumn
\section{Experimental Details}\label{sec:app_experiments}

\subsection{Datasets and Prompts}
\label{sec:app_datasets}

\begin{table*}[h]
\centering
\resizebox{1\columnwidth}{!}{%
\begin{tabular}{llrrrll}
\toprule
 \tf{Dataset} & $C$ & \#\tf{Train} & \#\tf{Test} & \tf{Type} & \tf{Prompt} & \tf{Label words} \\
\bottomrule
 SST-2 & 2 & 67,349 & 872 & sentiment & {\sent} It was {\mask} . & \{great, terrible\} \\
 SST-5 & 5 & 8,544 & 1,000 & sentiment & {\sent} It was {\mask} . & \{great, good, okay, bad, terrible\} \\
 MR & 2  & 8,662& 1,000 & sentiment  & {\sent} It was {\mask} . & \{great, terrible\} \\
  CR & 2  & 3,175 & 500 & sentiment & {\sent} It was {\mask} . & \{great, terrible\}\\
 MPQA & 2  & 8,606 & 1,000 & opinion polarity & {\sent} It was {\mask} . & \{great, terrible\}\\
 Subj & 2  & 8,000 & 1,000 & subjectivity & {\sent} This is {\mask} . & \{subjective, objective\} \\
 TREC & 6  & 5,452 & 500 & question cls. & {\mask} : {\sent} & \{Description, Expression, Entity, \\
 & & & & & & Human, Location, Number\}\\
 AG News & 4 & 120,000 & 7,600 & news topic & {\sent} This article is about {\mask} news. & \{world, sports, business, tech\} \\

\midrule
  MNLI & 3  & 392,702 & 1,000 & NLI & {\firstsent} ? {\mask} , {\secondsent}  & \{Yes, Maybe, No\}\\
SNLI & 3  &  549,367 & 1,000 & NLI  & {\firstsent} ? {\mask} , {\secondsent} & \{Yes, Maybe, No\} \\
 QNLI & 2  & 104,743 & 1,000  & NLI & {\firstsent} ? {\mask} , {\secondsent} & \{Yes, No\} \\
 RTE & 2 & 2,490 & 277 & NLI & {\firstsent} ? {\mask} , {\secondsent} & \{Yes, No\}  \\
MRPC & 2   & 3,668 & 408 & paraphrase & {\firstsent} {\mask} , {\secondsent} & \{Yes, No\} \\
 QQP & 2 & 363,846 & 1,000 & paraphrase & {\firstsent} {\mask} , {\secondsent} & \{Yes, No\}  \\
\bottomrule
\end{tabular}
}
\caption{The statistics and prompts of the datasets we used in our experiments. The choices of prompts are adapted from \cite{gao2020making} and include a template and a set of label words that can fill in the \mask token. {\firstsent} and {\secondsent} refer to the first and the second (if any) input sentence.}
\label{tab:dataset_statistics}
\end{table*}

Table \ref{tab:dataset_statistics} shows the set of downstream tasks, which are adapted from \cite{gao2020making}.
We consider 8 single sentence classification datasets (SST-2~\citep{socher2013recursive_sst-2}, SST-5~\citep{socher2013recursive_sst-2}, MR~\citep{pang2005seeing_mr}, CR~\citep{hu2004mining_cr}, MPQA~\citep{wiebe2005annotating_mpqa}, Subj~\citep{pang2004sentimental_subj}, TREC~\citep{voorhees2000building_trec}, and AG News~\citep{zhang2015character_ag_news}), and 6 sentence pair datasets (MNLI~\citep{williams2018broad_mnli}, SNLI~\citep{bowman2015large}, QNLI~\citep{rajpurkar2016squad}, RTE~\citep{dagan2005pascal_rte1,bar2006second,giampiccolo2007third_rte3,bentivogli2009fifth_rte4}, MRPC~\citep{dolan2005automatically_mrpc} and QQP\footnote{\url{https://www.quora.com/q/quoradata/}}.
Our datasets represent 6/8 datasets of the GLUE benchmark \cite{wang2019glue} (SST-2, MNLI, QNLI, RTE, MRPC, QQP). 

In contrast to \cite{gao2020making}, we add AG News as an additional multi-label classification task, and make two modifications to the test sets. First, we split CR into 500 test examples and 3,175 training examples to ensure enough training examples for our $512$-shot experiments and secondly, we limit the test sizes to 1,000 examples to speed up kernel evaluations.

To generate $k$-shot few-shot datasets, the original training data is used to randomly sample $k$ examples per label for training and another, separate $k$ examples per label for the validation set. Unless otherwise stated, we usually run experiments over 5 seeds of few-shot data sets.
We directly use the  `manual' prompt templates and label words proposed by \cite{gao2020making}, which are reproduced in Table~\ref{tab:dataset_statistics}. We do include any demonstrations in our prompts.

\subsection{Computing the Kernel}
We use functorch \citep{functorch2021} to compute the eNTK for RoBERTa-base (125M parameters), using a mix of backward-mode auto-differentiation for computing the jacobians and forward-mode auto-differentiation for computing jacobian-vector products \citep{novak2022fast} .
Note that $\ksigngd$ cannot be computed via jacobian-vector products and requires substantially more memory and run-time in practice.

\subsection{Solving the Kernel}\label{sec:app_asym_solver}
In the standard NTK setting, the initial output of the model $f(\cdot;\theta_0)$ contains no information about solving the task, because $\theta_0$ is a random initiaization.
However, in the prompted FT setting, we expect the pre-trained model to be able to solve the downstream task well even before any fine-tuning occurs (see \Cref{tab:more_linearization_measurement}).
So, we add the pre-trained model's output to the output from the kernel.
Furthermore, we run a grid search over scaling the labels in order to take advantage of any pre-existing knowledge the model has about the downstream task. In particular, the kernel regression is based on the $\ell_2$ distance to the ground truth one-hot vector, but the pre-trained model outputs the logits which will be used for cross-entropy loss. Scaling the one-hot vector by $f_0$ helps align its scaling with the logits. Our hyperparameter grid for $f_0$ can be found in Table \ref{tab:hyperparameters}, where $\infty$ corresponds to not using the pre-trained model logits when solving the kernel.

\paragraph{Solving Multi-Class Tasks} There are several options for how to solve $C$-way classification tasks ($C> 2$). We perform the most general one, which scales with $C^2$. Each logit is treated as an independent output of the network, essentially scaling the size $N$ of the original dataset by a factor of $C$. With $CN$ examples, the kernel now has shape $CN\times CN$. The labels are also scaled up to treat the multi-class problem as many binary classification problems. Solving the multi-class task this way allows the kernel regression model to view relationships between different logits. 

\paragraph{Symmetric Kernel}
Given a symmetric kernel $\gK\in\sR^{N\times N}$, we solve the kernel regression problem. In particular, we use the representer theorem to write that the empirical risk minimizer of the loss can be expressed as a linear combination of the kernel features computed on the train set.
$$ h^*(\cdot) = \argmin_{h\in\gH_\gK} \frac{1}{N}\sum_{i=1}^N	\ell(h(x_i), y_i) \quad \leftrightarrow \quad h^*(\cdot) = \sum_{i=1}^N \alpha_i \gK(\cdot, x_i)$$
for a given loss function $\ell$.
The symmetric SignGD and SGD kernels train $\alpha_i$ via gradient descent to minimize a regularized logistic loss on the downstream task.
We search over a grid of regularization strengths chosen proportional to $\|\gK\|_{\text{op}}$, see Table \ref{tab:hyperparameters}.
For a test input $x$, the kernel outputs the prediction $h(x) = \sum_i \alpha_i \gK(x,x_i)$.

\paragraph{Asymmetric Kernel}
We write how to solve the kernel regression problem with an asymmetric kernel, developed in \cite{he2022learning}, here. Consider the augmented linear system: 
$$
\begin{bmatrix} I/\gamma & H \\ H^\top & I/\gamma\end{bmatrix}\begin{bmatrix}  \alpha \\ \beta\end{bmatrix} = \begin{bmatrix} 1 \\ 1 \end{bmatrix}
$$
where $H_{ij} = y_i\phi_s(x_i)^\top \phi_t(x_j) y_j$ with $\phi_s$ and $\phi_t$ as the two different feature maps and $y_i$ as the label for the $i$th example. In our setting, $\phi_s$ is the gradient of the datapoint, and $\phi_t$ is the sign of the gradient.
Define $\omega^*$ and $\nu^*$ as 
\begin{align*}
	\omega^* = \sum_i \beta_i^* y_i\phi_t(x_i) \\
	\nu^* = \sum_i \alpha_i^* y_i\phi_s(x_i)	
\end{align*}
Solving this system yields two discriminant functions:
\begin{align*}
	f_s(x) &= K(x,X)(\beta^*\odot Y) \\ %
	f_t(x) &= K(X,x)(\alpha^*\odot Y) %
\end{align*}

where $K(x_i,x_j) = \langle \phi_s(x_i), \phi_t(x_j)\rangle$.

We can thus create one discriminant function as $cf_s(x) + (1-c)f_t(x)$ where $c\in[0, 1]$ is some hyperparameter. When $\phi_s=\phi_t$, we see that $f_s=f_t$ and we reduce to the standard kernel problem (though with repeated equations). Note that per \citet{he2022learning}, this system is only meaningful in terms of stationary points when training $\alpha$ and $\beta$ using the least squares loss.

We now leverage some specific knowledge about the NTK setting. In particular, we know that we should only use $f_s$ as the predictor in order to correctly represent a new test input in the kernel analog for SignGD.

\paragraph{Hyperparameters and Implementation}

\begin{table*}[h]
\centering
\resizebox{0.65\columnwidth}{!}{%
\begin{tabular}{lrc}
\toprule
Experiment & Hyperparameters & Values \\
\midrule
SGD FT & Batch size & $\{ 2,4,8 \}$ $\times$ \\
& Learning rate & $\{1\mathrm{e}{-4}, 5\mathrm{e}{-4}, 1\mathrm{e}{-3}, 5\mathrm{e}{-3}, 1\mathrm{e}{-2} \}$ \\
\cmidrule{2-3}
SGD-LoRA FT & Batch size & $\{ 4, 16 \}$ $\times$ \\
& Learning rate & $\{1\mathrm{e}{-4}, 1\mathrm{e}{-3}, 1\mathrm{e}{-2}\}$ $\times$ \\
& ($r_{LoRA}$, $\alpha_{LoRA}$)& $\{(8, 16)\}$ \\
\midrule
Adam FT & Batch size & $\{ 2,4,8 \}$ $\times$ \\
& Learning rate & $\{1\mathrm{e}{-5}, 2\mathrm{e}{-5}, 5\mathrm{e}{-5}\}$ \\
\cmidrule{2-3}
Adam-LoRA FT & Batch size & $\{ 4,16 \}$ $\times$ \\
& Learning rate & $\{1\mathrm{e}{-5}, 4\mathrm{e}{-5}, 4\mathrm{e}{-4}\}$ $\times$ \\
& ($r_{LoRA}$, $\alpha_{LoRA}$)& $\{(8, 16)\}$ \\
\midrule
\midrule
$\ksgd$, $\ksigngd$ & Kernel regularization & $\{0, 0.001, 0.01, 0.1, 1\}$ $\times$ \\
& $f_0$ scaling & $\{10, 100, 1000, 10000, \infty\}$ \\
\cmidrule{2-3}
$\kasigngd$ &  Kernel regularization & $\{0, 0.001, 0.01, 0.1, 1\}$ $\times$ \\
& $f_0$ scaling & $\{10, 100, 1000, 10000, \infty\}$ $\times$ \\
& Kernel $\gamma$ & $\{0.01, 0.1, 1, 10\}$ $\times$ \\
& Kernel $c$ & $\{1\}$ \\
\bottomrule
\end{tabular}
}
\caption{The hyperparameter grids used in our experiments.}
\label{tab:hyperparameters}
\end{table*}

We follow \cite{gao2020making} in using the few-shot validation set to search over hyperparameters and finding the best hyperparameter per few-shot dataset.
We use value ranges given by \cite{gao2020making} and \cite{hu2021lora}, and search over a wider range of values for SGD. Table \ref{tab:hyperparameters} shows the hyperparameter grids for fine-tuning and the kernel method. We fine-tune without weight decay and a learning rate schedule with a linear decay and no warmup.

\citet{gao2020making} train for $1000$ steps in the 16-shot setting, and validate the performance every $100$ steps to take the best checkpoints. As we consider varying values of $k$, we use the formula of training for $32 k C$ steps and validating every $4 k C$ steps, where $C$ is the number of classes in the dataset. This gives a comparable number of training and validation steps for binary tasks in the 16-shot setting.

\section{Additional Experimental Results}
Tables \ref{tab:noprompt} and \ref{tab:more_linearization_measurement} contain the numerical results corresponding to Figures \ref{fig:prompt_vs_noprompt} and \ref{fig:linearization_figure} respectively, and also report results for $k=64$.
Table \ref{tab:linearization_kernel_distance} measures how well the Fixed Features property holds for different tasks. 
A smaller value suggests that the condition for kernel behavior (\Cref{def:kernel_regime}) is satisfied more strongly.

\begin{figure*}[t]
    \centering
    \begin{subfigure}[b]{\textwidth}
        \centering
        \caption{SGD-FT}
        \includegraphics[width=\textwidth]{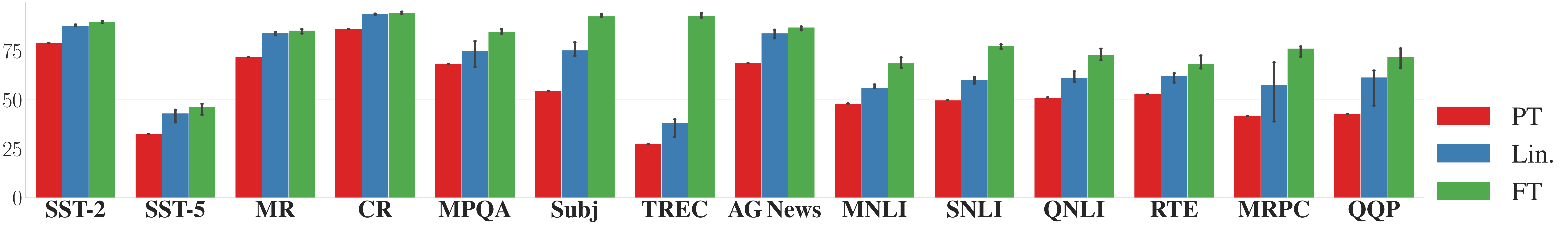}
    \end{subfigure}
    \begin{subfigure}[b]{\textwidth}
        \centering
        \caption{Adam-FT}
        \includegraphics[width=\textwidth]{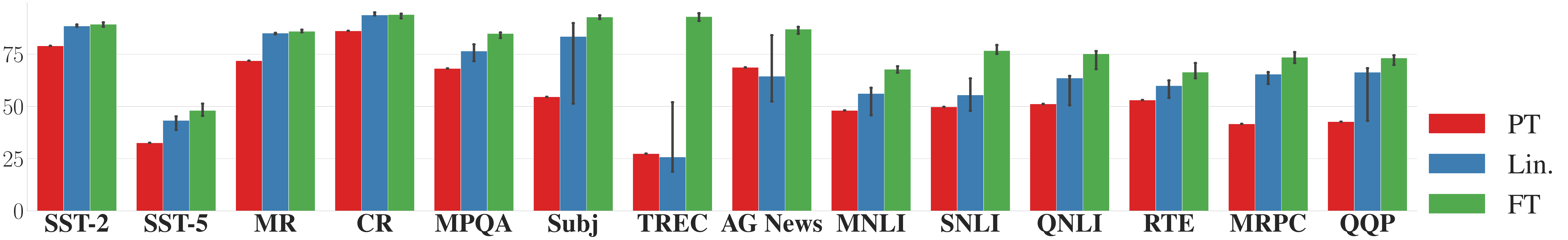}
        \label{fig:linearization_figure_adam}
    \end{subfigure}
    \caption{Accuracies of zero-shot pre-trained model (PT), linearized model (Lin., see \Cref{def:kernel_regime}) and fine-tuned model (FT). Tasks that induce the Linearization property of kernel behavior (\Cref{def:kernel_regime}) will show that Lin. performance recovers a substantial amount of the performance of SGD-FT and Adam-FT respectively. 
    We plot the median and range of the test accuracies across 5 seeds and data splits for $k=64$.
    }
\label{fig:linearization_figure}
\end{figure*}
\begin{table}[!h]
    \centering
    \begin{subtable}[h]{1.0\textwidth}
        \centering
        \resizebox{0.82\textwidth}{!}{
            \begin{tabular}{r|cc|cc|cc|cc}
            \toprule
              & \multicolumn{2}{c|}{\textbf{SST-2} } & \multicolumn{2}{c|}{\textbf{SST-5}} & \multicolumn{2}{c|}{\textbf{MR}} & \multicolumn{2}{c}{\textbf{CR} }  \\ 
                  $k$-shot        &  Lin. & FT & Lin. & FT & Lin. & FT &  Lin. & FT  \\
             \midrule
             0 & \multicolumn{2}{c|}{------ \textit{79.0} ------} & \multicolumn{2}{c|}{------ \textit{32.6} ------} & \multicolumn{2}{c|}{------ \textit{71.9} ------} & \multicolumn{2}{c}{------ \textit{86.2} ------} \\
            16                  & 87.5$_{(1.3)}$ & 88.3$_{(1.2)}$  & 41.8$_{(4.1)}$ & 45.4$_{(2.6)}$ & 84.3$_{(1.8)}$ & 81.3$_{(6.1)}$ & 93.3$_{(0.6)}$ & 93.0$_{(1.6)}$       \\
            64                  & 88.6$_{(0.4)}$ &  89.3$_{(0.7)}$ & 42.9$_{(2.2)}$ & 48.5$_{(2.0)}$ & 85.0$_{(0.2)}$ & 86.0$_{(0.4)}$     & 94.0$_{(0.5)}$ & 93.7$_{(0.8)}$    \\
            \midrule
            \midrule
            & \multicolumn{2}{c|}{\textbf{MQPA}} & \multicolumn{2}{c|}{\textbf{Subj}} & \multicolumn{2}{c|}{\textbf{TREC}} & \multicolumn{2}{c}{\textbf{AG News}} \\
            $k$-shot  & Lin. & FT & Lin. & FT & Lin. & FT & Lin. & FT \\
            \midrule
            0 & \multicolumn{2}{c|}{------ \textit{68.2} ------} & \multicolumn{2}{c|}{------ \textit{54.6} ------} & \multicolumn{2}{c|}{------ \textit{27.4} ------} & \multicolumn{2}{c}{------ \textit{68.7} ------} \\
            16 & 75.6$_{(3.1)}$ & 82.8$_{(2.2)}$     & 82.9$_{(4.7)}$  & 87.4$_{(2.1)}$              &  30.4$_{(7.2)}$ & 79.6$_{(6.1)}$  & 57.8$_{(18.3)}$ & 84.0$_{(1.6)}$    \\
            64 & 75.6$_{(2.3)}$ & 85.0$_{(0.2)}$     & 78.9$_{(14.0)}$ & 92.7$_{(0.6)}$ &  31.2$_{(13.0)}$ & 92.6$_{(1.3)}$  & 67.5$_{(12.2)}$ & 86.8$_{(1.1)}$ \\
            \bottomrule
        \end{tabular}
        }
        \caption{Single-sentence tasks. \vspace{1.0em}}
    \end{subtable}
        
    \begin{subtable}[h]{1.0\textwidth}
        \centering
        \resizebox{0.64\textwidth}{!}{
        \begin{tabular}{r|cc|cc|cc}
            \toprule
            & \multicolumn{2}{c|}{\textbf{MNLI}} & \multicolumn{2}{c|}{\textbf{SNLI}} & \multicolumn{2}{c}{\textbf{QNLI}} \\
            $k$-shot  & Lin. & FT & Lin. & FT & Lin. & FT  \\
            \midrule
            0 & \multicolumn{2}{c|}{------ \textit{48.1} ------} & \multicolumn{2}{c|}{------ \textit{49.8} ------} & \multicolumn{2}{c}{------ \textit{51.2} ------} \\
            16 & 43.6$_{(6.4)}$ & 56.8$_{(2.9)}$ &  47.2$_{(9.3)}$ & 64.6$_{(4.1)}$ & 57.5$_{(2.3)}$ & 63.1$_{(3.5)}$  \\
            64    & 55.1$_{(4.8)}$ & 67.9$_{(1.0)}$ &   56.9$_{(5.7)}$ & 76.9$_{(1.4)}$ & 60.4$_{(5.3)}$ & 74.2$_{(3.2)}$  \\
            \midrule
            \midrule
            & \multicolumn{2}{c|}{\textbf{RTE}} & \multicolumn{2}{c|}{\textbf{MRPC}} & \multicolumn{2}{c}{\textbf{QQP}} \\ 
            $k$-shot  & Lin. & FT & Lin. & FT & Lin. & FT  \\
            \midrule
            0 & \multicolumn{2}{c|}{------ \textit{53.1} ------} & \multicolumn{2}{c|}{------ \textit{41.7} ------} & \multicolumn{2}{c}{------ \textit{42.7} ------} \\
            16  &  55.4$_{(6.7)}$  & 57.6$_{(6.3)}$      & 57.7$_{(11.6)}$ & 68.9$_{(2.4)}$ & 57.5$_{(10.3)}$ & 61.7$_{(6.5)}$  \\
            64     &  59.6$_{(2.9)}$  & 67.3$_{(2.7)}$   &  64.2$_{(2.2)}$  & 73.8$_{(1.7)}$ & 61.7$_{(9.4)}$  & 72.7$_{(1.8)}$  \\
            \bottomrule
            \end{tabular}
        }
        \caption{Sentence-pair tasks.}
    \end{subtable}
    \caption{Accuracies of pre-trained model (0-shot), linearized model (Lin., see \Cref{def:kernel_regime}) and fine-tuned model (FT). Tasks that exhibit the Linearization property of kernel behavior (\Cref{def:kernel_regime}) during fine-tuning will show that Lin. performance recovers a substantial amount of the gain in performance achieved by performing fine-tuning with Adam. Accuracies are averaged across 5 fine-tuning seeds for each value of $k$ and measured on the test set. This table corresponds to the bar chart in \Cref{fig:linearization_figure_adam}.}
    \label{tab:more_linearization_measurement}
\end{table}

\begin{table}[!h]
\centering

\begin{subtable}[h]{1.0\textwidth}
    \centering
\resizebox{0.84\textwidth}{!}{
\begin{tabular}{rllcccccccccccccc}
\toprule
$k$-shot & Prompt & Method & \textbf{SST-2} & \textbf{SST-5} & \textbf{MR} & \textbf{CR} & \textbf{MPQA} & \textbf{Subj} & \textbf{TREC} & \textbf{AG News} \\
\midrule
 \multirow[t]{10}{*}{16} & \multirow[t]{5}{*}{Prompt} & SGD FT & 89.0$_{(1.5)}$ & 44.6$_{(1.4)}$ & 83.4$_{(2.5)}$ & 93.3$_{(0.2)}$ & 83.3$_{(1.3)}$ & 88.5$_{(2.6)}$ & 80.3$_{(7.2)}$ & 84.2$_{(1.1)}$ \\
 &  & {\cellcolor{gblue!10}$\mathcal{K}^{\text{(SGD)}}$} & {\cellcolor{gblue!10}88.3$_{(0.3)}$} & {\cellcolor{gblue!10}43.6$_{(2.2)}$} & {\cellcolor{gblue!10}84.7$_{(1.5)}$} & {\cellcolor{gblue!10}93.2$_{(0.9)}$} & {\cellcolor{gblue!10}76.4$_{(2.7)}$} & {\cellcolor{gblue!10}88.6$_{(1.3)}$} & {\cellcolor{gblue!10}56.0$_{(9.2)}$} & {\cellcolor{gblue!10}82.1$_{(2.0)}$} \\
\cmidrule{3-11} 
 &  & Adam FT & 88.3$_{(1.2)}$ & 45.4$_{(2.6)}$ & 81.3$_{(6.1)}$ & 93.0$_{(1.6)}$ & 82.8$_{(2.2)}$ & 87.4$_{(2.1)}$ & 79.6$_{(6.1)}$ & 84.0$_{(1.6)}$ \\
 &  & {\cellcolor{ggreen!7}$\mathcal{K}^{\text{(SignGD)}}$} & {\cellcolor{ggreen!7}88.3$_{(0.5)}$} & {\cellcolor{ggreen!7}42.2$_{(3.9)}$} & {\cellcolor{ggreen!7}84.3$_{(1.5)}$} & {\cellcolor{ggreen!7}93.7$_{(0.5)}$} & {\cellcolor{ggreen!7}76.7$_{(3.3)}$} & {\cellcolor{ggreen!7}89.2$_{(2.0)}$} & {\cellcolor{ggreen!7}58.1$_{(6.5)}$} & {\cellcolor{ggreen!7}82.3$_{(1.6)}$} \\
 &  & {\cellcolor{ggreen!7}$\mathcal{K}^{\text{(A-)SignGD)}}$} & {\cellcolor{ggreen!7}88.3$_{(0.4)}$} & {\cellcolor{ggreen!7}43.7$_{(1.7)}$} & {\cellcolor{ggreen!7}84.9$_{(1.1)}$} & {\cellcolor{ggreen!7}93.4$_{(0.5)}$} & {\cellcolor{ggreen!7}74.6$_{(3.5)}$} & {\cellcolor{ggreen!7}88.6$_{(1.8)}$} & {\cellcolor{ggreen!7}20.7$_{(4.2)}$} & {\cellcolor{ggreen!7}83.6$_{(1.0)}$} \\
\cmidrule{2-11} 
 & \multirow[t]{5}{*}{Standard} & SGD FT & 79.7$_{(4.5)}$ & 36.1$_{(3.7)}$ & 64.8$_{(5.2)}$ & 86.6$_{(2.6)}$ & 69.1$_{(6.8)}$ & 89.2$_{(0.7)}$ & 62.7$_{(3.8)}$ & 82.3$_{(0.4)}$ \\
 &  & {\cellcolor{gblue!10}$\mathcal{K}^{\text{(SGD)}}$} & {\cellcolor{gblue!10}62.3$_{(6.4)}$} & {\cellcolor{gblue!10}32.0$_{(1.5)}$} & {\cellcolor{gblue!10}61.2$_{(4.0)}$} & {\cellcolor{gblue!10}67.5$_{(2.3)}$} & {\cellcolor{gblue!10}62.7$_{(2.3)}$} & {\cellcolor{gblue!10}86.7$_{(1.5)}$} & {\cellcolor{gblue!10}58.7$_{(6.0)}$} & {\cellcolor{gblue!10}81.3$_{(1.5)}$} \\
\cmidrule{3-11} 
 &  & Adam FT & 79.3$_{(1.9)}$ & 37.9$_{(5.2)}$ & 69.0$_{(6.0)}$ & 83.9$_{(5.2)}$ & 69.5$_{(6.8)}$ & 89.5$_{(1.0)}$ & 74.4$_{(2.4)}$ & 82.7$_{(2.1)}$ \\
 &  & {\cellcolor{ggreen!7}$\mathcal{K}^{\text{(SignGD)}}$} & {\cellcolor{ggreen!7}61.3$_{(8.6)}$} & {\cellcolor{ggreen!7}32.2$_{(2.2)}$} & {\cellcolor{ggreen!7}61.4$_{(4.0)}$} & {\cellcolor{ggreen!7}72.6$_{(3.1)}$} & {\cellcolor{ggreen!7}60.9$_{(3.6)}$} & {\cellcolor{ggreen!7}87.8$_{(1.7)}$} & {\cellcolor{ggreen!7}63.5$_{(3.8)}$} & {\cellcolor{ggreen!7}81.6$_{(1.2)}$} \\
 &  & {\cellcolor{ggreen!7}$\mathcal{K}^{\text{(A-)SignGD)}}$} & {\cellcolor{ggreen!7}59.1$_{(11.4)}$} & {\cellcolor{ggreen!7}31.9$_{(2.0)}$} & {\cellcolor{ggreen!7}58.3$_{(8.8)}$} & {\cellcolor{ggreen!7}72.4$_{(4.1)}$} & {\cellcolor{ggreen!7}60.7$_{(4.6)}$} & {\cellcolor{ggreen!7}87.7$_{(1.7)}$} & {\cellcolor{ggreen!7}64.6$_{(4.1)}$} & {\cellcolor{ggreen!7}81.1$_{(1.5)}$} \\
\cmidrule{1-11} 
 \multirow[t]{10}{*}{64} & \multirow[t]{5}{*}{Prompt} & SGD FT & 89.7$_{(0.4)}$ & 45.8$_{(2.1)}$ & 85.8$_{(1.0)}$ & 94.3$_{(0.5)}$ & 84.8$_{(0.8)}$ & 92.9$_{(0.5)}$ & 93.2$_{(1.0)}$ & 86.8$_{(0.7)}$ \\
 &  & {\cellcolor{gblue!10}$\mathcal{K}^{\text{(SGD)}}$} & {\cellcolor{gblue!10}89.2$_{(1.0)}$} & {\cellcolor{gblue!10}46.0$_{(1.3)}$} & {\cellcolor{gblue!10}86.4$_{(0.6)}$} & {\cellcolor{gblue!10}93.7$_{(0.4)}$} & {\cellcolor{gblue!10}81.2$_{(0.9)}$} & {\cellcolor{gblue!10}91.4$_{(0.7)}$} & {\cellcolor{gblue!10}77.8$_{(2.3)}$} & {\cellcolor{gblue!10}85.6$_{(0.7)}$} \\
\cmidrule{3-11} 
 &  & Adam FT & 89.3$_{(0.7)}$ & 48.5$_{(2.0)}$ & 86.0$_{(0.4)}$ & 93.7$_{(0.8)}$ & 84.6$_{(0.9)}$ & 92.7$_{(0.6)}$ & 92.6$_{(1.3)}$ & 86.8$_{(1.1)}$ \\
 &  & {\cellcolor{ggreen!7}$\mathcal{K}^{\text{(SignGD)}}$} & {\cellcolor{ggreen!7}89.1$_{(0.5)}$} & {\cellcolor{ggreen!7}49.1$_{(1.6)}$} & {\cellcolor{ggreen!7}85.6$_{(1.0)}$} & {\cellcolor{ggreen!7}93.9$_{(0.2)}$} & {\cellcolor{ggreen!7}79.0$_{(5.8)}$} & {\cellcolor{ggreen!7}92.4$_{(0.5)}$} & {\cellcolor{ggreen!7}82.0$_{(1.4)}$} & {\cellcolor{ggreen!7}85.9$_{(0.7)}$} \\
 &  & {\cellcolor{ggreen!7}$\mathcal{K}^{\text{(A-)SignGD)}}$} & {\cellcolor{ggreen!7}88.9$_{(0.9)}$} & {\cellcolor{ggreen!7}43.6$_{(2.2)}$} & {\cellcolor{ggreen!7}85.6$_{(1.0)}$} & {\cellcolor{ggreen!7}94.0$_{(0.3)}$} & {\cellcolor{ggreen!7}81.8$_{(1.1)}$} & {\cellcolor{ggreen!7}91.8$_{(1.1)}$} & {\cellcolor{ggreen!7}22.8$_{(2.9)}$} & {\cellcolor{ggreen!7}86.2$_{(0.3)}$} \\
\cmidrule{2-11} 
 & \multirow[t]{5}{*}{Standard} & SGD FT & 85.6$_{(3.6)}$ & 41.1$_{(2.1)}$ & 83.4$_{(1.7)}$ & 92.7$_{(1.2)}$ & 83.5$_{(2.1)}$ & 92.6$_{(0.4)}$ & 86.8$_{(1.8)}$ & 86.8$_{(0.8)}$ \\
 &  & {\cellcolor{gblue!10}$\mathcal{K}^{\text{(SGD)}}$} & {\cellcolor{gblue!10}77.7$_{(2.8)}$} & {\cellcolor{gblue!10}35.8$_{(0.7)}$} & {\cellcolor{gblue!10}73.6$_{(2.0)}$} & {\cellcolor{gblue!10}82.6$_{(4.4)}$} & {\cellcolor{gblue!10}74.9$_{(2.2)}$} & {\cellcolor{gblue!10}90.1$_{(1.0)}$} & {\cellcolor{gblue!10}81.9$_{(2.0)}$} & {\cellcolor{gblue!10}85.6$_{(0.6)}$} \\
\cmidrule{3-11} 
 &  & Adam FT & 86.2$_{(2.3)}$ & 41.0$_{(1.7)}$ & 83.9$_{(1.9)}$ & 92.6$_{(1.0)}$ & 83.5$_{(1.8)}$ & 92.9$_{(0.5)}$ & 91.5$_{(1.4)}$ & 87.5$_{(0.6)}$ \\
 &  & {\cellcolor{ggreen!7}$\mathcal{K}^{\text{(SignGD)}}$} & {\cellcolor{ggreen!7}79.6$_{(1.7)}$} & {\cellcolor{ggreen!7}35.3$_{(3.1)}$} &{\cellcolor{ggreen!7}75.8$_{(2.0)}$} & {\cellcolor{ggreen!7}83.0$_{(4.7)}$} & {\cellcolor{ggreen!7}75.0$_{(2.1)}$} & {\cellcolor{ggreen!7}90.9$_{(1.0)}$} & {\cellcolor{ggreen!7}82.5$_{(1.8)}$} & {\cellcolor{ggreen!7}85.9$_{(1.0)}$} \\
 &  & {\cellcolor{ggreen!7}$\mathcal{K}^{\text{(A-)SignGD)}}$} & {\cellcolor{ggreen!7}78.7$_{(2.3)}$} & {\cellcolor{ggreen!7}36.8$_{(2.3)}$} & {\cellcolor{ggreen!7}76.5$_{(3.2)}$} & {\cellcolor{ggreen!7}85.6$_{(3.8)}$} & {\cellcolor{ggreen!7}75.2$_{(1.9)}$} & {\cellcolor{ggreen!7}91.1$_{(1.1)}$} & {\cellcolor{ggreen!7}84.6$_{(1.5)}$} & {\cellcolor{ggreen!7}86.2$_{(0.8)}$} \\
 
\cmidrule{1-11} 
 \multirow[t]{4}{*}{512} & \multirow[t]{2}{*}{Prompt} & SGD FT & 92.0$_{(0.9)}$ & 53.5$_{(1.5)}$ & 88.8$_{(0.0)}$ & 94.3$_{(0.4)}$ & 88.5$_{(0.1)}$ & 95.4$_{(0.1)}$ & 97.2$_{(0.4)}$ & 89.9$_{(0.7)}$ \\
 &  & {\cellcolor{gblue!10}$\mathcal{K}^{\text{(SGD)}}$} & {\cellcolor{gblue!10}91.0$_{(0.2)}$} & {\cellcolor{gblue!10}49.8$_{(0.4)}$} & {\cellcolor{gblue!10}88.0$_{(0.9)}$} & {\cellcolor{gblue!10}94.4$_{(0.2)}$} & {\cellcolor{gblue!10}84.4$_{(0.9)}$} & {\cellcolor{gblue!10}93.5$_{(0.1)}$} & {\cellcolor{gblue!10}88.2$_{(0.8)}$} & {\cellcolor{gblue!10}88.4$_{(0.5)}$}\\
\cmidrule{2-11} 
 & \multirow[t]{2}{*}{Standard} & SGD FT & 91.4$_{(0.2)}$ & 50.2$_{(1.6)}$ & 88.8$_{(0.4)}$ & 95.4$_{(0.3)}$ & 88.1$_{(0.5)}$ & 95.0$_{(0.7)}$ & 97.2$_{(0.6)}$ & 90.1$_{(0.4)}$ \\
 &  & {\cellcolor{gblue!10}$\mathcal{K}^{\text{(SGD)}}$} & {\cellcolor{gblue!10}85.9$_{(1.6)}$} & {\cellcolor{gblue!10}45.4$_{(1.0)}$} & {\cellcolor{gblue!10}83.1$_{(1.1)}$} & {\cellcolor{gblue!10}92.2$_{(0.9)}$} & {\cellcolor{gblue!10}83.4$_{(0.5)}$} & {\cellcolor{gblue!10}92.3$_{(0.1)}$} & {\cellcolor{gblue!10}93.3$_{(1.5)}$} & {\cellcolor{gblue!10}89.1$_{(0.2)}$}\\
\bottomrule
\end{tabular}
}
\caption{Single-sentence tasks \vspace{1.0em}}
\end{subtable}

\begin{subtable}[h]{1.0\textwidth}
    \centering
\resizebox{0.69\textwidth}{!}{
\begin{tabular}{rllcccccccccccccc}
\toprule
$k$-shot & Prompt & Method & \textbf{MNLI} & \textbf{SNLI} & \textbf{QNLI} & \textbf{RTE} & \textbf{MRPC} & \textbf{QQP} \\
\midrule
 \multirow[t]{10}{*}{16} & \multirow[t]{5}{*}{Prompt} & SGD FT & 59.2$_{(2.7)}$ & 65.7$_{(2.7)}$ & 62.1$_{(3.1)}$ & 60.0$_{(5.5)}$ & 73.9$_{(2.7)}$ & 62.1$_{(2.3)}$ \\
 &  & {\cellcolor{gblue!10}$\mathcal{K}^{\text{(SGD)}}$} & {\cellcolor{gblue!10}53.0$_{(3.0)}$} & {\cellcolor{gblue!10}57.8$_{(2.3)}$} & {\cellcolor{gblue!10}60.1$_{(3.3)}$} & {\cellcolor{gblue!10}60.0$_{(4.7)}$} & {\cellcolor{gblue!10}73.4$_{(5.6)}$} & {\cellcolor{gblue!10}58.2$_{(0.9)}$} \\
\cmidrule{3-9}
 &  & Adam FT & 56.8$_{(2.9)}$ & 64.6$_{(4.1)}$ & 63.1$_{(3.5)}$ & 57.6$_{(6.3)}$ & 77.6$_{(3.1)}$ & 61.8$_{(4.5)}$ \\
 &  & {\cellcolor{ggreen!7}$\mathcal{K}^{\text{(SignGD)}}$} & {\cellcolor{ggreen!7}53.8$_{(1.2)}$} & {\cellcolor{ggreen!7}54.9$_{(2.7)}$} & {\cellcolor{ggreen!7}59.5$_{(3.1)}$} & {\cellcolor{ggreen!7}55.4$_{(4.2)}$} & {\cellcolor{ggreen!7}75.6$_{(1.2)}$} & {\cellcolor{ggreen!7}60.7$_{(2.2)}$} \\
 &  & {\cellcolor{ggreen!7}$\mathcal{K}^{\text{(A-)SignGD)}}$} & {\cellcolor{ggreen!7}51.9$_{(4.0)}$} & {\cellcolor{ggreen!7}54.9$_{(3.1)}$} & {\cellcolor{ggreen!7}56.0$_{(1.9)}$} & {\cellcolor{ggreen!7}59.8$_{(4.0)}$} & {\cellcolor{ggreen!7}75.2$_{(2.6)}$} & {\cellcolor{ggreen!7}59.4$_{(2.0)}$} \\
\cmidrule{2-9}
 & \multirow[t]{5}{*}{Standard} & SGD FT & 35.2$_{(1.3)}$ & 41.3$_{(2.2)}$ & 52.5$_{(5.4)}$ & 50.2$_{(2.1)}$ & 73.7$_{(6.3)}$ & 55.3$_{(5.2)}$ \\
 &  & {\cellcolor{gblue!10}$\mathcal{K}^{\text{(SGD)}}$} & {\cellcolor{gblue!10}34.9$_{(1.8)}$} & {\cellcolor{gblue!10}39.6$_{(3.3)}$} & {\cellcolor{gblue!10}50.3$_{(1.4)}$} & {\cellcolor{gblue!10}48.7$_{(2.0)}$} & {\cellcolor{gblue!10}69.2$_{(6.9)}$} & {\cellcolor{gblue!10}50.8$_{(5.0)}$} \\
\cmidrule{3-9}
 &  & Adam FT & 38.7$_{(3.5)}$ & 42.9$_{(3.2)}$ & 57.6$_{(4.2)}$ & 51.1$_{(3.8)}$ & 75.6$_{(7.1)}$ & 58.2$_{(6.5)}$ \\
 &  & {\cellcolor{ggreen!7}$\mathcal{K}^{\text{(SignGD)}}$} & {\cellcolor{ggreen!7}36.1$_{(1.3)}$} & {\cellcolor{ggreen!7}41.7$_{(2.4)}$} & {\cellcolor{ggreen!7}51.9$_{(1.5)}$} &  
 {\cellcolor{ggreen!7}48.2$_{(3.4)}$} & {\cellcolor{ggreen!7}73.3$_{(5.3)}$} & {\cellcolor{ggreen!7}52.4$_{(5.1)}$} \\
 &  & {\cellcolor{ggreen!7}$\mathcal{K}^{\text{(A-)SignGD)}}$} & {\cellcolor{ggreen!7}34.9$_{(1.4)}$} & {\cellcolor{ggreen!7}41.7$_{(2.5)}$} & {\cellcolor{ggreen!7}52.6$_{(2.5)}$} & {\cellcolor{ggreen!7}48.2$_{(2.5)}$} & {\cellcolor{ggreen!7}73.8$_{(6.2)}$} & {\cellcolor{ggreen!7}50.8$_{(8.8)}$} \\
\cmidrule{1-9}
 \multirow[t]{10}{*}{64} & \multirow[t]{5}{*}{Prompt} & SGD FT & 68.7$_{(1.7)}$ & 77.3$_{(0.9)}$ & 72.8$_{(2.2)}$ & 68.9$_{(2.5)}$ & 82.8$_{(1.2)}$ & 69.2$_{(1.3)}$ \\
 &  & {\cellcolor{gblue!10}$\mathcal{K}^{\text{(SGD)}}$} & {\cellcolor{gblue!10}60.4$_{(1.8)}$} & {\cellcolor{gblue!10}65.5$_{(1.6)}$} & {\cellcolor{gblue!10}67.3$_{(1.6)}$} & {\cellcolor{gblue!10}66.5$_{(2.5)}$} & {\cellcolor{gblue!10}79.2$_{(2.5)}$} & {\cellcolor{gblue!10}66.4$_{(1.7)}$} \\
\cmidrule{3-9}
 &  & Adam FT & 67.9$_{(1.0)}$ & 76.9$_{(1.4)}$ & 74.2$_{(3.2)}$ & 67.3$_{(2.7)}$ & 80.9$_{(1.2)}$ & 69.8$_{(0.6)}$ \\
 &  & {\cellcolor{ggreen!7}$\mathcal{K}^{\text{(SignGD)}}$} & {\cellcolor{ggreen!7}60.8$_{(1.7)}$} & {\cellcolor{ggreen!7}64.1$_{(2.3)}$} & {\cellcolor{ggreen!7}65.4$_{(1.7)}$} & {\cellcolor{ggreen!7}63.8$_{(1.8)}$} & {\cellcolor{ggreen!7}77.4$_{(2.3)}$} & {\cellcolor{ggreen!7}63.7$_{(4.4)}$} \\
 &  & {\cellcolor{ggreen!7}$\mathcal{K}^{\text{(A-)SignGD)}}$} & {\cellcolor{ggreen!7}58.5$_{(1.7)}$} & {\cellcolor{ggreen!7}66.8$_{(1.1)}$} & {\cellcolor{ggreen!7}66.5$_{(1.1)}$} & {\cellcolor{ggreen!7}63.8$_{(2.2)}$} & {\cellcolor{ggreen!7}77.3$_{(2.0)}$} & {\cellcolor{ggreen!7}66.1$_{(3.4)}$} \\
\cmidrule{2-9}
 & \multirow[t]{5}{*}{Standard} & SGD FT & 50.0$_{(5.0)}$ & 61.9$_{(4.5)}$ & 65.4$_{(4.2)}$ & 53.6$_{(2.5)}$ & 78.7$_{(1.1)}$ & 64.8$_{(3.5)}$ \\
 &  & {\cellcolor{gblue!10}$\mathcal{K}^{\text{(SGD)}}$} & {\cellcolor{gblue!10}42.6$_{(1.7)}$} & {\cellcolor{gblue!10}50.1$_{(1.7)}$} & {\cellcolor{gblue!10}54.4$_{(1.5)}$} & {\cellcolor{gblue!10}50.0$_{(4.4)}$} & {\cellcolor{gblue!10}72.2$_{(5.8)}$} & {\cellcolor{gblue!10}48.4$_{(19.3)}$} \\
\cmidrule{3-9}
 &  & Adam FT & 58.0$_{(2.6)}$ & 67.8$_{(2.0)}$ & 67.9$_{(7.2)}$ & 53.9$_{(4.2)}$ & 80.1$_{(1.4)}$ & 66.8$_{(3.1)}$ \\
 &  & {\cellcolor{ggreen!7}$\mathcal{K}^{\text{(SignGD)}}$} & {\cellcolor{ggreen!7}41.7$_{(2.1)}$} & {\cellcolor{ggreen!7}50.5$_{(2.1)}$} & {\cellcolor{ggreen!7}56.6$_{(1.9)}$} &  
 {\cellcolor{ggreen!7}52.7$_{(3.8)}$} & {\cellcolor{ggreen!7}77.6$_{(4.2)}$} & {\cellcolor{ggreen!7}61.3$_{(2.0)}$} \\
 &  & {\cellcolor{ggreen!7}$\mathcal{K}^{\text{(A-)SignGD)}}$} & {\cellcolor{ggreen!7}42.8$_{(1.7)}$} & {\cellcolor{ggreen!7}49.1$_{(2.9)}$} & {\cellcolor{ggreen!7}55.3$_{(3.7)}$} & {\cellcolor{ggreen!7}52.9$_{(4.5)}$} & {\cellcolor{ggreen!7}74.5$_{(2.5)}$} & {\cellcolor{ggreen!7}62.3$_{(1.9)}$} \\
\cmidrule{1-9}
 \multirow[t]{4}{*}{512} & \multirow[t]{2}{*}{Prompt} & SGD FT & 78.4$_{(0.3)}$ & 83.9$_{(0.3)}$ & 81.9$_{(1.2)}$ & 76.3$_{(0.6)}$ & 89.2$_{(0.1)}$ & 75.2$_{(1.1)}$ \\
 &  & {\cellcolor{gblue!10}$\mathcal{K}^{\text{(SGD)}}$} & {\cellcolor{gblue!10}67.4$_{(0.2)}$} & {\cellcolor{gblue!10}74.6$_{(0.3)}$} & {\cellcolor{gblue!10}76.1$_{(0.9)}$} & {\cellcolor{gblue!10}74.2$_{(1.2)}$} & {\cellcolor{gblue!10}80.7$_{(1.7)}$} & {\cellcolor{gblue!10}72.0$_{(0.9)}$} \\
\cmidrule{2-9}
 & \multirow[t]{2}{*}{Standard} & SGD FT & 77.8$_{(1.1)}$ & 82.9$_{(0.6)}$ & 81.0$_{(0.5)}$ & 70.9$_{(1.7)}$ & 90.2$_{(0.7)}$ & 75.7$_{(0.9)}$ \\
 &  & {\cellcolor{gblue!10}$\mathcal{K}^{\text{(SGD)}}$} & {\cellcolor{gblue!10}57.6$_{(3.6)}$} & {\cellcolor{gblue!10}67.0$_{(1.2)}$} & {\cellcolor{gblue!10}68.4$_{(0.4)}$} & {\cellcolor{gblue!10}55.7$_{(1.7)}$} & {\cellcolor{gblue!10}78.7$_{(2.2)}$} & {\cellcolor{gblue!10}69.1$_{(1.3)}$} \\
\bottomrule
\end{tabular}
}
\caption{Sentence-pair tasks \vspace{-0.5em}} 
\end{subtable}

\caption{Fine-tuning performance in the standard FT setting, where the contextual embedding of the \texttt{[CLS]} token is used for classification, and the prompt-based FT setting, where a prompt is added and the embedding for the \texttt{[MASK]} token is used (see \Cref{sec:prelims}). 
In standard FT, we initialize the new classification head (i.e., $\Gamma$) using the linear probing solution. 
This table gives the figures in ~\Cref{fig:prompt_vs_noprompt}, and also relates SGD fine-tuning performance to the more common fine-tuning with Adam. We report F1 for MRPC and QQP and accuracy otherwise, and average the metrics over 5 seeds for 16-shot and 64-shot, and 3 seeds for 512-shot.
}

\label{tab:noprompt}

\end{table}

\begin{table}[h]
    \centering
    
        \begin{subtable}[h]{1.0\textwidth}
        \centering
    \resizebox{1.0\textwidth}{!}{
        \begin{tabular}{lr|cccccccc}
        \toprule
        Method & $k$-shot  & \textbf{SST-2} & \textbf{SST-5} & \textbf{MR} & \textbf{CR} & \textbf{MPQA} & \textbf{Subj} & \textbf{TREC} & \textbf{AG News} \\ 
        \midrule
        $\ksgd$ & 16 & $0.39_{(0.14)}$ & 0.70$_{(0.35)}$ & 0.14$_{(0.09)}$ & 0.32$_{(0.03)}$ &     0.56$_{(0.12)}$ & 0.60$_{(0.31)}$ & 2.87$_{(1.27)}$ & 3.52$_{(4.44)}$ \\
        & 64 & $0.66_{(0.31)}$ & 0.97$_{(0.55)}$ & 0.37$_{(0.18)}$ & 0.66$_{(0.43)}$ &     0.44$_{(0.09)}$ & 1.04$_{(0.19)}$ & 9.63$_{(13.36)}$ & 1.74$_{(0.60)}$ \\
        \midrule
        $\ksigngd$ & 16 & $0.45_{(0.11)}$ & 0.61$_{(0.17)}$ & 0.33$_{(0.08)}$ & 0.35$_{(0.13)}$ &     0.48$_{(0.06)}$ & 0.40$_{(0.21)}$ & 1.33$_{(0.14)}$ & 1.50$_{(0.56)}$ \\
        & 64 & $0.34_{(0.09)}$ & 0.77$_{(0.03)}$ & 0.43$_{(0.08)}$ & 0.36$_{(0.04)}$ &     0.50$_{(0.17)}$ & 0.54$_{(0.07)}$ & 1.38$_{(0.12)}$ & 1.44$_{(0.15)}$ \\
        \bottomrule
        \end{tabular}
    }
        \caption{Single-sentence tasks. \vspace{1.0em}}
    \end{subtable}
    \begin{subtable}[h]{1.0\textwidth}
        \centering
    \resizebox{0.76\textwidth}{!}{
        \begin{tabular}{lr|cccccc}
        \toprule
        Method & $k$-shot  & \textbf{MNLI} & \textbf{SNLI} & \textbf{QNLI} & \textbf{RTE} & \textbf{MRPC} & \textbf{QQP} \\ 
        \midrule
        $\ksgd$ & 16 & 1.26$_{(0.20)}$ & 0.58$_{(0.17)}$ & 0.67$_{(0.14)}$ & 0.40$_{(0.25)}$ & 0.65$_{(0.32)}$ & 0.79$_{(0.39)}$ \\
        & 64 & 1.62$_{(0.19)}$ & 0.75$_{(0.04)}$ & 0.89$_{(0.42)}$ & 1.04$_{(0.16)}$ & 1.41$_{(0.53)}$ & 1.00$_{(0.14)}$ \\
        \midrule
        $\ksigngd$ & 16 & 0.52$_{(0.09)}$ & 0.68$_{(0.16)}$ & 0.47$_{(0.09)}$ & 0.48$_{(0.13)}$ & 0.48$_{(0.07)}$ & 0.58$_{(0.07)}$ \\
        & 64 & 0.59$_{(0.03)}$ & 0.62$_{(0.04)}$ & 0.55$_{(0.04)}$ & 0.54$_{(0.02)}$ & 0.60$_{(0.08)}$ & 0.56$_{(0.02)}$ \\
        \bottomrule
        \end{tabular}
    }
        \caption{Sentence-pair tasks.}
    \end{subtable}

    \caption{Average element-wise relative distance of $\ksgd$ and $\ksigngd$ computed on the pre-trained and best model fine-tuned with SGD and Adam respectively. A smaller value indicates a higher likelihood that the Fixed Features property of kernel behavior (\Cref{def:kernel_regime}) holds when performing fine-tuning. Distances are averaged across 5 seeds for each value of $k$ and measured on the held-out test set.
        }
    \label{tab:linearization_kernel_distance}
\end{table}

\subsection{Solvable Task Experiments}\label{sec:app_solvable_task}
We run a preliminary empirical test to verify if various tasks are solvable in the infinite-width limit (see \Cref{def:solvable_task}). 
Intuitively, the assumption states that wider models (with all other architecture and pre-training hyperparameters fixed) will solve the downstream task better in a zero-shot fashion, and in the limit, an infinitely wide model will solve the task perfectly. 
The cheap empirical test involves measuring the average output derivative $\chi$ of the loss w.r.t. the model output (see \Cref{def:output_derivative} for a definition of $\chi$) over the entire dataset for two models of different widths.
We note that our paper uses RoBERTa-base ($n=768$) for experiments, so a natural choice for a wider model would be RoBERTa-large ($n=1024$).
However, RoBERTa-large is also deeper than RoBERTa-base, and indeed, in general, it is difficult to find two publicly available pre-trained models with different widths and fixed depth. 
We nevertheless present the table of $\chi$ values for several downstream tasks measured on RoBERTa-base and RoBERTa-large in \Cref{tab:chi_measurement}.
\begin{table}[h]
\centering
\resizebox{0.9\textwidth}{!}{
\centering
    \begin{tabular}{lcccccccccccccccc}
    \toprule
     Model size  &  \multicolumn{1}{c}{\textbf{SST-2}} & \multicolumn{1}{c}{\textbf{MR}} & \multicolumn{1}{c}{\textbf{CR}} & \multicolumn{1}{c}{\textbf{MPQA}} & \multicolumn{1}{c}{\textbf{Subj}} & \multicolumn{1}{c}{\textbf{QNLI}} & \multicolumn{1}{c}{\textbf{RTE}} & \multicolumn{1}{c}{\textbf{MRPC}} & \multicolumn{1}{c}{\textbf{QQP}} \\
    \midrule
    Base ($n=768$)  & 0.32    & 0.32  & 0.26 & 0.38 & 0.43 & 0.48 & 0.48 & 0.56  & 0.49 \\
    Large ($n=1024$)  & 0.32  & 0.25  & 0.25 & 0.40 & 0.46 & 0.48 & 0.47 & 0.52  & 0.52 \\
    \bottomrule
    \end{tabular}

    }
    \caption{We measure the average output derivative (\Cref{def:output_derivative}) in the prompt-based FT setting for RoBERTa-base and RoBERTa-large. 
    } 
    \label{tab:chi_measurement}
\end{table}

\subsection{Robustness to Choice of Prompt} \label{sec:app_prompt_choice}
We explore different choices of prompt and label words in Table \ref{tab:prompt_format}. When using the results of the prompt and label search from \citet{gao2020making}, we find that the kernel approximation matches fine-tuning well,. However, the choice of prompt does matter and $\ksgd$ performs poorly with the minimal  ``null prompts'' from \citet{logan2022cutting} on sentiment classification datasets, where the prompt is merely ``{\sent} {\mask}'' and the label words remain \{great, terrible\}. We hypothesize this failure is because the task is no longer solvable in the infinite width limit (\Cref{def:solvable_task}).

\begin{table}[!h]
\centering

\resizebox{1.0\textwidth}{!}{
\begin{tabular}{rllcccccc}
\toprule
$k$-shot & Prompt + label format & Method & \textbf{SST-2} & \textbf{MR} & \textbf{CR} & \textbf{QNLI} & \textbf{RTE} & \textbf{QQP} \\
\midrule
\multirow[t]{9}{*}{16} & \multirow[b]{2}{*}{\shortstack[l]{Manual \\ \citep{gao2020making}}} & Adam-FT & 88.3$_{(1.2)}$ & 81.3$_{(6.1)}$ & 93.0$_{(1.6)}$ & 63.1$_{(3.5)}$ & 57.6$_{(6.3)}$ & 61.8$_{(4.5)}$ \\
\cmidrule{3-9}
 &  & SGD-FT & 89.0$_{(1.5)}$ & 83.2$_{(2.4)}$ & 93.3$_{(0.2)}$ & 62.1$_{(3.1)}$ & 60.0$_{(5.5)}$ & 62.1$_{(2.3)}$ \\
 &  & {\cellcolor{gblue!10}$\ksgd$} & {\cellcolor{gblue!10}88.3$_{(0.3)}$} & {\cellcolor{gblue!10}84.7$_{(1.5)}$} & {\cellcolor{gblue!10}93.2$_{(0.9)}$} & {\cellcolor{gblue!10}60.1$_{(3.3)}$} & {\cellcolor{gblue!10}60.0$_{(4.7)}$} & {\cellcolor{gblue!10}58.2$_{(0.9)}$} \\
 \cmidrule{2-9}
 & \multirow[b]{2}{*}{\shortstack[l]{Prompt + label search \\ \citep{gao2020making}}} & Adam-FT & 88.1$_{(0.8)}$ & 81.6$_{(3.8)}$ & 92.8$_{(0.4)}$ & 56.3$_{(3.8)}$ & 58.6$_{(4.6)}$ & 58.6$_{(4.5)}$ \\
 \cmidrule{3-9}
 &  & SGD-FT & 89.2$_{(1.2)}$ & 80.1$_{(1.8)}$ & 93.2$_{(0.5)}$ & 58.7$_{(4.8)}$ & 61.6$_{(2.6)}$ & 59.0$_{(1.4)}$ \\
 &  & {\cellcolor{gblue!10}$\ksgd$} & {\cellcolor{gblue!10}88.6$_{(1.1)}$} & {\cellcolor{gblue!10}78.5$_{(1.2)}$} & {\cellcolor{gblue!10}93.5$_{(0.7)}$} & {\cellcolor{gblue!10}56.7$_{(1.7)}$} & {\cellcolor{gblue!10}57.4$_{(5.5)}$} & {\cellcolor{gblue!10}60.2$_{(2.0)}$} \\
 \cmidrule{2-9}
 & \multirow[b]{2}{*}{\shortstack[l]{Null prompts \\ \citep{logan2022cutting}}} & Adam-FT & 87.6$_{(0.9)}$ & 82.6$_{(0.6)}$ & 92.8$_{(0.6)}$ & 59.0$_{(2.9)}$ & 56.4$_{(4.7)}$ & 57.5$_{(5.2)}$ \\
 \cmidrule{3-9}
 &  & SGD-FT & 88.1$_{(0.7)}$ & 82.8$_{(3.6)}$ & 93.4$_{(0.7)}$ & 59.0$_{(3.4)}$ & 54.1$_{(1.6)}$ & 57.6$_{(5.5)}$ \\
 &  & {\cellcolor{gblue!10}$\ksgd$} & {\cellcolor{gblue!10}78.3$_{(4.3)}$} & {\cellcolor{gblue!10}78.7$_{(1.8)}$} & {\cellcolor{gblue!10}91.7$_{(0.8)}$} & {\cellcolor{gblue!10}55.8$_{(2.7)}$} & {\cellcolor{gblue!10}55.5$_{(2.3)}$} & {\cellcolor{gblue!10}57.4$_{(1.8)}$} \\
\bottomrule
\end{tabular}
}

\caption{We experiment with different prompt formats and label words: using the top result of an automatic prompt search performed on RoBERTa-large (Table E.1 in \citet{gao2020making}); and minimal null prompts (Table A3, \citet{logan2022cutting}), which add no additional text to the prompt. We find that our observations are robust to the choice of prompt, with the exception of the more unnatural ``null prompts'' on sentiment tasks (SST-2, MR, CR), which show a substantial gap between $\ksgd$ and fine-tuning.
We report F1 for QQP and accuracy otherwise, and average the metrics over 5 seeds.}

\label{tab:prompt_format}
\end{table}

\begin{table}[t]
    \centering

    \begin{subtable}[h]{1.0\textwidth}
        \centering
        
        \resizebox{1.0\textwidth}{!}{  
        
        \begin{tabular}{rlcccccccccccccc}
        \toprule
        $k$-shot & Method & \tf{SST-2} & \tf{SST-5} & \tf{MR} & \tf{CR} & \tf{MPQA} & \tf{Subj} & \tf{TREC} & \tf{AG News} \\
        \midrule
            16 & SignGD-FT 
                & 87.6$_{(3.6)}$ & 43.4$_{(3.9)}$ & 84.4$_{(1.1)}$ & 92.8$_{(1.4)}$ & 82.4$_{(1.5)}$ & {90.3}$_{(1.8)}$ & {85.4}$_{(4.0)}$ & {85.2}$_{(1.4)}$ \\
            & Adam-FT 
                & {88.3}$_{(1.2)}$ & {45.4}$_{(2.6)}$ & 81.3$_{(6.1)}$ & 93.0$_{(1.6)}$ & {82.8}$_{(2.2)}$ & 87.4$_{(2.1)}$ & 79.6$_{(6.1)}$ & 84.0$_{(1.6)}$ \\
            & {\cellcolor{ggreen!7}$\ksigngd$} 
                & {\cellcolor{ggreen!7}{88.3}$_{(0.5)}$} & {\cellcolor{ggreen!7}42.2$_{(3.9)}$} & {\cellcolor{ggreen!7}84.3$_{(1.5)}$} & {\cellcolor{ggreen!7}{93.7}$_{(0.5)}$} & {\cellcolor{ggreen!7}76.7$_{(3.3)}$} & {\cellcolor{ggreen!7}{89.2}$_{(2.0)}$} & {\cellcolor{ggreen!7}58.1$_{(6.5)}$} & {\cellcolor{ggreen!7}82.3$_{(1.6)}$} \\
            & {\cellcolor{ggreen!7}$\kasigngd$}
                & {\cellcolor{ggreen!7}{88.3}$_{(0.4)}$} & {\cellcolor{ggreen!7}43.7$_{(1.7)}$} & {\cellcolor{ggreen!7}{84.9}$_{(1.1)}$} & 	{\cellcolor{ggreen!7}93.4$_{(0.5)}$} & {\cellcolor{ggreen!7}74.6$_{(3.5)}$} & {\cellcolor{ggreen!7}88.6$_{(1.8)}$} & {\cellcolor{ggreen!7}22.7${{}}_{(2.8)}$} & {\cellcolor{ggreen!7}83.6$_{(1.0)}$}  \\
        \cmidrule{1-10} 
            64 & SignGD-FT
                & 87.6$_{(2.5)}$ & 47.3$_{(2.7)}$ & {86.2}$_{(1.2)}$ & 93.7$_{(1.7)}$ & {85.3}$_{(1.7)}$ & 92.1$_{(2.0)}$ & {93.7}$_{(0.5)}$ & {87.5}$_{(0.6)}$ \\
            & Adam-FT 
                & {89.3}$_{(0.7)}$ & 48.5$_{(2.0)}$ & 86.0$_{(0.4)}$ & {93.7}$_{(0.8)}$ & 84.6$_{(0.9)}$ & {92.7}$_{(0.6)}$ & 92.6$_{(1.3)}$ & 86.8$_{(1.1)}$ \\
            & {\cellcolor{ggreen!7}$\ksigngd$}
                & {\cellcolor{ggreen!7}89.1$_{(0.5)}$} & {\cellcolor{ggreen!7}{49.1$_{(1.6)}$}} & {\cellcolor{ggreen!7}85.6$_{(1.0)}$} &
                {\cellcolor{ggreen!7}93.9$_{(0.2)}$} &
                {\cellcolor{ggreen!7}79.0$_{(5.8)}$} & {\cellcolor{ggreen!7}92.4$_{(0.5)}$} & {\cellcolor{ggreen!7}82.0$_{(1.4)}$} & {\cellcolor{ggreen!7}85.9$_{(0.7)}$} \\
            & {\cellcolor{ggreen!7}$\kasigngd$}
                & {\cellcolor{ggreen!7}88.9$_{(0.9)}$} &  {\cellcolor{ggreen!7}43.6$_{(2.2)}$} & {\cellcolor{ggreen!7}85.6$_{(1.0)}$} & {\cellcolor{ggreen!7}{94.0}$_{(0.3)}$} & {\cellcolor{ggreen!7}81.8$_{(1.1)}$} & {\cellcolor{ggreen!7}91.8$_{(1.1)}$} &  	{\cellcolor{ggreen!7}21.0$_{(4.3)}$} & {\cellcolor{ggreen!7}86.2$_{(0.3)}$} \\
        \bottomrule
        \end{tabular}

        }
        
        \caption{Single-sentence tasks \vspace{1.0em}}
    \end{subtable}
    \begin{subtable}[h]{1.0\textwidth}
        \centering

        \resizebox{.8\textwidth}{!}{  
\begin{tabular}{rlcccccccccccc}
\toprule
$k$-shot & Method & \tf{MNLI} & \tf{SNLI} & \tf{QNLI} & \tf{RTE} & \tf{MRPC} & \tf{QQP} \\
\midrule
    16 & SignGD-FT
        & {62.1}$_{(4.1)}$ & {67.7}$_{(2.7)}$ & {64.0}$_{(4.8)}$ & {60.9}$_{(5.8)}$ & {78.4}$_{(3.0)}$ & {66.2}$_{(2.1)}$ \\
    & Adam-FT
        & 56.8$_{(2.9)}$ & 64.6$_{(4.1)}$ & 63.1$_{(3.5)}$ & 57.6$_{(6.3)}$ & 77.6$_{(3.1)}$ & 61.8$_{(4.5)}$ \\
    & {\cellcolor{ggreen!7}$\ksigngd$}
        & {\cellcolor{ggreen!7}53.8$_{(1.2)}$} & {\cellcolor{ggreen!7}54.9$_{(2.7)}$} %
        & {\cellcolor{ggreen!7}59.5$_{(3.1)}$} & {\cellcolor{ggreen!7}55.4$_{(4.2)}$} & {\cellcolor{ggreen!7}75.6$_{(1.2)}$} & {\cellcolor{ggreen!7}60.7$_{(2.2)}$} \\
    & {\cellcolor{ggreen!7}$\kasigngd$}
        & {\cellcolor{ggreen!7}51.9$_{(4.0)}$} & {\cellcolor{ggreen!7}54.9$_{(3.1)}$} & {\cellcolor{ggreen!7}56.0$_{(1.9)}$} & {\cellcolor{ggreen!7}59.8$_{(4.0)}$} & {\cellcolor{ggreen!7}75.2$_{(2.6)}$} & {\cellcolor{ggreen!7}59.4$_{(2.0)}$} \\
\cmidrule{1-8} 
    64 & SignGD-FT 
        & {69.3}$_{(1.2)}$ & {77.4}$_{(1.0)}$ & {76.8}$_{(2.2)}$ & 66.4$_{(2.9)}$ & {84.1}$_{(1.3)}$ & {69.9}$_{(0.8)}$ \\
    & Adam-FT
        & 67.9$_{(1.0)}$ & 76.9$_{(1.4)}$ & 74.2$_{(3.2)}$ & {67.3}$_{(2.7)}$ & 80.9$_{(1.2)}$ & 69.8$_{(0.6)}$ \\
    & {\cellcolor{ggreen!7}$\ksigngd$}
        & {\cellcolor{ggreen!7}60.8$_{(1.7)}$} %
        & {\cellcolor{ggreen!7}64.1$_{(2.3)}$} %
        & {\cellcolor{ggreen!7}65.4$_{(1.7)}$} & {\cellcolor{ggreen!7}63.8$_{(1.8)}$} & {\cellcolor{ggreen!7}77.4$_{(2.3)}$} & {\cellcolor{ggreen!7}63.7$_{(4.4)}$} \\
    & {\cellcolor{ggreen!7}$\kasigngd$}
        & {\cellcolor{ggreen!7}58.5$_{(1.7)}$} & {\cellcolor{ggreen!7}66.8$_{(1.1)}$} & {\cellcolor{ggreen!7}66.5$_{(1.1)}$} & {\cellcolor{ggreen!7}63.8$_{(2.2)}$} & {\cellcolor{ggreen!7}77.3$_{(2.0)}$} & {\cellcolor{ggreen!7}66.1$_{(3.4)}$} \\
\bottomrule
\end{tabular}
        }
        
        \caption{Sentence-pair tasks \vspace{-0.5em}}     
    \end{subtable}

    \caption{Comparing the performnace SignGD-FT to Adam-FT, $\ksigngd$ and $\kasigngd$ in the prompt-based setting on the LM-BFF test set \citep{gao2020making}. SignGD fine-tuning applies the $\sign$ function coordinate-wise to gradients before taking gradient steps, and leads to surprisingly strong results, especially on sentence-pair tasks. We search over the same hyperparameter gird as for Adam-FT, see Table \ref{tab:hyperparameters}, and we do not use momentum. 
    Performance is measure by average test accuracy over 5 $k$-shot splits for all tasks except MRPC and QQP, where it is F1.}
    \label{tab:signgd_ft}
\end{table}

\clearpage
\section{Kernel Behavior and the Parametrization}\label{sec:app_kernel_theory}
Neural network training can exhibit either kernel behavior or feature learning behavior.
These were described in~\cite{woodworth2019kernel} as the lazy regime and active regime, respectively, when training from a random initialization.
Kernel behavior provides a tractable tool to study the training of neural networks, but it is not believed to be a complete description of practical deep learning settings. 
In particular, kernel behavior implies the feature (i.e., gradient) of the neural networks remains unchanged in the overparameterized setting, which is not true in practical pre-training of large models. 

\cite{yang2021tensor4} showed how the initialization variance, multiplier, and learning rate for each parameter can move training from the kernel behavior to the feature learning behavior. 
They further developed the Maximal Update Parametrization (abbreviated MUP or $\mup$) where every parameter is updated maximally (in terms of scaling with width) while keeping the network stable. 
\cite{yang2022tensor5} then extends $\mup$ to Transformers with Adam optimization, and showed empirically that for pre-training of large language models using $\mup$, the optimal hyperparameters remain the same when increasing width. 
It allows more comprehensive hyperparameter searches on a smaller model and direct transfer of the resulting optimal hyperparameters to the larger model, resulting in markedly improved pre-training performance.

This section discusses two of our formal results: \Cref{thm:prompt_ft_kernel_regime,thm:asigngd_kernel}.
In general, we consider the overparameterized setting in which the width of the network goes to infinity. 
Additionally, we assume that when initializing a weight matrix of the model, each entry of the matrix is drawn from i.i.d. Gaussian distribution. 
In particular, we model a pre-trained model as a non-random initialization that arose from training starting at a random initialization.
We use Tensor Programs~\citep{yang2020tensor3} for our formal results. 

This section is organized as follows.
In \Cref{sec:app_tp_prelims}, we introduce the basic notation and ideas around Tensor Programs as well as the assumptions we need to make in order for an infinite-width limit to be interesting to study.
Then, \Cref{sec:theory_signgd_kernel} gives the formal proof for the kernel analog to SignGD (\Cref{thm:asigngd_kernel}).
In \Cref{sec:theory_prompt_finetuning}, we provide a formal proof of how fine-tuning can exhibit kernel behavior (\Cref{thm:prompt_ft_kernel_regime}). The proof relies heavily on Tensor Programs, so we additionally provide a more accessible and intuitive sketch on linear networks in \Cref{sec:app_linear_intuition}.

\subsection{Preliminaries}\label{sec:app_tp_prelims}
\paragraph{Notations}

Let $\xi\in \R^{d_{in}}$ be the input of the network. Let $n$ be the hidden dimension of the network and $d_{out}$ be the output dimension of the network. 
We define the network as a function of the following form:
\[f(\xi;\{U^i\}_i, \{W^j\}_j, V) = V^\top h(\xi; \{U^i\}_i, \{W^j\}_j),\]
where $\xi$ is the input, $U^i\in \R^{n\times d_{in}}$ are the input weight matrices, $W^j\in \R^{n\times n}$ are hidden weight matrices, $V\in \R^{n\times d_{out}}$ is the output weight matrix, and $h(\xi; \{U^i\}_i, \{W^j\}_j)\in \R^n$ is the input of last layer (readout layer). \footnote{We are able to describe transformers (without weight tying) in the definition. %
The bias can be regarded as input weights assuming there is a coordinate in $\xi$ that is always 1.} 
We write $\gM$ as the set of weight matrices, i.e., $\gM=\{U^i\}_i\cup \{W^j\}_j\cup\{V\}$.
For $M\in \gM$, let $\nabla_M f(\xi)$ be the gradient of $f$ w.r.t. $M$ at input $\xi$.

To simplify the notation, we assume $d_{in}=1$ in this section. 
We will note when an extension to $d_{in}>1$ requires a non-trivial step.
For any weight matrix $M\in \gM$, 
let $\gamma_M$ be the multiplier of $M$, such that $M$ is multiplied by $\gamma_M$ before performing matrix multiplication.
Let $\eta_M$ be the learning rate of the weight $M$.
Let $\sigma_M^2$ be the variance of entries of $M$ at initialization, so each entry of $M$ is drawn $\gN(0, \sigma_M^2)$ independently. Since our focus is the prompt-based fine-tuning, we assume no change is made to the network at the beginning of fine-tuning, and the learning rates for pre-training and fine-tuning are the same unless otherwise noted.

Because we are considering the infinite-width limit, $f(\xi;\{U^i\}_i, \{W^j\}_j, V)$ actually represents a series of increasingly wide networks $\{f^n(\xi; \{U^{i,n}\}_i, \{W^{j,n}\}_j, V^n)\}_{n>0}$ of the same architecture, but $f^n$ has a hidden dimension $n$. We use the notation $f$ to include the model architecture, the training optimizer of the model, and $\gamma_M, \eta_M, \sigma_M$ for every weight matrix $M$ in the model.

Let $M_t$ be the weight matrix at time step $t$ of training. If the network is pre-trained, we let $M_{-1}$ be the weight matrix before pre-training, and $M_0$ be the parameters right after pre-training. 
Let $\Delta M_t = M_t - M_{t-1}$ be the change each training step induces. %
Let $f_t$ be the network at step $t$ that 
\[f_t(\xi) = f(\xi;\{U^i_t\}_i, \{W^j_t\}_j, V_t).\]

Let $\xi_t, y_t$ be the training input and target at step $t$, and let the loss function at step $t$ be $\ell(f_{t-1}(\xi_t), y_t)$. For ease of notation, we often absorb $y_t$ into $\ell$ and denote $\ell_t(f_{t-1}(\xi_t))\triangleq \ell(f_{t-1}(\xi_t), y_t)$. Let $\chi_t = \ell_t'(f_{t-1}(\xi_t))$ be the derivative of the loss function, as defined in \Cref{def:output_derivative}. We assume $\ell_t''$ (second derivative of $\ell_t$) is bounded\footnote{For $C$-way classification, the assumption is extended to its multivariate version: each entry of Hessian of $\ell_t$ is bounded.}, which is satisfied when $\ell$ is mean square loss or cross entropy loss.

\paragraph{Big-O Notation}
For a series of scalar random variables $c=\{c^n\}_{n>0}$ and a function $e:\sN\to \R$, we say $c=\Theta(e(n))$ if there exist $A, B$ such that for sufficiently large $n$, $|c^n|\in [Ae(n),Be(n)]$ almost surely. For a series of vector random variables $x=\{x^n\}_{n>0}$, we say that $x$ is coordinate-wise $\Theta(n^a)$, or $x=\Theta(e(n))$ if this series of scalar random variables $\left\{\|x^n\|_2/\sqrt{n}\right\}_{n>0}$ is $\Theta(e(n))$. Similarly for the notation $O(e(n))$, $\Omega(e(n))$, and $o(e(n))$. For convenience, we assume every $e(n)$ in this section is equal to $n^a$ for some $a$.

\paragraph{Tensor Programs}
We refer reader to see Section 7 of \cite{yang2021tensor4} for detailed explanation and full definition of Tensor Programs. Here, we provide a simple overview of Tensor Programs: \begin{definition}[Definition 7.1 of \cite{yang2021tensor4}]
A Tensor Program is a sequence of $\R^n$-vectors and $\R$-scalars inductively generated
via one of the following ways from an initial set $\gC$ of random scalars, $\gV$ of random $\R^n$ vectors, and a set $\gW$ of random $\R^{n\times n}$ matrices.
\begin{itemize}[leftmargin=.7in]
    \item[\textbf{MatMul}] Given $W\in \R^{n\times n}$ and $x\in \R^n$, we can generate $Wx\in \R^n$ or $W^\top x\in \R^n$.
    \item[\textbf{Nonlin}] Given $\phi:\R^k\times \R^l\to \R$, previous scalar $\theta_1,\ldots, \theta_l\in \R$ and vector $x^1,\ldots, x^k\in \R^n$, we can generate a new vector \[\phi(x^1, \ldots, x^k; \theta_1,\ldots, \theta_l)\in \R^n\]
    where $\phi(-;\theta_1,\ldots, \theta_l)$ applies coordinate-wise to each ``$\alpha$-slice '' $(x_\alpha^1,\ldots,x_\alpha^k)$.
    \item[\textbf{Moment}] Given the same setup as above, we can also generate a new scalar \[\frac1n\sum_{\alpha=1}^n \phi(x^1_\alpha, \ldots, x^k_\alpha; \theta_1,\ldots, \theta_l)\in \R. \]
\end{itemize}
\end{definition}
\citet{yang2019wide,yang2020tensor2,yang2021tensor2b,yang2022tensor5} show that Tensor Programs can express the computation, SGD/Adam optimization, and the kernel of almost any general architecture.

The key result of the Tensor Programs is that we can represent the coordinates of any vector $x$ in the Tensor Program with a random variable $Z^x$, and represent any scalar $\theta$ with a deterministic scalar $\mathring\theta$. There is a way to define all $\mathring\theta$ and $Z^x$ correspond to the Tensor Program (cf. Definition 7.3 in \cite{yang2021tensor4}), and the Master Theorem of the Tensor Program shows that $\theta \to \mathring\theta$ when $n\to \infty$ (cf. Theorem 7.4 in \cite{yang2021tensor4}). 

Although it is in general hard to compute $Z^x$ and $\mathring\theta$, it allows us to reason about the scales of vectors in the training of a network.

\paragraph{Assumptions Related to Tensor Programs.} Since we are studying the infinite width limit and using Tensor Programs as our framework, there are some mild assumptions that we need in order to apply Tensor Programs and results in \cite{yang2021tensor4}.

\begin{assumption}\label{assump:tp_nontrivial_stable}
We assume the network $f$ satisfies the following
\begin{enumerate}[label=\alph*)]
    \item The forward pass of $f$ in the infinite-width limit can be written as Tensor Programs.
    \item The hidden vectors have $\Theta(1)$ coordinates at initialization.
    \item The hidden vectors have $O(1)$ coordinates during training.
    \item For any training scheme\footnote{Training scheme means a sequence of training examples $\{(\xi_t, y_t\}_{t>0}$, and loss function $\ell(f_t(\xi_t),y_t)$.} and any constant $t$ and any input $\xi$, $f_t(\xi)=O(1)$.
    \item There exist a training scheme and some constant $t$ and input $\xi$ such that $f_t(\xi)-f_0(\xi)=\Theta(1)$.
    \item The activation function of $f$ is tanh or $\sigma$-gelu for a small enough $\sigma$ (so it approximates ReLU), where 
    \[\sigma\text{-}gelu(x) = \frac12x\mathrm{erf}(\sigma^{-1}x)+\sigma\frac{e^{-\sigma^{-2}x^2}}{2\sqrt \pi}+\frac x 2.\] 
\end{enumerate}
Furthermore, we have two assumption on SignGD: 

\begin{enumerate}
\item[g)] SignGD is approximated as the $\sign$ function being replaced with $\epssign$ for small enough $\eps$ when updating parameters, where $\epssign(x)=\frac{x}{|x|+\eps}$ is smoothed version of $\sign$. We assume using different $\eps$ when computing the sign of $\nabla_M f$, so that $\eps$ for $\nabla_M f$ match the maximum scale of $\nabla_M f$.
\item[h)] The ratio between the learning rate of SignGD in prompt-based fine-tuning and the learning rate of pre-training matches the maximum $\chi$ after pre-training. That is, we assume $\eta_M= \Theta(\eta_M^{\text{PT}} \cdot \chi_{\max})$ where $\eta_M^{\text{PT}}$ is learning rate of pre-training for SignGD, and $\chi_{\max} =\max_{(\xi, y)\in \Xi} \chi(\xi, y, f_0)$.
\end{enumerate}
\end{assumption}

b), c), d) and e) in \Cref{assump:tp_nontrivial_stable} together recover the definition of nontrivial stable network in \cite{yang2021tensor4}. b) and c) ensure that the pre-activations in the network are not too large, so that activation functions (e.g., tanh) are not trivialized to always output $\pm 1$.  b) ensures that the pre-activations in the network are not too small at initialization, so the activation function is not trivialized to its first-order Taylor expansion. d) ensures the network output is bounded. e) ensures that the network is not frozen during training (i.e., learning can occur).

f) and g) in \Cref{assump:tp_nontrivial_stable} assures all non-linear functions that appear in the Tensor Programs is pseudo-Lipschitz, which is required for the Master Theorem of Tensor Programs. g) also assures that $\epssign$ is not trivialize to $0$ or $\sign$ when $\nabla_M f\neq \Theta(1)$.

h) in \Cref{assump:tp_nontrivial_stable} assures when $\chi=o(1)$, updates of SignGD in fine-tuning is not of bigger scale than SGD. It is also observed in practice that the optimal learning rate for fine-tuning is smaller than the learning rate for pre-training.

\subsection{SignGD Kernel Derivation}\label{sec:theory_signgd_kernel}

\begin{definition}[Formal Definition of Kernel Behavior]\label{def:theory_kernel_behavior}

	We say that this network training process demonstrates \textit{kernel behavior} if the following properties are satisfied.
	\begin{enumerate}
   		\item \textit{Linearization}: The change of the network can be approximated by its first order Taylor expansion, i.e., 
\[\lim_{n\to \infty}\frac{f_t(\xi)-f_{t-1}(\xi)}{\chi_{\max}}=\lim_{n\to\infty} \sum_{M\in \gM} \left\langle \nabla_{M} f_{t-1}(\xi),\frac{\Delta M_t}{\chi_{\max}} \right\rangle;\] where $\chi_{\max}=\max_{(\xi, y)\in \Xi} \chi(\xi, y, f_0)$, $\Xi$ is the training dataset.
		\item \textit{Fixed Features}: The gradients at step $t$ are approximately the same as before training, i.e.,
\[\forall M\in \gM,  \lim_{n\to\infty} \frac{\|\nabla_M f_t(\xi) - \nabla_M f_0(\xi)\|^2_2}{\max_{\xi'} \|\nabla_{M} f_0(\xi')\|^2_2}=0.\]
	\end{enumerate}
\end{definition}
Note that we define Linearization with both LHS and RHS divided by $\chi_{\max}$ so it is meaningful for the case of $\chi=o(1)$. We do the same thing in the following theorem.
\begin{theorem}[SignGD Kernel]\label{thm:theory_signgd_kernel}
    If SignGD training of $f$ demonstrates kernel behavior, then under \Cref{assump:tp_nontrivial_stable},
    \[\lim_{n\to \infty} \frac{f_t(\xi)-f_{t-1}(\xi)}{\chi_{\max}}=\lim_{n\to\infty} \sum_{M\in \gM} -\tilde\eta_M\left\langle \nabla_{M} f_0(\xi),\epssign(\nabla_{M} f_0(\xi_t)) \right\rangle,\]
    where $\tilde\eta_M=\eta_M\sign(\chi_t)/\chi_{\max}$.
\end{theorem}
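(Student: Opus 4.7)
The plan is to mirror the informal sketch of \Cref{thm:asigngd_kernel} from the main text, but rigorously push each approximation through the $n \to \infty$ limit using Tensor Programs. The derivation has three stages: expand the SignGD update, apply Linearization, then apply Fixed Features.

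First, by the chain rule $\nabla_M \ell_t = \chi_t \nabla_M f_{t-1}(\xi_t)$, so the SignGD step is $\Delta M_t = -\eta_M \epssign(\chi_t \nabla_M f_{t-1}(\xi_t))$. Since $\chi_t$ is a scalar, assumption (g) on how the $\eps$-parameter is chosen to match the magnitude of $\nabla_M f$ lets us factor $\epssign(\chi_t x) = \sign(\chi_t)\epssign(x)$, yielding $\Delta M_t/\chi_{\max} = -\tilde\eta_M \epssign(\nabla_M f_{t-1}(\xi_t))$. Combined with the Linearization clause of \Cref{def:theory_kernel_behavior}, this gives
\begin{equation*}
    \lim_{n \to \infty} \frac{f_t(\xi) - f_{t-1}(\xi)}{\chi_{\max}}
    = -\lim_{n \to \infty} \sum_{M \in \gM} \tilde\eta_M \bigl\langle \nabla_M f_{t-1}(\xi),\, \epssign(\nabla_M f_{t-1}(\xi_t)) \bigr\rangle.
\end{equation*}

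Next I would invoke Fixed Features to replace each $\nabla_M f_{t-1}$ by $\nabla_M f_0$. For the first argument of the inner product, a Cauchy--Schwarz bound gives an error controlled by $\|\nabla_M f_{t-1}(\xi) - \nabla_M f_0(\xi)\|_2 \cdot \|\epssign(\nabla_M f_{t-1}(\xi_t))\|_2$; the second factor is at most $\sqrt n$, and the first is $o(\max_{\xi'}\|\nabla_M f_0(\xi')\|_2)$ by Fixed Features, so the error vanishes relative to the surviving term after the $\chi_{\max}$ rescaling. For the argument of $\epssign$, I use that $\epssign$ is Lipschitz with constant $O(1/\eps)$ and pseudo-Lipschitz under assumption (f), so Fixed Features transports the coordinate-wise closeness through $\epssign$; the Tensor Programs Master Theorem then converts this into closeness of the associated $Z$-variables and hence of the inner product itself.

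The main obstacle I anticipate is scale bookkeeping through the $\epssign$ nonlinearity. The parameter $\eps$ must be chosen to track the $n$-dependent scale of $\nabla_M f$, while $\chi_{\max}$ itself may vanish on natural tasks (\Cref{def:solvable_task}). This is where assumption (h) does the real work: by pegging the fine-tuning learning rate to $\chi_{\max}$, it ensures $\tilde\eta_M$ and each of the surviving inner products stay at the correct order, so that the right-hand side has a well-defined non-degenerate limit and the two limits match. Everything else is a routine application of the Tensor Programs Master Theorem to the finite program representing a single SignGD step.
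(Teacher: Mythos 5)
Your overall route is the same as the paper's: factor the scalar sign out of the SignGD update, apply Linearization to reduce the claim to showing that each per-matrix term $\tilde\eta_M\langle\nabla_M f_{t-1}(\xi),\epssign(\nabla_M f_{t-1}(\xi_t))\rangle$ can have its gradients replaced by $\nabla_M f_0$, and then use Cauchy--Schwarz together with the $1/\eps$-Lipschitzness of $\epssign$ and the choice of $\eps$ in assumption (g) to push Fixed Features through the nonlinearity. (The appeal to the Master Theorem for the second argument is an unnecessary detour --- the paper handles this step purely by the norm/Lipschitz estimate --- but it is not wrong.)

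The genuine gap is in how you close the estimate. Your error bound after Cauchy--Schwarz is of size $\tilde\eta_M\cdot o\bigl(\max_{\xi'}\|\nabla_M f_0(\xi')\|_2\bigr)\cdot\Theta(\sqrt N)$ (with $N$ the number of entries of $M$, not $n$), and you assert this ``vanishes relative to the surviving term'' and that assumption (h) ``ensures the correct order.'' Neither claim is justified as stated: the surviving inner product can be far smaller than the product of the norms (cancellation), so relative smallness with respect to it is not what Fixed Features gives you, and assumption (h) by itself only pegs the fine-tuning learning rate to the pre-training one --- it does not tell you the absolute scale of $\tilde\eta_M\|\nabla_M f_0(\xi^*)\|_2\,\|\epssign(\nabla_M f_0(\xi^*))\|_2$, where $\xi^*=\argmax_{\xi'}\|\nabla_M f_0(\xi')\|_2$. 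The paper supplies the missing ingredient with a separate argument you do not have: it invokes stability, assumption (d), for an \emph{auxiliary} one-step training scheme that feeds $\xi_1=\xi^*$ with a loss making $\chi_1=\Theta(1)$, deduces that $\eta_M\langle\nabla_M f_0(\xi^*),\epssign(\nabla_M f_0(\xi^*))\rangle/\chi_1=O(1)$, and then uses (h) to transfer this $O(1)$ bound on the learning-rate scale back to the actual fine-tuning run, concluding $\tilde\eta_M\|\nabla_M f_0(\xi^*)\|_2\|\epssign(\nabla_M f_0(\xi^*))\|_2=O(1)$ (the two quantities have the same scale because the $\epssign$ vector has $\Theta(1)$ coordinates at $\xi^*$ by assumption (g)). Only with this normalizer in hand do the three relative $o(1)$ bounds from Fixed Features (on the gradient difference, on the $\epssign$ difference, and on their product) become absolute $o(1)$ statements, which is what the theorem's limit equality requires. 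Without that step your argument does not close.
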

Note if $\eta_M=\eta$, the RHS of the equation above equals to 
\[  -\frac{\eta\sign(\chi_t)}{\chi_{\max}}\langle \nabla f_0(\xi), \epssign(\nabla f_0(\xi_t)\rangle\approx-\frac{\eta\sign(\chi_t)}{\chi_{\max}}\kasigngd(\xi, \xi_t),\]
where the approximation comes from the difference between $\epssign$ and $\sign$.
\begin{proof}
By the update rule of SignGD, $\frac{\Delta M_t}{\chi_{\max}}=-\tilde\eta_M\epssign(\nabla_M f_{t-1})$. It suffices to prove \[\tilde\eta_M \left\langle \nabla_{M} f_t(\xi),\epssign(\nabla_{M} f_t(\xi_t)) \right\rangle=\tilde\eta_M\left\langle \nabla_{M} f_0(\xi),\epssign(\nabla_{M} f_0(\xi_t)) \right\rangle\] when $n\to \infty$.

Since 
\begin{align}
    &~\tilde\eta_M \left\langle \nabla_{M} f_t(\xi),\epssign(\nabla_{M} f_t(\xi_t)) \right\rangle-\tilde\eta_M\left\langle \nabla_{M} f_0(\xi),\epssign(\nabla_{M} f_0(\xi_t)) \right\rangle \nonumber\\
    =&~ \tilde\eta_M \left\langle \nabla_{M} f_t(\xi)-\nabla_{M} f_0(\xi),\epssign(\nabla_{M} f_t(\xi_t)) \right\rangle+\label{eq:signgd_1}\\
    &~\tilde\eta_M \left\langle \nabla_{M} f_t(\xi),\epssign(\nabla_{M} f_t(\xi_t))-\epssign(\nabla_{M} f_0(\xi_t)) \right\rangle+\label{eq:signgd_2}\\
    &~\tilde\eta_M \left\langle \nabla_{M} f_t(\xi)-\nabla_{M} f_0(\xi),\epssign(\nabla_{M} f_t(\xi_t))-\epssign(\nabla_{M} f_0(\xi_t)) \right\rangle\label{eq:signgd_3},
\end{align}
we only need to prove \Cref{eq:signgd_1,eq:signgd_2,eq:signgd_3} are all 0 when $n\to \infty$.

Let $\xi^*=\argmax_{\xi'}\|\nabla_{M} f_0(\xi')\|^2_2$ be the input of maximum gradient scale, then by Fixed Features, we have 
\begin{equation}\label{eq:fixed_feature_xi_prime}
    \frac{\|\nabla_M f_t(\xi) - \nabla_M f_0(\xi)\|_2}{\|\nabla_{M} f_0(\xi^*)\|_2}=o(1).
\end{equation}
Since $\epssign(x)-\epssign(y)\leq |x-y|/\eps$,
\begin{align}
    \|\epssign(\nabla_M f_t(\xi)) - \epssign(\nabla_M f_0(\xi))\|_2\leq \|\nabla_M f_t(\xi) - \nabla_M f_0(\xi)\|_2/\eps.\label{eq:signgd_4}
\end{align}
Combined with $\|\nabla_M f_0(\xi^*)\|_2/\sqrt N=\Theta(\eps)$ ($N$ is the number of entries of $M$, this is by g) of \Cref{assump:tp_nontrivial_stable}), we have 
\begin{align}
    &~\frac{\|\epssign(\nabla_M f_t(\xi)) - \epssign(\nabla_M f_0(\xi))\|_2}{\|\epssign(\nabla_{M} f_0(\xi^*))\|_2} \nonumber\\
    \leq &~\frac{ \|\nabla_M f_t(\xi) - \nabla_M f_0(\xi)\|_2/\eps}{\|\epssign(\nabla_{M} f_0(\xi^*))\|_2} & \text{by \cref{eq:signgd_4}}\nonumber\\
    =&~\frac{ \|\nabla_M f_t(\xi) - \nabla_M f_0(\xi)\|_2}{\|\nabla_M f_0(\xi^*)\|_2}\cdot \frac{\|\nabla_M f_0(\xi^*)\|_2/\sqrt N}{\eps \|\epssign(\nabla_{M} f_0(\xi^*))\|_2/\sqrt N}\nonumber\\
    =&~\frac{ \|\nabla_M f_t(\xi) - \nabla_M f_0(\xi)\|_2}{\|\nabla_M f_0(\xi^*)\|_2}\cdot \Theta(1)=o(1). \label{eq:fixed_feature_xi_prime_sign}
\end{align}

By d) in \Cref{assump:tp_nontrivial_stable}, and consider the training scheme that sets $\xi_1=\xi^*$ and the loss function $\ell_t$ so $\chi_1=\Theta(1)$, then
\[\frac{f_1(\xi^*)-f_0(\xi^*)}{\chi_1}=-\frac{\eta_M\sign(\chi_1)}{\chi_1}\left\langle \nabla_{M} f_0(\xi^*),\epssign(\nabla_{M} f_0(\xi^*)) \right\rangle=O(1).\]
By h) in \Cref{assump:tp_nontrivial_stable}, the scale of $\tilde\eta_M$ is identical across different training scheme, so we have 
\[-\tilde\eta_M\left\langle \nabla_{M} f_0(\xi^*),\epssign(\nabla_{M} f_0(\xi^*)) \right\rangle=O(1).\]

And it is easy to see that $\tilde\eta_M\|\nabla_{M} f_0(\xi^*)\|_2\|\epssign(\nabla_{M} f_0(\xi^*))\|_2$ has the same scale as 
$\tilde\eta_M\left\langle \nabla_{M} f_0(\xi^*),\epssign(\nabla_{M} f_0(\xi^*)) \right\rangle$, which is $O(1)$.

Given \Cref{eq:fixed_feature_xi_prime,eq:fixed_feature_xi_prime_sign}, we are about to prove \Cref{eq:signgd_1,eq:signgd_2,eq:signgd_3} divided by $\tilde\eta_M\|\nabla_{M} f_0(\xi^*)\|_2\|\epssign(\nabla_{M} f_0(\xi^*))\|_2$ are all 0 when $n\to\infty$. Provided that $\tilde\eta_M\|\nabla_{M} f_0(\xi^*)\|_2\|\epssign(\nabla_{M} f_0(\xi^*))\|_2=O(1)$, it will imply \Cref{eq:signgd_1,eq:signgd_2,eq:signgd_3} are all 0 when $n\to\infty$, thus conclude our whole proof.

For \Cref{eq:signgd_1},
\begin{align*}
    &~\frac{\tilde\eta_M \left\langle \nabla_{M} f_t(\xi)-\nabla_{M} f_0(\xi),\epssign(\nabla_{M} f_t(\xi_t)) \right\rangle}{\tilde\eta_M\|\nabla_{M} f_0(\xi^*)\|_2\|\epssign(\nabla_{M} f_0(\xi^*))\|_2}\\
    \leq&~ \frac{\|\nabla_{M} f_t(\xi)-\nabla_{M} f_0(\xi)\|_2\|\epssign(\nabla_{M} f_t(\xi_t))\|_2}{\|\nabla_{M} f_0(\xi^*)\|_2\|\epssign(\nabla_{M} f_0(\xi^*))\|_2}\\
    =&~ \frac{\|\nabla_{M} f_t(\xi)-\nabla_{M} f_0(\xi)\|_2}{\|\nabla_{M} f_0(\xi^*)\|_2}=o(1). & \text{by \cref{eq:fixed_feature_xi_prime}}
\end{align*}
Similarly, for \Cref{eq:signgd_2},
\begin{align*}
    &~\frac{\tilde\eta_M \left\langle \nabla_{M} f_t(\xi),\epssign(\nabla_{M} f_t(\xi_t))-\epssign(\nabla_{M} f_0(\xi_t)) \right\rangle}{\tilde\eta_M\|\nabla_{M} f_0(\xi^*)\|_2\|\epssign(\nabla_{M} f_0(\xi^*))\|_2}\\
    \leq &~ \frac{\|\epssign(\nabla_M f_t(\xi)) - \epssign(\nabla_M f_0(\xi))\|_2}{\|\epssign(\nabla_{M} f_0(\xi^*))\|_2}=o(1), & \text{by \cref{eq:fixed_feature_xi_prime_sign}}
\end{align*}
and for  \Cref{eq:signgd_3},
\begin{align*}
    &~\frac{\tilde\eta_M \left\langle \nabla_{M} f_t(\xi)-\nabla_{M} f_0(\xi),\epssign(\nabla_{M} f_t(\xi_t))-\epssign(\nabla_{M} f_0(\xi_t)) \right\rangle}{\tilde\eta_M\|\nabla_{M} f_0(\xi^*)\|_2\|\epssign(\nabla_{M} f_0(\xi^*))\|_2}\\
    \leq &~ \frac{\|\epssign(\nabla_M f_t(\xi)) - \epssign(\nabla_M f_0(\xi))\|_2}{\|\epssign(\nabla_{M} f_0(\xi^*))\|_2}\cdot \frac{\|\nabla_{M} f_t(\xi)-\nabla_{M} f_0(\xi)\|_2}{\|\nabla_{M} f_0(\xi^*)\|_2}\\
    =&~ o(1). & \text{by \cref{eq:fixed_feature_xi_prime,eq:fixed_feature_xi_prime_sign}}
\end{align*}
\end{proof}

\subsection{Prompt-based Fine-Tuning}\label{sec:theory_prompt_finetuning}
Prompt-based fine-tuning uses the pre-trained network directly without substituting or adding any parameters. 
Therefore, without any additional assumptions, the behaviors of fine-tuning and pre-training are the same from the perspective of the Tensor Programs.
We thus adopt the assumption that $\chi = o(1)$ before fine-tuning (\Cref{def:solvable_task}). Without the assumption, the fine-tuning of $f$ will not exhibits kernel behavior if the pre-training is in feature learning regime. 
Intuitively, this assumption is believable because wider pre-trained networks can solve downstream tasks better. 
In this section, we prove that prompt-based fine-tuning exhibits kernel behavior when this assumption holds. 

\begin{theorem}\label{thm:theory_prompt_finetuning}
    If the downstream task $\Xi$ is natural for network $f$, that is, \[\chi_{\max}\triangleq\max_{(\xi, y)\in \Xi} \chi(\xi, y, f_0)=o(1),\] then under \Cref{assump:tp_nontrivial_stable},
    the fine-tuning of $f$ exhibits kernel behavior (\Cref{def:theory_kernel_behavior}).
\end{theorem}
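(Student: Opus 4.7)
The plan is to prove Linearization and Fixed Features jointly by induction on the time step $t$. The central quantitative claim carried through the induction is that for any constant $t$, (i) $|\chi_s| = O(\chi_{\max})$ for every $s \le t$, and (ii) every per-step update $\Delta M_s$ has effective Tensor-Programs scale $O(\chi_{\max})$, so the cumulative drift $M_t - M_0$ is also $O(\chi_{\max})$ in the same sense. Since $\chi_{\max} \to 0$ by \Cref{def:solvable_task}, this smallness simultaneously forces the Taylor remainder in Linearization to be lower-order and the gradient perturbation in Fixed Features to be negligible against the base scale.

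First I would establish the scale of $\Delta M_s$ uniformly for SGD and SignGD. For SGD the update is $-\eta_M \chi_s \nabla_M f_{s-1}(\xi_s)$, which is directly $O(\chi_{\max})$ given the inductive bound on $\chi_s$ and the boundedness of the initial gradient scale guaranteed by Assumptions (b)--(e) in \Cref{assump:tp_nontrivial_stable}. For SignGD the $\sign$ cancels the factor $\chi_s$, but Assumption (h) rescales $\eta_M$ for fine-tuning by exactly $\chi_{\max}$ relative to the pre-training learning rate (which was calibrated to produce $\Theta(1)$ updates), so again $\Delta M_s = O(\chi_{\max})$. To verify Linearization, I would Taylor-expand $f_t(\xi) - f_{t-1}(\xi)$ around $\theta_{t-1}$: the first-order term matches the sum appearing in \Cref{def:theory_kernel_behavior}, while the remainder is bounded by the Hessian operator norm times $\|\Delta M_t\|_{\text{eff}}^2 = O(\chi_{\max}^2)$, with the Hessian bounded in the infinite-width limit by smoothness of tanh or $\sigma$-gelu (Assumption (f)). Dividing by $\chi_{\max}$ leaves an $O(\chi_{\max}) \to 0$ remainder. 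For Fixed Features I would apply the Master Theorem of Tensor Programs to the backward-pass program (extending \cite{yang2022tensor5}): since $M_t - M_0 = O(\chi_{\max})$ and the program's nonlinearities are pseudo-Lipschitz (using Assumption (g) to handle the $\epssign$ in SignGD), continuity in the random-variable representation gives $\|\nabla_M f_t(\xi) - \nabla_M f_0(\xi)\|_2 = O(\chi_{\max}) \cdot \max_{\xi'}\|\nabla_M f_0(\xi')\|_2$, which vanishes after the normalization in \Cref{def:theory_kernel_behavior}. The induction closes because the Linearization bound, summed over $s \le t$, yields $|f_t(\xi_t) - f_0(\xi_t)| = O(\chi_{\max})$, and boundedness of $\ell''_t$ then gives $|\chi_t| \le |\chi(\xi_t, y_t, f_0)| + O(\chi_{\max}) = O(\chi_{\max})$.

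The main obstacle is the Tensor-Programs bookkeeping required to assert that the backward pass at the perturbed parameters $\theta_t$ has the same infinite-width limit as at $\theta_0$ up to an additive error linear in $\|\theta_t - \theta_0\|_{\text{eff}}$. Unlike the standard forward-pass stability analysis from pre-training, here the joint forward--backward program is seeded not by fresh Gaussians but by the pre-trained weights, which are themselves outputs of a prior Tensor Program, so one must carefully track how cross-step correlations propagate through the combined program. The $\epssign$ nonlinearity is a particular pain point: it is Lipschitz only with constant $O(1/\eps)$, so Assumption (g) must be invoked to balance the smoothing scale against the gradient magnitude before the random-variable continuity argument of the Master Theorem applies.
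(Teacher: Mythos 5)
Your high-level skeleton agrees with the first half of the paper's argument: both you and the paper inductively control the output derivative via $|\chi_t|\le|\chi(\xi_t,y_t,f_0)|+C\,|f_{t-1}(\xi_t)-f_0(\xi_t)|=O(\chi_{\max})$ using bounded $\ell_t''$, and both observe that every per-step update carries an effective factor $\chi_{\max}$ (for SGD directly, for SignGD via item h) of \Cref{assump:tp_nontrivial_stable}). The gap is in the second half, which is the actual content of \Cref{thm:theory_prompt_finetuning}. You justify Linearization by ``the Taylor remainder is bounded by the Hessian operator norm times $\|\Delta M_t\|^2_{\mathrm{eff}}$, with the Hessian bounded in the infinite-width limit by smoothness of tanh or $\sigma$-gelu,'' and Fixed Features by ``continuity in the random-variable representation.'' Neither step is available in this form. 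Whether second-order terms are subleading is a parametrization-dependent statement about width exponents, not a consequence of activation smoothness: the per-layer learning rates, multipliers and (crucially) the \emph{post-pre-training} scales of the weights all enter, and after feature-learning pre-training these scales differ from the initialization scales (e.g.\ the output layer, which is why the paper must replace $b_{L+1}$ by $b_{L+1}'=\min(b_{L+1},a_{L+1}+c)$). Naive operator-norm bounds do not respect these heterogeneous scalings, and the Tensor Programs Master Theorem gives limits of scalars and coordinate distributions, not a Lipschitz estimate of the entire backward pass in terms of $\|\theta_t-\theta_0\|$. You flag exactly this bookkeeping as ``the main obstacle,'' but the theorem \emph{is} that obstacle; asserting the two bounds is circular.

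The paper closes this gap by a different mechanism that your sketch is missing: it recasts $\chi_{\max}=\Theta(n^{-d})$ as an effective increase of the learning-rate exponent from $c$ to $c'=c+d$ in the $abc$-parametrization of \citet{yang2021tensor4}. Stability of the pre-training scheme (items b--e of \Cref{assump:tp_nontrivial_stable} via their Theorem 3.3) gives $r\ge 0$, so fine-tuning has $r'=r+d>0$, which by their Theorem 3.8 together with Proposition H.27 and Theorem H.32 yields precisely the Fixed Features and Linearization statements of \Cref{def:theory_kernel_behavior} (after the $b_{L+1}'$ adjustment above); the identity $\sign(xy)=\sign(x)\sign(y)$ plus items g)--h) transfers the outer-product structure of the updates and hence the argument to SignGD with $\epssign$, and the Tensor Programs formalism extends it to general computation graphs and weight sharing. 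Without this regime-shift reduction (or a genuine substitute for it, i.e.\ a worked-out induction over the forward and backward programs seeded by the pre-trained weights, tracking the scales of $x^l_t-x^l_0$ and of the normalized gradients), your perturbative induction does not close.
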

Below we provide a proof that is heavily based on Tensor Programs and the analysis in \cite{yang2021tensor4}. For readers who are not familiar with Tensor Programs, we provide intuitive examples in the next few subsections, where we focus on a three-layer linear network parameterized with $\mup$.
\begin{proof}
The high-level proof consists of two parts: 1) we prove after each step, the update of the function $f$ is $O(\chi_t)$. Combined $\ell_t''$ always bounded by some constant $C$, we can inductively prove $\chi_{t}\leq \chi(\xi_{t}, y_t, f_0) + C \cdot |f_{t-1}(\xi_t) - f_0(\xi_t)| = O(\chi_{\max})$ for all $t$. 2) Given $\chi_{t}= O(\chi_{\max}) = o(1)$, we show the fine-tuning exhibits kernel behavior.

We first prove the theorem under the assumption that the network is a multilayer perceptron and the optimizer is SGD, which is the same setting as \cite{yang2021tensor4}. We will later extend this to more general cases.

Consider the following $L$-hidden-layer perceptron:
\[h^1(\xi)=U\xi,\]
and \[x^l(\xi)=\phi(h^l(\xi)),\quad h^{l+1}(\xi)=W^{l+1}x^l(\xi), \text{ for }l=1,\ldots,L-1,\]
and \[f(\xi)=Vx^L(\xi).\]
Following \cite{yang2021tensor4}, we let the learning rate for every parameter equal to $\eta n^{-c}$. Let $W^1=U$ and $W^{L+1}=V$, and for $l=1,\ldots,L+1$, we parametrize $W^l$ as $W^l=\gamma_l w^l$ for actual trainable parameter $w^l$, and we initialize each coordinate $w^l$ i.i.d. from $\gN(0, \sigma^2_l)$. The setting covers all possible parameterizations based on \Cref{lem:mup_freedom}. For convenience, we assume $\gamma_l=n^{-a_l}$ and $\sigma_l=n^{-b_l}$. Without loss of generality, we further assume that $\chi_{\max}=\Theta(n^{-d})$. Below, we will also inductively show $\chi_t=O(n^{-d})$ by showing $|f_{t+1}-f_t|=O(n^{-d})$.

By Theorem 3.3 of \cite{yang2021tensor4}, stable network implies 
\[r\triangleq \min(a_{L+1}+b_{L+1}, 2a_{L+1}+c)+c-1+\min_{l=1}^L[2a_l+\sI(l=1)]\geq 0.\]
Also by Theorem 3.8 of \cite{yang2021tensor4}, for nontrivial stable network (included in \Cref{assump:tp_nontrivial_stable}), if $r>0$ then there exists a kernel $\gK$ such that \[f_{t+1}(\xi)=f_t(\xi)-\eta \chi_t \gK(\xi,\xi_t),\]
which is very close to our definition of kernel behavior. In fact, we will prove that they are equivalent in the fine-tuning case. 

Since $\chi_t=O(n^{-d})$ for fine-tuning, it is equivalent to set the learning rate to $\eta n^{-c-d}$ and replace $\chi_t$ with $\hat\chi_t=n^d\chi_t=O(1)$.
Formally, we are considering the following training scheme: at the pre-training stage, $r\geq 0$ (so it could demonstrate feature learning or kernel behavior); at the fine-tuning stage, $c$ is increased to $c'\triangleq c+d>c$, thus, the corresponding $r$ is increased to be strictly greater than 0. Therefore, it suggests kernel behavior with following caveats. 

\paragraph{Do we handle the case of different learning rates during pre-training and fine-tuning?}
The answer is \emph{effectively YES}, because the above scheme is equivalent to training from scratch with learning rate $\eta n^{c-d}$. First of all, the scale of the update on $W^l$, $h^l$, $x^l$ and $f$ are all multiplied by $n^{-d}$ when switching from the pre-training stage ($\eta n^{-c}$ learning rate) to the fine-tuning stage($\eta n^{-c-d}$ learning rate). The scales are exactly the same as training from scratch with $\eta n^{-c-d}$ learning rate except $b_{L+1}$ needs to be changed to $b_{L+1}'\triangleq\min(b_{L+1}, a_{L+1}+c)$. Note this change of $b_{L+1}$ does not affect the fact that $r$ is updated to $r'\triangleq r+d>0$.

\paragraph{Does $r'>0$ formally imply our definition of kernel behavior (\Cref{def:theory_kernel_behavior})?} The answer is \emph{YES}. We first prove Fixed Features in \Cref{def:theory_kernel_behavior}. The gradient of matrix $W^l$ is equal to outer product between $\nabla_{h^l} f$ (gradient w.r.t. $h^l$) and $x^{l-1}$. Let $dh^l_t$ be the normalized gradient w.r.t. $h^l$ at step $t$ (so $dh^l_t=\Theta(1)$), and $x^l_t$ be the $x^l$ at step $t$ ($x^l_t=\Theta(1)$ without normalization). It suffices to prove $dh^l_t-dh^l_0=O(1)$ and $x^l_t-x^l_0=o(1)$. The later was proved by Proposition H.27 of \cite{yang2021tensor4}. To prove $dh^l_t-dh^l_0=O(1)$, we let $dx_t^l$ be the the normalized gradient w.r.t. $x^l$ at step $t$, and compute the scale of $dh_t^l-dh_{t-1}^l$ and $dx_t^l-dx_{t-1}^l$ inductively from $l=L$ to $l=1$. We obtain that they both has the same scale of
\[n^{-\min(2a_{L+1}+c-a_{L+1}-b_{L+1}', a_{L+1}+b_{L+1}+c'-1+\min_{m=l+1}^L 2a_m)}\leq n^{-\min(0, r')}=1,\]
the inequality is because $b_{L+1}'\leq a_{L+1}+c$ and $r'\leq a_{L+1}+b_{L+1}+c'-1+\min_{m=l+1}^L 2a_m$.

Second, we prove Linearization in \Cref{def:theory_kernel_behavior}. We need to first make a slight modification to the Tensor Program in \cite{yang2021tensor4}, that is, changing the computation of $f_t(\xi)-f_{t-1}(\xi)$ to $n^d (f_t(\xi)-f_{t-1}(\xi))$. By Theorem H.32 of \cite{yang2021tensor4} and its definition of $\Sigma$, we can show that 
\begin{align*}
    \lim_{n\to \infty} n^{d} (f_t(\xi)-f_{t-1}(\xi))=&~\lim_{n\to\infty} \sum_{l=1}^{L+1} \eta n^{-c}\frac{\chi_t}{n^{-d}}\left\langle \nabla_{W^l} f_{t-1}(\xi),\nabla_{W^l} f_{t-1}(\xi_t) \right\rangle\\
    =&~\lim_{n\to\infty} \sum_{l=1}^{L+1} \left\langle \nabla_{W^l} f_{t-1}(\xi),\frac{\Delta W_t^l}{n^{-d}} \right\rangle.
\end{align*}

This is exactly Linearization in \Cref{def:theory_kernel_behavior} if we multiply $n^{-d} / \chi_{\max}$ on both side. Meanwhile, it also implies $f_t(\xi) - f_{t-1}(\xi)=O(n^{-d})$.

\paragraph{From SGD to SignGD.} Since $\sign(xy)=\sign(x)\sign(y)$, the update of matrix $W^l$ can still be written as outer product of two vectors, i.e., $\Delta W_t^l=\eta n^{-c-d}\sign(\chi_t)\sign(\nabla_{h^l} f_{t-1}) \otimes \sign(x^{l-1}_{t-1})$. After applying $\sign$, the scale of vector changes. If the parametrization is the same, the scales of vectors using SignGD will be different from those using SGD. This can be easily resolved by changing learning rates for each parameter (as in \Cref{assump:tp_nontrivial_stable}), so the scaling change brought by $\sign$ is corrected. Furthermore, as also mentioned in \Cref{assump:tp_nontrivial_stable}, we need to approximate $\sign$ by a smoothed version $\epssign$ so the Master Theorem of Tensor Programs can still apply.

\paragraph{Extension to universal architectures.} The theorem can apply to any network whose first forward pass can be written as Tensor Programs. Given this condition, the forward pass, backward pass, and kernel of any step can be written as Tensor Programs \citep{yang2020tensor2,yang2020tensor3}. To analyse the scaling of the Tensor Program will need the following steps:
\begin{enumerate}
    \item \emph{Extension to general computation graph.} We can still inductively reason about the scale of preactivations and activations by the topological order of the computation graph; and similarly reason about the gradient by the reverse topological order.
    \item \emph{Extension to weight sharing.} We may use weights multiple times in a forward pass. The preactivations, activations and their gradients will not be affected. Only the update of a weight is now a sum of several vector outer product depending on the number of occurrence of the weight.
\end{enumerate}
\end{proof}

\subsection{$\mup$ for SGD and SignGD}
In the following subsections, we provide more intuition for \Cref{thm:theory_prompt_finetuning}. 
Although we consider all types of pre-trained models, we are mostly interested in models with feature learning behavior, because it is likely not true that gradients can be approximated as fixed throughout the entirety of \textit{pre-training}.
For pre-trained models with kernel behavior, it is obvious that fine-tuning with the same settings as pre-training (i.e., prompt-based FT) will also exhibit kernel behavior. 
Furthermore, Theorem H.17 of \cite{yang2021tensor4} proved that if the last layer is replaced with a freshly initialized layer (i.e., standard FT), fine-tuning from a pre-trained models with kernel behavior is the same as training on the downstream task from scratch.

Among all the pre-training schemes that exhibit feature learning behavior, $\mup$ is special because each parameter (except the last layer) can \textit{on its own} push the model to perform feature learning. 
Therefore, to build an intuitive description of fine-tuning behavior, we assume that the model was pre-trained by $\mup$. 
We note again that our main result \textit{does not require} this assumption.

The formulation of $\mup$ contains three sets of hyperparameters: initial variance of $M$, multiplier of $M$ and learning rate of $M$ for $M\in \{U^i\}_i \cup \{W^j\}_j \cup \{V\}$.
However, even if we restrict these three hyperparameters to be in the form of $n^\alpha$, $\mup$ is not unique, because there is one degree of freedom for each weight according to the following lemma.
\begin{lemma}[Lemma J.1 of \cite{yang2022tensor5}]\label{lem:mup_freedom}
Consider a weight matrix $M$ with learning rate $C$, initialized as $M \sim \gN(0, B^2)$, and with a multiplier $A$. Then for any $\gamma > 0$, $f_t(\xi)$ stays fixed for all $t$
and $\xi$ if we set
\begin{itemize} 
    \item $A\gets A\gamma, B\gets B/\gamma, C\gets C/\gamma^2$ if training with SGD.
    \item $A\gets A\gamma, B\gets B/\gamma, C\gets C/\gamma$ if training with Adam.
\end{itemize}
\end{lemma}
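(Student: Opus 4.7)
The plan is to exhibit an explicit coupling between the two parametrizations in which the ``effective'' weight that actually enters the forward pass agrees at every training step, so that the function $f_t(\xi)$ is literally identical on the coupled sample path. Write the weight appearing in the forward computation as $W_{\mathrm{eff}} = A M$, where $M$ is the trainable parameter. For the rescaled hyperparameters $(A',B',C') = (A\gamma, B/\gamma, C/\gamma^2)$ (SGD case) or $(A\gamma, B/\gamma, C/\gamma)$ (Adam case), let $M'$ denote the trainable parameter in the rescaled scheme, and couple at initialization by setting $M' = M/\gamma$. Then $M' \sim \gN(0, (B/\gamma)^2)$ as required, and $A' M' = A\gamma \cdot M/\gamma = A M = W_{\mathrm{eff}}$, so the two effective weights agree at $t=0$.

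I would then proceed by induction on $t$, assuming $W_{\mathrm{eff},t} = A M_t = A' M'_t$. Under this coupling the forward passes are literally identical, so the per-example loss and its gradient with respect to $W_{\mathrm{eff}}$ are identical; by the chain rule, $\nabla_M \ell = A \nabla_{W_{\mathrm{eff}}}\ell$ and $\nabla_{M'}\ell = A' \nabla_{W_{\mathrm{eff}}}\ell = \gamma \nabla_M \ell$. For SGD, the induced update on the effective weights is
\begin{equation*}
\Delta W_{\mathrm{eff}} = A \Delta M = -C A^2 \nabla_{W_{\mathrm{eff}}}\ell, \qquad \Delta W'_{\mathrm{eff}} = A' \Delta M' = -C'(A')^2 \nabla_{W_{\mathrm{eff}}}\ell,
\end{equation*}
and $C'(A')^2 = (C/\gamma^2)(A\gamma)^2 = CA^2$, closing the induction.

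For Adam the key observation is scale-equivariance of the Adam direction: if one rescales the trainable parameter by $1/\gamma$ at every step, then gradients scale by $\gamma$, the first and second moment buffers scale by $\gamma$ and $\gamma^2$ respectively, and hence the Adam direction $m/(\sqrt{v}+\epsilon)$ scales exactly by a factor of $1$ (modulo the $\epsilon$, which is negligible and can be rescaled consistently if one wants an exact identity; this is the only mildly technical point). Thus, assuming the coupling $M'_t = M_t/\gamma$ has held so far, the Adam direction computed in the primed scheme equals the Adam direction in the unprimed scheme, giving
\begin{equation*}
\Delta W'_{\mathrm{eff}} = A'\Delta M' = -A'C' \cdot \tfrac{m}{\sqrt{v}+\epsilon} = -A\gamma \cdot (C/\gamma) \cdot \tfrac{m}{\sqrt{v}+\epsilon} = -AC\cdot \tfrac{m}{\sqrt{v}+\epsilon} = \Delta W_{\mathrm{eff}},
\end{equation*}
which preserves both $W_{\mathrm{eff},t+1} = W'_{\mathrm{eff},t+1}$ and the coupling $M'_{t+1} = M_{t+1}/\gamma$, again closing the induction.

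The main obstacle is the Adam case: unlike SGD, the update at step $t$ depends on the entire gradient history through the moment buffers, so one cannot argue step by step purely from the current gradient. The proof avoids this by propagating the stronger invariant ``$M'_t = M_t/\gamma$ and all moment buffers scale compatibly,'' which is preserved precisely because the $(A,B,C)$ rescaling was designed so that $A'C' = AC$ (whereas in SGD one needs $A'^2 C' = A^2 C$). A minor technical wrinkle is the $\epsilon$ in Adam's denominator, which strictly speaking breaks exact scale invariance; one either assumes $\epsilon = 0$, or rescales $\epsilon$ consistently, or appeals to the smoothness of the Adam map to absorb an $O(\epsilon)$ error, consistent with the SignGD approximation already used in \Cref{assump:tp_nontrivial_stable}.
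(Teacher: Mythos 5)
Your coupling argument is correct: setting $M' = M/\gamma$ at initialization and propagating the invariant that the effective weight $A M$ (and, for Adam, the compatibly rescaled moment buffers) agrees at every step is exactly the right mechanism, with $A^2C$ invariant for SGD and $AC$ invariant for Adam (and likewise for SignGD, since $\sign$ is scale-invariant), and the $\epsilon$-in-the-denominator caveat you flag is the standard one (the identity is exact for $\epsilon=0$ or a rescaled $\epsilon$). Note that the paper itself gives no proof of this lemma---it is imported verbatim as Lemma J.1 of \citet{yang2022tensor5}---and your argument is essentially the same induction given in that reference, so there is nothing further to reconcile.
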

Note the conclusion about Adam in \Cref{lem:mup_freedom} also extends to SignGD.

With \Cref{lem:mup_freedom}, we can always set the multiplier of any weight matrix $M$ to be 1, which leave us only the initialization variance $\sigma^2_M$ and learning rate $\eta_M$. Furthermore, in terms of the scale at initialization and the scale of updates, $\mup$ for SGD and SignGD are entirely the same. The only difference would be learning rate. We provide details in \Cref{tab:scale_pre_training} (recall $M_{-1}$ is the weight $M$ at initialization of pre-training, $\Delta M_0=M_0-M_{-1}$ is the overall change of weight in pre-training. We further assume $\chi_t=\Theta(n^{-d})$ for all $t$, thus $\eta_M n^{d}$ is the scale of learning rate for SignGD in pre-training).

\begin{table}[!htbp]
    \centering
    \begin{tabular}{c|c|c|c}
    coordinate-wise scale     & $M=U^i$ & $M=W^j$ & $M=V$ \\\hline
       $M_{-1}$   & $\Theta(1)$ & $\Theta(1/\sqrt n)$ & $\Theta(1/n)$ \\\hline
       $\Delta M_0$ & $\Theta(1)$ & $\Theta(1/ n)$ & $\Theta(1/ n)$ \\\hline
       $\eta_M$ for SGD & $\Theta(n)$ & $\Theta(1)$ & $\Theta(1/n)$ \\\hline
       $\eta_M\cdot n^d$ for SignGD/Adam & $\Theta(1)$ & $\Theta(1/n)$ & $\Theta(1/n)$  \\
    \end{tabular}
    \caption{Scales of initialization, update and learning rate for $\mup$ in pre-training.}
    \label{tab:scale_pre_training}
\end{table}

Since we have different learning rate for different $M$, the kernel that we care is defined as 
\[\gK(\xi, \xi')=\sum_{M\in \gM}\eta_{M}' \left\langle\nabla_W f(\xi), \phi(\nabla_W f(\xi'))\right\rangle,\]
where $\phi$ is identity if the algorithm is SGD, $\phi=\sign$ if the algorithm is SignGD, $\eta'_M=\eta_M$ for SGD, $\eta'_M=\eta_M n^d$ for SignGD. We use $\eta_M'$ to keep $\gK(\xi,\xi')=\Theta(1)$.

And we want to prove the dynamic of the network follows
\[\frac{f_t(\xi) - f_{t-1}(\xi)}{n^{-d}}\to -\tilde\chi_t\gK(\xi,\xi_t) \quad \text{ when }n\to \infty,\]
where $\tilde\chi_t=n^{-d} \chi_t$ for SGD, and $\tilde\chi_t=\sign(\chi_t)$ for SignGD. In any case, $\tilde\chi_t=\Theta(1)$.

\subsection{Prompt-based Fine-Tuning: A Linear Example}
\label{sec:app_linear_intuition}

As an intuitive example, we consider a three-layer linear network
\[f(\xi; U, W, V) = V^\top WU\xi.\]
For simplicity, we train the network with SGD, and freeze $V$ so $\eta_V=0$. Then we have $\nabla_U f=W^\top V \xi^\top$ and $\nabla_W f=V (U\xi)^\top$. We assume $|\langle \xi,\xi'\rangle|>0$ for any $\xi, \xi'$.

In what follows, we will prove that for pre-training $f$ cannot be written as the first-order Taylor expansion (i.e., it exhibits feature learning). Then we will prove that it is the opposite for fine-tuning. In fact, if we only look at one gradient step, the only higher order term equals to $\eta_W\eta_U\chi_t^2\|V\|^2\langle \xi_t, \xi\rangle f_{t-1}(\xi)=\Theta(\chi_t^2f_{t-1}(\xi))$, where $f_{t-1}(\xi)$ is mostly $\Theta(1)$, $\chi_t$ is mostly $\Theta(1)$ in pre-training\footnote{$f_t(\xi)$ is $\Theta(1)$ unless $t=-1$ or there are coincidental cancellations. $\chi_t$ is $\Theta(1)$ in pre-training until $f$ memorizes the whole pre-training dataset when $n\to\infty$.} and $o(1)$ in fine-tuning (by \Cref{def:solvable_task}).

\paragraph{Zero step (Pre-training)} We model the pre-training of $f$ as one step of training with $\chi_0=\Theta(1)$. Then we have 
$\Delta U_0 = -\eta_U \chi_0 W_{-1}^\top V\xi_0^\top$, 
and $\Delta W_0 = -\eta_W \chi_0 V(U_{-1}\xi_0)^\top$.
Since $W_{-1}^\top$ is independent from $V$, we have $W_{-1}^\top V=\Theta(1/n)$, thus $\Delta U_0 =\Theta(1)$ matching \Cref{tab:scale_pre_training}.
On the other hand, it is obvious that $\Delta W_0=\Theta(1/n)$ because $V=\Theta(1/n)$ and $U=\Theta(1)$, also matching \Cref{tab:scale_pre_training}.

Then the function is now 
\begin{align*}
    f_0(\xi)=&~V^\top (W_{-1}+\Delta W_0)(U_{-1} + \Delta U_0)\xi \\
    =&~V^\top (W_{-1}-\eta_W \chi_0 V(U_{-1}\xi_0)^\top)(U_{-1}\xi -\eta_U\chi_0 W^\top_{-1}V\langle\xi_0,\xi\rangle) \\
    =&~V^\top W_{-1}U_{-1}\xi-\eta_U\chi_0 \|W^\top_{-1}V\|_2^2\langle\xi_0,\xi\rangle -\eta_W \chi_0 \|V\|^2\langle U_{-1}\xi_0,U_{-1}\xi\rangle\\
    &~+\eta_W \eta_U\chi_0^2 \|V\|^2\langle\xi_0,\xi\rangle V^\top W_{-1}U_{-1}\xi.
\end{align*}
It is not difficult to see that $\eta_U\chi_0 \|W^\top_{-1}V\|_2^2\langle\xi_0,\xi\rangle$, $\eta_W \chi_0 \|V\|^2\langle U_{-1}\xi_0,U_{-1}\xi\rangle$, and $\eta_W \eta_U\chi_0^2 \|V\|^2\langle\xi_0,\xi\rangle$ are all $\Theta(1)$. Unfortunately, here $V^\top W_{-1}U_{-1}\xi=f_{-1}(\xi)=o(1)$ in the infinite-width limit, but if we train one more step, it is easy to see that all four terms of $f_0$ is $\Theta(1)$. Therefore, pre-training with $\mup$ exhibits feature learning.

\paragraph{First step} At the first step of fine-tuning, we have $\Delta U_1 = -\eta_U \chi_1 W_0^\top V\xi_1^\top$ and $\Delta W_1 = -\eta_W \chi_1 V(U_0\xi_1)^\top$. The function can be written as
\[f_1(\xi)=V^\top (W_0+\Delta W_1)(U_0 + \Delta U_1)\xi,\]
and 
\begin{align}
    f_1(\xi)-f_0(\xi)=&~V^\top \Delta W_1 U_0 \xi+V^\top W_0\Delta U_1\xi+V^\top \Delta W_1 \Delta U_1\xi.\label{eq:expand_prompt_finetuning}
\end{align}

Note that the sum of the first and second terms is exactly $-\chi_1 \gK(\xi, \xi_1)$. 

Plug in $\Delta W_1=-\eta_W \chi_1 V(U_0\xi_1)^\top$ into the first term of \cref{eq:expand_prompt_finetuning}, 
\[V^\top \Delta W_1 U_0 \xi=-\eta_W \chi_1 V^\top  V(U_0\xi_1)^\top U_0 \xi=\Theta(\chi_1),\]
because
\begin{align*}
    (U_0\xi_1)^\top U_0 \xi =&~ (U_{-1}\xi_1+\Delta U_0\xi_1)^\top (U_{-1} \xi+\Delta U_0\xi)\\
    =&~ \langle U_{-1}\xi_1, U_{-1}\xi\rangle -\eta_U\chi_0 \langle \xi_1,\xi_0\rangle f_{-1}(\xi) -\eta_U\chi_0 \langle \xi,\xi_0\rangle f_{-1}(\xi_1) +\|\Delta U_0\|^2\langle\xi_1,\xi\rangle\\
    =&~\Theta(n).
\end{align*}
Plug in $\Delta U_1 = -\eta_U \chi_1 W_0^\top V\xi_1^\top$ into the second term of \cref{eq:expand_prompt_finetuning}, we have
\[V^\top W_0\Delta U_1\xi=-\eta_U \chi_1 V^\top W_0 W_0^\top V\xi_1^\top\xi=\Theta(\chi_1)\]
 because 
 \begin{align*}
 &~V^\top W_0 W_0^\top V=\| (W_{-1}+\Delta W_0)^\top V, (W_{-1}+\Delta W_0)^\top V\|^2_2 \\
 =&~ \|W_{-1}^\top V\|_2^2+\eta_W^2 \chi_0^2\|V\|^4_2\|U_{-1}\xi_0\|^2_2-2\eta_W\chi_0\|V\|^2_2f_{-1}(\xi_0)=\Theta(1/n).
\end{align*}
The third term of \cref{eq:expand_prompt_finetuning} equals
\[\eta_U\eta_W\chi_1^2V^\top V(U_0\xi_1)^\top W_0^\top V\xi_1^\top\xi=\eta_U\eta_W\chi_1^2\|V\|^2\langle \xi_1,\xi\rangle f_0(\xi_1)=\Theta(\chi_1^2),\]
because $f_0(\xi_1)=\Theta(1)$ unlike $f_{-1}(\xi)$ in the ``zero step'' analysis.
Therefore, $\frac{f_1(\xi)-f_0(\xi)}{\chi_1}\to -\gK(\xi,\xi_1)$.

\paragraph{Second step} 
At the second step of fine-tuning, we have $\Delta U_2 = -\eta_U \chi_1 W_1^\top V\xi_2^\top$, and $\Delta W_2 = -\eta_W \chi_1 V(U_1\xi_2)^\top$ and 
\begin{align}
    f_2(\xi)-f_1(\xi)=&~V^\top \Delta W_2 U_1 \xi+V^\top W_1\Delta U_2\xi+V^\top \Delta W_2 \Delta U_2\xi.%
\end{align}
Assuming $\chi_2$ and $\chi_1$ share the same order, then when $n\to\infty$,
\begin{align*}
    \frac{f_2(\xi)-f_1(\xi)}{\chi_2}\to&~V^\top \Delta W_2 U_1 \xi/\chi_2+V^\top W_1\Delta U_2\xi/\chi_2\\
    =&~-\eta_W V^\top  V(U_1\xi_2)^\top U_1\xi-\eta_U V^\top W_1 W_1^\top V\xi_2^\top\xi \\
    \to&~-\eta_W V^\top  V(U_0\xi_2)^\top U_0\xi-\eta_U V^\top W_0 W_0^\top V\xi_2^\top\xi\\
    =&~ -\gK(\xi, \xi_2).
\end{align*}

\paragraph{$t$th step} Same as the second step by noting $\Delta U_t$, $\Delta W_t$ always have smaller order than $\Delta U_0$ and $\Delta W_0$.

\subsection{LoRA FT Exhibits Kernel Behavior}\label{sec:app_lora_kernel_behavior}
Note \Cref{thm:theory_prompt_finetuning} works for any architecture, including LoRA. In order to apply the theorem to LoRA FT, we need to set the initialization and learning rate of the matrices $A$ and $B$ in LoRA correctly so that they satisfy \Cref{assump:tp_nontrivial_stable}.

Here we provide a relatively straightforward way to accomplish this (assuming only intermediate layers use LoRA): 
\begin{itemize}
    \item Let $k = \alpha n$ where $\alpha$ is a small constant irrelevant to $n$.
    \item Let the initialization scale of $A$ be $\Theta(1/\sqrt n)$. 
    \item Let the learning rate of $A$ and $B$ be $\Theta(1)$ for SGD,  $\Theta(n^{-1-d})$ for SignGD / Adam. 
\end{itemize}
In short words, the initialization and learning rate follows $\mup$ as in \Cref{tab:scale_pre_training} by treating $A$ and $B$ as one of $W^{j}$. This setup easily generalizes to the case where $U$ and $V$ also use LoRA.

\section{Subspace-Based Fine-Tuning Methods}\label{sec:app_theory_lora}
Experimental results related to LoRA FT are presented in \Cref{tab:lora}. 
These results show that SGD-FT and SGD-LoRA FT perform similarly in the few-shot setting for many tasks, although the original experiments in~\citet{hu2021lora} focused on Adam.
The closeness of $\ksgd$ and $\ksgdlora$ to their respective fine-tuning methods suggests that FT and LoRA FT can be described by kernel dynamics.
Moreover, we show that $\ksgd$ and $\ksgdlora$ achieve similar performance to each other, providing empirical evidence for the claim in \Cref{thm:lora} that LoRA preserves the kernel.
\begin{table}[!h]
\centering

\resizebox{0.9\textwidth}{!}{
\begin{tabular}{rlcccccc}
\toprule
$k$-shot & Method & \textbf{SST-2} & \textbf{MR} & \textbf{CR} & \textbf{QNLI} & \textbf{RTE} & \textbf{QQP} \\
\midrule
\multirow[t]{4}{*}{16} & SGD-FT & 89.0$_{(1.5)}$ & 83.2$_{(2.4)}$ & 93.3$_{(0.2)}$ & 62.1$_{(3.1)}$ & 60.0$_{(5.5)}$ & 62.1$_{(2.3)}$ \\
 & SGD-LoRA FT & 89.1$_{(0.6)}$ & 82.7$_{(2.0)}$ & 92.6$_{(0.8)}$ & 57.1$_{(3.3)}$ & 58.2$_{(2.9)}$ & 59.8$_{(3.0)}$ \\
 & {\cellcolor{gblue!10}$\ksgd$} & {\cellcolor{gblue!10}88.3$_{(0.3)}$} & {\cellcolor{gblue!10}84.7$_{(1.5)}$} & {\cellcolor{gblue!10}93.2$_{(0.9)}$} & {\cellcolor{gblue!10}60.1$_{(3.3)}$} & {\cellcolor{gblue!10}60.0$_{(4.7)}$} & {\cellcolor{gblue!10}58.2$_{(0.9)}$} \\
 & {\cellcolor{gblue!10}$\ksgdlora$} & {\cellcolor{gblue!10}88.1$_{(0.4)}$} & {\cellcolor{gblue!10}84.9$_{(1.4)}$} & {\cellcolor{gblue!10}93.1$_{(1.0)}$} & {\cellcolor{gblue!10}59.4$_{(3.7)}$} & {\cellcolor{gblue!10}56.2$_{(5.8)}$} & {\cellcolor{gblue!10}58.2$_{(3.2)}$} \\
\multirow[t]{4}{*}{64} & SGD-FT & 89.7$_{(0.4)}$ & 85.6$_{(1.1)}$ & 94.3$_{(0.5)}$ & 72.8$_{(2.2)}$ & 68.9$_{(2.5)}$ & 69.2$_{(1.3)}$ \\
 & SGD-LoRA FT & 90.0$_{(0.2)}$ & 85.7$_{(1.2)}$ & 93.9$_{(0.7)}$ & 73.8$_{(2.7)}$ & 69.1$_{(1.8)}$ & 68.3$_{(2.4)}$ \\
 & {\cellcolor{gblue!10}$\ksgd$} & {\cellcolor{gblue!10}89.2$_{(1.0)}$} & {\cellcolor{gblue!10}86.4$_{(0.6)}$} & {\cellcolor{gblue!10}93.7$_{(0.4)}$} & {\cellcolor{gblue!10}67.3$_{(1.6)}$} & {\cellcolor{gblue!10}66.5$_{(2.5)}$} & {\cellcolor{gblue!10}66.4$_{(1.7)}$} \\
 & {\cellcolor{gblue!10}$\ksgdlora$} & {\cellcolor{gblue!10}89.2$_{(0.7)}$} & {\cellcolor{gblue!10}85.7$_{(1.5)}$} & {\cellcolor{gblue!10}93.6$_{(0.4)}$} & {\cellcolor{gblue!10}66.0$_{(1.6)}$} & {\cellcolor{gblue!10}63.5$_{(3.5)}$} & {\cellcolor{gblue!10}63.9$_{(4.5)}$} \\
\bottomrule
\end{tabular}
}

\caption{Performance of prompt-based SGD FT and prompt-based SGD-LoRA FT, along with their kernel analogs $\ksgd$ and $\ksgdlora$, on a subset of tasks.
SGD FT and SGD-LoRA FT achieve comparable performance, and $\ksgd$ and $\ksgdlora$ also achieve comparable performance to each other.  We report F1 for QQP and accuracy otherwise, and average the metrics over 5 seeds.
These experiments support \Cref{thm:lora}. }

\label{tab:lora}
\end{table}

\subsection{IntrinsicDimension FT}
We discuss IntrinsicDimension FT~\citep{li2018measuring,aghajanyan2021intrinsic} here. 
When analyzed through the kernel, IntrinsicDimension FT and LoRA FT induce similar transformations in the optimization dynamics, but the former was originally proposed as a way to measure the difficulty of downstream tasks, and the latter was proposed as an alternative fine-tuning method.
\begin{definition}[$\gA$-IntrinsicDimension FT \citep{li2018measuring,aghajanyan2021intrinsic}]\label{def:intrinsic_dim}
    Let $\theta\in\sR^M$ be the model parameters and fix a random projection $\Pi\in\sR^{M\times k}$. 
    Set $\theta$ to $\theta + \Pi \hat{\theta}$, where $\hat\theta\in\sR^k$.
    To fine-tune, fix $\theta$ at its pre-trained value and only train $\hat{\theta}$.
\end{definition}
We show a similar result for IntrinsicDimension FT as for LoRA FT: using a sufficiently large $k\geq \Theta(\log N/\epsilon^2)$ ensures that each element of the kernel is relatively unchanged.
\begin{theorem}[IntrinsicDimension FT preserves $\ksgd$]
    Let $\Pi$ be a random matrix with each entry draw i.i.d from $\gN(0, 1/k)$. Let $\ksgd_{\text{ID}}\in \R^{N\times N}$ be the kernel analog to SGD-IntrinsicDimension FT (\Cref{def:intrinsic_dim}) on a downstream task $\Xi$. Additionally, assume $\ksgd(i,j)\leq c$ for any $i, j\in [N]$.
	Then, $$\Pr\left[\exists i, j \in [N], |\ksgd_{\text{ID}}(i,j) - \ksgd(i,j)| \geq c\epsilon\right] \leq 4N^2\exp(-(\epsilon^2 - \epsilon^3) k/4).$$
	\label{thm:intrinsicdimension_ft}
\end{theorem}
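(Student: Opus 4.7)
\textbf{Proof proposal for \Cref{thm:intrinsicdimension_ft}.}

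The plan is to first identify the form of $\ksgd_{\text{ID}}$ and then reduce the preservation claim to a standard Johnson–Lindenstrauss concentration bound via the polarization identity. By the chain rule, the gradient of the reparametrized model with respect to the trainable vector $\hat\theta$ is $\nabla_{\hat\theta} f(\xi;\theta_0) = \Pi^\top \nabla_\theta f(\xi;\theta_0)$, since $\nabla_{\hat\theta}\theta = \Pi$ and $\hat\theta$ is initialized to $0$. Writing $g_i = \nabla_\theta f(\xi_i;\theta_0)$, the kernel analog of SGD–IntrinsicDimension FT is therefore
\[
\ksgd_{\text{ID}}(i,j) = \langle \Pi^\top g_i,\, \Pi^\top g_j\rangle = g_i^\top \Pi\Pi^\top g_j,
\]
whereas $\ksgd(i,j) = g_i^\top g_j$. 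Since the entries of $\Pi$ are i.i.d.\ $\gN(0,1/k)$, one checks $\E[\Pi\Pi^\top] = I_M$, so $\ksgd_{\text{ID}}$ is an unbiased estimator of $\ksgd$; the work is all in the deviation.

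Next I would apply the polarization identity
\[
\langle u,v\rangle = \tfrac14\bigl(\|u+v\|^2 - \|u-v\|^2\bigr)
\]
to both $(g_i,g_j)$ and $(\Pi^\top g_i,\Pi^\top g_j)$. Subtracting and using the triangle inequality gives
\[
\bigl|\ksgd_{\text{ID}}(i,j) - \ksgd(i,j)\bigr|
\le \tfrac14\bigl|\|\Pi^\top(g_i+g_j)\|^2 - \|g_i+g_j\|^2\bigr|
+ \tfrac14\bigl|\|\Pi^\top(g_i-g_j)\|^2 - \|g_i-g_j\|^2\bigr|.
\]
The standard Gaussian JL norm-preservation bound states that for any fixed $w\in\R^M$,
\[
\Pr\bigl[\bigl|\|\Pi^\top w\|^2 - \|w\|^2\bigr| \ge \tfrac{\epsilon}{2}\,\|w\|^2\bigr] \le 2\exp\!\bigl(-(\epsilon^2-\epsilon^3)k/4\bigr),
\]
so applied to $w = g_i \pm g_j$ and combined by a two-event union bound, with probability at least $1 - 4\exp(-(\epsilon^2-\epsilon^3)k/4)$ the above display is at most $\tfrac{\epsilon}{8}(\|g_i+g_j\|^2 + \|g_i-g_j\|^2) = \tfrac{\epsilon}{4}(\|g_i\|^2+\|g_j\|^2)$. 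The assumption $\ksgd(i,i)\le c$ means $\|g_i\|^2\le c$, so for this fixed pair $(i,j)$ the deviation is bounded by $c\epsilon/2 \le c\epsilon$ with the stated probability. (I may need to tune the constant in the JL statement I invoke to land exactly on the coefficient in the theorem; the scaling is standard.)

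Finally, I would union-bound over all $N^2$ ordered pairs $(i,j)$, yielding the claimed failure probability of at most $4N^2 \exp(-(\epsilon^2-\epsilon^3)k/4)$. The only mild obstacle is book-keeping the constant inside the JL exponent so that the polarization-induced factor of $4$ lines up exactly with the $4N^2$ in the statement; there is no real technical difficulty beyond matching conventions.
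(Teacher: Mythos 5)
Your proposal is correct and follows essentially the same route as the paper: the paper likewise writes $\nabla_{\hat\theta} f = \Pi^\top \nabla_\theta f$ and then invokes its inner-product Johnson--Lindenstrauss corollary (\Cref{lem:inp_preserve}) with a union bound over all $i,j$ pairs, while you merely inline the standard polarization proof of that corollary. For the bookkeeping you flag: apply the norm-preservation JL bound at deviation level $\epsilon\|w\|^2$ (rather than $\epsilon/2$) to $w=g_i\pm g_j$; polarization then yields $|\ksgd_{\text{ID}}(i,j)-\ksgd(i,j)|\le \tfrac{\epsilon}{2}\bigl(\|g_i\|^2+\|g_j\|^2\bigr)\le c\epsilon$, and the per-pair failure probability $4\exp(-(\epsilon^2-\epsilon^3)k/4)$, union-bounded over the $N^2$ pairs, matches the theorem's constants exactly.
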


\subsection{Proofs}
A key step of the proof is to show that if $\gA$ FT exhibits kernel behavior, then so does $\gA$-LoRA FT. 
We show this step in \Cref{sec:app_lora_kernel_behavior}, since it invokes the Tensor Programs framework again.
Now that we know FT follows kernel dynamics, we can move to showing how LoRA and IntrinsicDimension FT modify the kernel.

We restate the Johnson-Lindenstrauss lemma, which preserves inner products under random projection.
\begin{lemma}[Corollary of Johnson-Lindenstrauss, \cite{johnson1984extensions}]
	\label{lem:inp_preserve}
	Let $u,v\in\sR^d$ such that $\|u\|^2\leq c$ and $\|v\|^2\leq c$. Let $h(x)=\frac{1}{\sqrt{k}} Ax$, where $A\in\sR^{k\times d}$ with each entry sampled i.i.d. from $\gN(0,1)$ or $\gU(-1,1)$. Then,
	\begin{equation*}
		\Pr[|u\cdot v - h(u)\cdot h(v)| \geq c\epsilon] \leq 4\exp(-(\epsilon^2 - \epsilon^3)k/4)
	\end{equation*}
\end{lemma}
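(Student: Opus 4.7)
The plan is to reduce the inner product statement to a norm concentration statement via the polarization identity, and then invoke the classical Johnson--Lindenstrauss norm preservation bound. Concretely, I will use
\[
u \cdot v - h(u) \cdot h(v) \;=\; \tfrac{1}{4}\bigl[\,(\|u+v\|^2 - \|h(u+v)\|^2) - (\|u-v\|^2 - \|h(u-v)\|^2)\,\bigr],
\]
which follows from $u\cdot v = \tfrac14(\|u+v\|^2 - \|u-v\|^2)$ combined with the linearity of $h$ (so $h(u\pm v) = h(u)\pm h(v)$). Combined with the parallelogram identity $\|u+v\|^2 + \|u-v\|^2 = 2\|u\|^2 + 2\|v\|^2 \leq 4c$, it suffices to control the two norm-preservation errors $|\|h(u\pm v)\|^2 - \|u\pm v\|^2|$ multiplicatively.

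The key intermediate step is the one-vector norm concentration bound: for any fixed $w\in \sR^d$,
\[
\Pr\!\left[\bigl|\|h(w)\|^2 - \|w\|^2\bigr| > \epsilon \|w\|^2\right] \;\leq\; 2\exp\!\bigl(-(\epsilon^2 - \epsilon^3)k/4\bigr).
\]
For Gaussian $A$, after normalizing $w$ to a unit vector, $k\|h(w)\|^2/\|w\|^2$ is a $\chi^2_k$ random variable by rotational invariance, and the bound follows from the standard Chernoff/MGF argument for chi-squared tails (Laurent--Massart style). For uniform $[-1,1]$ entries, the same bound with the same constants holds by a direct MGF computation using the sub-Gaussian moments of symmetric uniform random variables; this is the Achlioptas-style variant and I would cite rather than rederive it.

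Applying this norm bound to $w = u+v$ and $w = u-v$ with parameter $\epsilon$, and taking a union bound over these two events, we obtain with probability at least $1 - 4\exp(-(\epsilon^2-\epsilon^3)k/4)$ that both errors satisfy $|\|h(u\pm v)\|^2 - \|u\pm v\|^2| \leq \epsilon\|u\pm v\|^2$. Plugging back into the polarization identity and using the parallelogram bound yields
\[
|u\cdot v - h(u)\cdot h(v)| \;\leq\; \tfrac{\epsilon}{4}\bigl(\|u+v\|^2 + \|u-v\|^2\bigr) \;\leq\; \tfrac{\epsilon}{4}\cdot 4c \;=\; c\epsilon,
\]
which is exactly the claim.

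The only genuinely non-routine part is establishing the norm-preservation bound with the specific constants $(\epsilon^2 - \epsilon^3)k/4$ uniformly for both the Gaussian and uniform entry distributions; everything else (polarization, parallelogram, union bound) is algebraic bookkeeping. Since this is a classical result and the lemma statement explicitly attributes it to \cite{johnson1984extensions}, I would invoke the one-vector bound as a black box and spend the proof presenting the polarization-plus-union-bound reduction.
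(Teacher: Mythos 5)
Your proposal is correct, and it coincides with the standard derivation the paper relies on: the paper states this lemma without proof, attributing it directly to the Johnson--Lindenstrauss citation, and the intended justification is exactly your polarization-identity reduction to the two-point norm-preservation bound followed by a union bound (which is where the factor of $4$ in the probability comes from). Nothing further is needed.
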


\begin{proof}[Proof for \Cref{thm:intrinsicdimension_ft}]
    Note $\nabla_{\hat\theta} f = \Pi^\top \nabla_\theta f$, and \[\ksgd_{\text{ID}}(i,j)- \ksgd(i,j) = \langle\nabla_{\hat\theta} f(\xi_i; \theta), \nabla_{\hat\theta} f(\xi_j; \theta)\rangle - \langle\nabla_\theta f(\xi_i; \theta), \nabla_\theta f(\xi_j; \theta)\rangle.\] The rest follows \Cref{lem:inp_preserve} by setting $u=\nabla_\theta f(\xi_j; \theta)$, $v = \nabla_\theta f(\xi_i; \theta)$, and union bounding all $i, j$ pairs.
\end{proof}

We can now look at LoRA~\citep{hu2021lora} for a simple fully connected layer. 
The construction modifies each layer independently and only acts on fully connected layers, so this is the only part of the kernel that can change when parametrizing updates as in LoRA. For ease of notation, for any parameter or hidden vector $w$, we use $dw$ to denote $\nabla_w f(\xi; \theta)$,  $dw(i)$ to denote $\nabla_w f(\xi_i; \theta)$, and $w_i$ denotes the resulting $w$ when input is $\xi_i$.
\begin{lemma}[LoRA SGD Kernel]
	Let $h=Wx + BAx$ as defined in the paper, where $x\in\sR^n$, $W\in\sR^{m\times n}$, $B\in\sR^{m\times k}$, and $A\in\sR^{k\times n}$ with $k\ll n$. $B$ is initialized to 0 and $A$ is initialized with i.i.d. zero-mean Gaussian samples. SGD Training with LoRA (i.e., fixing $W$ and allowing $A$ and $B$ to be updated) yields the kernel $\ksgdlora$, whereas full FT with SGD yields the kernel $\gK$:
	$$
		\ksgdlora = dHdH^\top \odot (XA^\top AX^\top) \qquad  \ksgd = dH dH^\top \odot (XX^\top)
	$$
	where $dH\in\sR^{N\times m}$ has $dh(i)$ in the $i$th row and $X\in\sR^{N\times d}$ has $x_i$ in the $i$th row.
	\label{lem:sgd_lora_grads}
\end{lemma}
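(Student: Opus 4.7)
The plan is to directly compute the kernel entries for each scenario by working out the gradients of $f$ with respect to the trainable parameters at initialization, and then taking their inner products. In both cases the relevant layer contribution factorizes as an outer product of the upstream gradient $dh$ with a vector depending on the input $x$, so the kernel entry pairing inputs $\xi_i$ and $\xi_j$ naturally splits into a product $(dh(i)^\top dh(j))\cdot(\text{input-side inner product})$, which is precisely the Hadamard-product structure asserted in the lemma.

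For full fine-tuning, only $W$ is trainable at this layer. Writing $h = Wx$, we have $\nabla_W f(\xi;\theta) = dh\, x^\top \in \R^{m\times n}$. Hence
\[
\langle \nabla_W f(\xi_i;\theta), \nabla_W f(\xi_j;\theta)\rangle = \mathrm{tr}\bigl(x_i dh(i)^\top dh(j) x_j^\top\bigr) = (dh(i)^\top dh(j))(x_i^\top x_j),
\]
and assembling the entries over $i,j\in[N]$ gives $dH dH^\top \odot (XX^\top) = \ksgd$.

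For LoRA, the trainable parameters at this layer are $A$ and $B$, with $h = Wx + BAx$. A direct calculation gives $\nabla_A f = B^\top dh\, x^\top$ and $\nabla_B f = dh\,(Ax)^\top$. The key observation is that the initialization $B=0$ zeroes out $\nabla_A f$ entirely, so the only contribution to $\ksgdlora$ from this layer comes from $B$:
\[
\langle \nabla_B f(\xi_i;\theta), \nabla_B f(\xi_j;\theta)\rangle = (dh(i)^\top dh(j))(x_i^\top A^\top A x_j),
\]
which assembled over all pairs gives $dHdH^\top \odot (XA^\top A X^\top) = \ksgdlora$.

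There isn't really a hard step here; the proof is essentially bookkeeping around outer products, with the substantive input being the choice of initialization $B=0$ that kills the $A$-gradient. The only point that requires a little care is making sure we are evaluating the gradients at the pre-trained/initial values of $A,B$ (as required by the fixed-features definition of the kernel analog), rather than at some later step, which is exactly where the $B=0$ collapse is used.
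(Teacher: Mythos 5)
Your proposal is correct and follows essentially the same route as the paper's proof: compute the layer gradients as outer products ($\nabla_W f = dh\,x^\top$, $\nabla_B f = dh\,(Ax)^\top$), observe that $B=0$ at initialization forces $\nabla_A f = 0$, and take inner products to obtain the Hadamard-product form of both kernels. Your closing remark about evaluating gradients at the initial parameters matches the paper's observation that, in the kernel regime, $B=0$ (and hence $dA=0$) persists throughout training.
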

\begin{proof}
We start by noting the well-known fact that $dW = dh\otimes x$, where $dh$ is the gradient to $h$ and $\otimes$ is the cross product. Thus, $K = dHdH^\top \odot (XX^\top)$. In the LoRA setting, $dA = 0$ and $dB = dh\otimes Ax$. Because we are in the kernel setting, $B=0$ and thus, $dA=0$, throughout training. So, 
$$
	\klora(i,j) = \langle dB(i), dB(j)\rangle = \langle dh(i), dh(j)\rangle \langle Ax_i, Ax_j \rangle.$$
 Analogous reasoning yields
$$ \ksgd(i,j) = \langle dh(i), dh(j)\rangle \langle x_i, x_j\rangle.$$
\end{proof}

\begin{theorem}[$\ksgdlora$ is likely not far from $\ksgd$]\label{thm:theory_lora}
	\label{thm:app_lora_kernel}
	Let $\ksgdlora\in\sR^{N\times N}$ and $\ksgd\in\sR^{N\times N}$ be defined as in \Cref{lem:sgd_lora_grads}. 
 Additionally, assume that $\|dh\|^2\leq c, \|x\|^2\leq c$ for any  $\xi$ in the downstream dataset.
	Then,
	$$\Pr\left[\exists i, j \in [N], |\ksgdlora(i,j) - \ksgd(i,j)| \geq c^2\epsilon\right] \leq 4N^2\exp(-(\epsilon^2 - \epsilon^3) k/4).$$
\end{theorem}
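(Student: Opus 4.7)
The plan is to reduce the statement to a direct application of the Johnson--Lindenstrauss lemma (\Cref{lem:inp_preserve}) via the explicit factorization of the kernels provided by \Cref{lem:sgd_lora_grads}. The key observation is that both kernels share the $\langle dh(i),dh(j)\rangle$ factor, so the random LoRA projection only acts on the inputs $x_i,x_j$, not on the gradient features $dh$.

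First, I would unpack both kernel formulas:
\begin{align*}
\ksgdlora(i,j) &= \langle dh(i), dh(j)\rangle \langle Ax_i, Ax_j\rangle, \\
\ksgd(i,j) &= \langle dh(i), dh(j)\rangle \langle x_i, x_j\rangle,
\end{align*}
which gives the clean factorization
\[
\ksgdlora(i,j) - \ksgd(i,j) = \langle dh(i), dh(j)\rangle\bigl(\langle Ax_i, Ax_j\rangle - \langle x_i, x_j\rangle\bigr).
\]
By Cauchy--Schwarz and the assumption $\|dh\|^2\le c$, the prefactor satisfies $|\langle dh(i),dh(j)\rangle|\le c$. Thus for any fixed pair $(i,j)$,
\[
|\ksgdlora(i,j) - \ksgd(i,j)| \le c\cdot |\langle Ax_i, Ax_j\rangle - \langle x_i, x_j\rangle|.
\]

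Next, I would apply \Cref{lem:inp_preserve} to the pair $u=x_i,v=x_j$, identifying the matrix $A$ with the Gaussian projection (with the convention that its entries have variance $1/k$, so that $A$ plays the role of $\tfrac{1}{\sqrt{k}}$ times a standard Gaussian matrix). Since $\|x_i\|^2,\|x_j\|^2\le c$, the lemma yields
\[
\Pr\bigl[\,|\langle Ax_i,Ax_j\rangle - \langle x_i,x_j\rangle| \ge c\epsilon\bigr] \le 4\exp\bigl(-(\epsilon^2-\epsilon^3)k/4\bigr).
\]
Combining with the Cauchy--Schwarz bound above shows that for any fixed pair $(i,j)$, the event $|\ksgdlora(i,j)-\ksgd(i,j)|\ge c^2\epsilon$ occurs with probability at most $4\exp(-(\epsilon^2-\epsilon^3)k/4)$.

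Finally, I would take a union bound over all $N^2$ pairs $(i,j)\in[N]\times[N]$ (noting that the randomness in $A$ is shared across pairs, but the union bound still gives a valid tail bound) to conclude
\[
\Pr\bigl[\exists\,i,j\in[N]\ :\ |\ksgdlora(i,j)-\ksgd(i,j)|\ge c^2\epsilon\bigr] \le 4N^2\exp\bigl(-(\epsilon^2-\epsilon^3)k/4\bigr),
\]
as required. There is no real obstacle here---the only subtlety is keeping track of the normalization convention on $A$ so that the JL lemma applies cleanly without stray $1/k$ factors; everything else is a two-line reduction once the factorization from \Cref{lem:sgd_lora_grads} is in hand.
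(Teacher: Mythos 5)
Your proposal is correct and follows essentially the same route as the paper's own proof: factor the entrywise difference via \Cref{lem:sgd_lora_grads}, bound $|\langle dh(i),dh(j)\rangle|\le c$ by Cauchy--Schwarz, apply \Cref{lem:inp_preserve} to $x_i,x_j$, and union bound over the $N^2$ pairs. Your remark about fixing the variance convention on $A$ so that it matches the $\tfrac{1}{\sqrt{k}}$-scaled Gaussian in the JL lemma is a detail the paper leaves implicit, but it does not change the argument.
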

\begin{proof}
By \Cref{lem:sgd_lora_grads},  
\begin{align*}
    |\ksgdlora(i,j) - \ksgd(i,j)| &= |\langle dh(i), dh(j)\rangle (\langle Ax_i, Ax_j\rangle - \langle x_i,x_j\rangle)| \\&\leq c|\langle Ax_i, Ax_j\rangle - \langle x_i,x_j\rangle|.
    \end{align*}
    The rest of the proof follows from \Cref{lem:inp_preserve} and union bound.
\end{proof}
\begin{remark}
\Cref{thm:theory_lora} shows when $k\geq 20c^4\log N /\epsilon^2$, with high probability, the difference between the two kernels is smaller than $\epsilon$. 
Although \Cref{thm:theory_lora} focuses on a simple fully connected layer, the conclusion easily extends to the case where LoRA is applied $L$ times in the model because LoRA components are independent of each other:
$$\Pr\left[\exists i, j \in [N], |\ksgdlora(i,j) - \ksgd(i,j)| \geq Lc^2\epsilon\right] \leq 4 N^2\exp(-L(\epsilon^2 - \epsilon^3) k/4).$$
The requirement of $k$ becomes $k\geq \Theta(Lc^4\log N /\epsilon^2)$.
\end{remark}

\end{document}